\documentclass{article}

\PassOptionsToPackage{numbers, sort, compress}{natbib}

\usepackage[final]{neurips_2025}

\usepackage[utf8]{inputenc} %
\usepackage[T1]{fontenc}    %
\usepackage[dvipsnames, table]{xcolor}         %
\definecolor{darkerblue}{HTML}{004EC0}
\definecolor{darkerorange}{HTML}{D5702A}
\usepackage[colorlinks=true, linkcolor=orange, urlcolor=orange, citecolor=darkerblue, anchorcolor=darkerblue, backref=page]{hyperref}
\renewcommand*{\backref}[1]{}
\renewcommand*{\backrefalt}[4]{%
    \ifcase #1%
          \or [Cited on page~#2.]%
          \else [Cited on pages~#2.]%
    \fi%
    }
\usepackage{url}            %
\usepackage{booktabs}       %
\usepackage{amsfonts}       %
\usepackage{nicefrac}       %
\usepackage{microtype}      %

\usepackage{graphicx}
\usepackage[small]{caption}
\usepackage{wrapfig}

\usepackage{mathtools}          %
\usepackage{amssymb}            %
\usepackage{amsthm}             %

\usepackage{researchpack} 
\usepackage{thm-restate}

\usepackage{algorithm}
\usepackage{algpseudocode}

\usepackage{multirow, multicol}
\usepackage{makecell}

\usepackage[shortlabels]{enumitem}

\usepackage[capitalise, nameinlink]{cleveref}
\crefname{equation}{Equation}{Equations} 

\bibliographystyle{plainnat} 

\usepackage[toc,page,header]{appendix}
\usepackage{minitoc}

\usepackage{thmtools}
\declaretheorem[name=Theorem]{theorem}
\numberwithin{theorem}{section}

\declaretheorem[name=Lemma, numberlike=theorem]{lemma}
\declaretheorem[name=Proposition, numberlike=theorem]{proposition}
\declaretheorem[name=Corollary, numberlike=theorem]{corollary}
\declaretheorem[name=Remark, numberlike=theorem]{remark}
\declaretheorem[name=Definition, numberlike=theorem]{definition}

\declaretheorem[name=Assumption, numberlike=theorem]{assumption}
\declaretheorem[name=Lemma, numbered=no]{lemma*}
\declaretheorem[name=Proposition, numbered=no]{proposition*}
\declaretheorem[name=Theorem, numbered=no]{theorem*}
\declaretheorem[name=Corollary, numbered=no]{corollary*}

\newcommand{\x}{\mathbf{x}}
\newcommand{\y}{\mathbf{y}}

\newcommand{\f}{\mathbf{f}}
\newcommand{\g}{\mathbf{g}}
\newcommand{\h}{\mathbf{h}}
\newcommand{\A}{\mathbf{A}}
\newcommand{\B}{\mathbf{B}}
\newcommand{\Dmat}{\mathbf{D}}
\newcommand{\Emat}{\mathbf{E}}
\newcommand{\Lmat}{\mathbf{L}}
\newcommand{\Nmat}{\mathbf{N}}

\newcommand{\Rmat}{\mathbf{R}}
\newcommand{\Smat}{\mathbf{\Sigma}}
\newcommand{\Tmat}{\mathbf{T}}
\newcommand{\Umat}{\mathbf{U}}
\newcommand{\Vmat}{\mathbf{V}}
\newcommand{\Wmat}{\mathbf{W}}
\newcommand{\Zmat}{\mathbf{Z}}
\newcommand{\Qmat}{\mathbf{Q}}

\newcommand{\epsilonb}{\boldsymbol{\epsilon}}

\newcommand{\Ex}{\mathbb{E}}

\newcommand{\R}{\mathbb{R}}
\DeclareMathOperator{\Cov}{Cov}

\newcommand{\KL}{D_{\mathrm{KL}}}
\newcommand{\NLL}{\mathrm{NLL}}

\title{When Does Closeness in Distribution \\
Imply Representational Similarity? \\
An Identifiability Perspective}

\author{%
Beatrix M. G. Nielsen\thanks{Correspondence to: bmgi@dtu.dk or beat@itu.dk.}$^{\,\ ,1}$
\And
Emanuele Marconato$^{2}$
\AND
Andrea Dittadi\thanks{Joint last authors.}$^{\,\ ,3,4,5}$
\And
Luigi Gresele\footnotemark[2]$^{\,\ ,6}$
\AND
  \normalfont\small $^1$Technical University of Denmark\ \ \ 
    $^2$University of Trento, Italy\ \ \ 
    $^3$Helmholtz AI, Munich\\\small
    $^4$Technical University of Munich\ \ \ 
    $^5$Munich Center for Machine Learning (MCML)\ \ \ 
    $^6$University of Copenhagen
}

\begin{document}

\doparttoc %
\faketableofcontents %

\newlength{\oldintextsep}
\setlength{\oldintextsep}{\intextsep}

\maketitle

\begin{abstract}
    When and why representations learned by different deep neural networks are similar is an active research topic. 
    We choose to address these questions from the perspective of identifiability theory, which suggests that a measure of representational similarity should be invariant to transformations that leave the model distribution unchanged.
    Focusing on a model family which includes several popular pre-training approaches, e.g., autoregressive language models, we explore when models which generate distributions that are close have similar representations. We prove that a small Kullback--Leibler divergence between the model distributions does not guarantee that the corresponding representations are similar. This has the important corollary that models with near-maximum data likelihood can still learn dissimilar representations---a phenomenon mirrored in our experiments with models trained on CIFAR-10.
    We then define a distributional distance for which closeness implies representational similarity, and in synthetic experiments, we find that wider networks learn distributions which are closer with respect to our distance and have more similar representations. Our results thus clarify the link between closeness in distribution and representational similarity.
\end{abstract}

\section{Introduction}
\label{sec:introduction}

\looseness-1 How to compare and relate the internal representations learned by different deep neural networks is a long-standing and active research topic~\citep{laakso2000content, li2015convergent, raghu2017svcca, morcos2018insights, wang2018towards, bansal2021revisiting, klabunde2025similarity}.
Understanding when and why various kinds of {\em representational similarity} emerge has implications for model stitching \citep{lenc2015understanding, moschella2023relative, cannistraci2024bricks, maiorca2024latent, fumero2024latent},
knowledge-distillation~\citep{stanton2021does, saha2022distilling, zongbetter, zhou2024rethinking}
and interpretability for concept-based and neuro-symbolic models~\citep{marconato2023interpretability, fokkema2025sample, bortolotti2025shortcuts},
see~\citep{sucholutsky2023getting} for a review. 
A theory of \textbf{when similarity occurs} will chiefly depend on \textbf{how similarity is measured}. 
As argued by
\citet{bansal2021revisiting}, a similarity measure ``should be invariant to operations that do not modify the `quality' of the representation, but it is not always clear what these operations are''. 
Indeed, this depends on how one defines the `quality' of a representation, and previous works have debated the pros and cons of different choices
~\citep{raghu2017svcca, wang2018towards, morcos2018insights,
kornblith2019similarity,bansal2021revisiting,ding2021grounding, klabunde2025similarity, williams2021generalized}.

We choose to address these questions from the perspective of identifiability theory. In identifiability of probabilistic models, representations are called {\em equivalent} if they result in equal likelihoods,
and all models
with equivalent representations together form an {\em identifiability class}~\cite{khemakhem2020variational}.%
\footnote{\label{fn:iclass}
For a model family $\Theta$ and an equivalence relation $\sim_L$, $\Theta / \sim_L$ denotes the quotient space whose elements are the {\em identifiability classes} $[(\vf, \vg)] \coloneqq \{(\vf', \vg') \in \Theta: (\vf, \vg) \sim_L (\vf', \vg')\}$~\cite{khemakhem2020variational, gresele2021independent}. %
}
Therefore, to preserve the `quality' of representations, our notion of representational similarity needs to be invariant to precisely those transformations which leave the model likelihood unchanged. 
We focus on a model family including many popular pre-training approaches, e.g., 
autoregressive language models~\citep{radford2019language, brown2020language}, contrastive predictive coding~\citep{oord2018representation}, as well as standard supervised classifiers.
Identifiability results for this model class show that, under a {\em diversity} condition, equal-likelihood models yield representations which are equal up to certain invertible linear transformations~\citep{khemakhem2020ice, roeder2021linear, lachapelle2023synergies, reizinger2025crossentropy}.

As a first step toward a theory of representational similarity, we ask: \textbf{For what distributional and representational distances is it true that models whose output distributions are close have similar internal representations?}
Addressing this requires going beyond classical identifiability, which assumes exact likelihood equality. 
We do so by proving in 
\cref{theorem:small_KL_dissimilar_reps}
that a small Kullback--Leibler (KL) divergence %
does not guarantee that the corresponding representations are similar---i.e., close to the identifiability class of our model family. As an important corollary, two models which are arbitrarily close to maximizing the data likelihood can still have dissimilar representations (\cref{cor:small_loss_dissimilar_reps}).
\begin{figure}
    \centering
    \includegraphics[width=.97\linewidth]{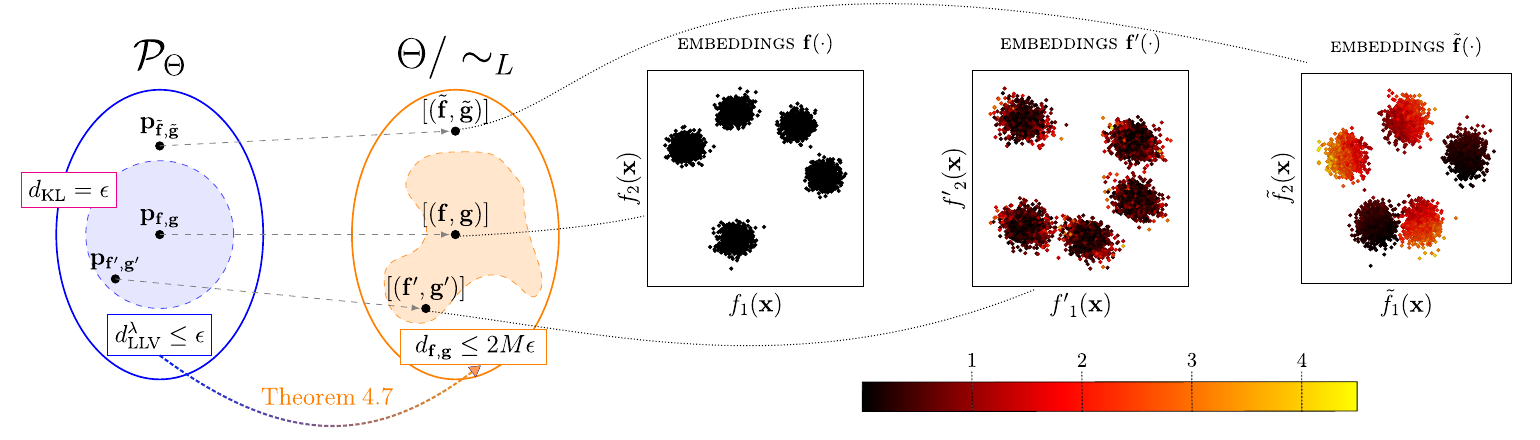}

    \caption{
    \textbf{When closeness in distribution does and does not imply representational similarity}.
    On the left, we show two distributions $p_{\f, \g}, p_{\f', \g'} \in \calP_\Theta$ which are closer than $\epsilon$ w.r.t. the distance $d^\lambda_{\mathrm{LLV}}$ (\cref{def:d_prob}), as illustrated  
    by the shaded \textcolor{blue}{blue} ball.
    We use $[(\vf, \vg)] \in \Theta / \sim_L$ to denote the \textit{identifiability class} (\cref{fn:iclass}) of a $\sim_L$-identifiable model with {\em embedding} $\vf$ and {\em unembedding} $\vg$ (\cref{sec:preliminaries}).
    \cref{theorem:main_bounds_d_prob} implies that the {identifiability classes} $[(\vf,\vg)]$ and $[(\vf', \vg')]$, within the shaded \textcolor{orange}{orange} area, will have {\em similar} representations---i.e., their dissimilarity under $d_{\vf, \vg}$ (\cref{def:model_rep_dissim}) is bounded above by $2M\epsilon$. %
    We also consider a third distribution $p_{\tilde \vf, \tilde \vg} \in \calP_\Theta$ which, while $\epsilon$-close in $d_{\mathrm{KL}}$ (in \textcolor{magenta}{magenta}), falls outside the blue region and
    has representations that are very {\em dissimilar} from those of $[(\vf, \vg)]$ and $[(\vf', \vg')]$, as described by our \cref{theorem:small_KL_dissimilar_reps}.
    On the right, we plot the three model embeddings:
    Taking $\vf$ as reference, we find the best linear fit to $\vf'$ and $\tilde\vf$,
    and then color each of the points according to the residual error (brighter colors denote larger errors).
    The embeddings $\vf'$ are nearly a linear transformation of $\vf$, while $\tilde{\vf}$ shows substantial deviation---visibly farther from being linearly related.
    }
    \label{fig:main-fig}
\end{figure}
We then introduce a new distributional distance (\cref{def:d_prob}) and prove (\cref{theorem:main_bounds_d_prob}) that closeness in this distance bounds representational dissimilarity (\cref{def:m_SVD_dist_rep_main}).

\looseness-1 We conduct classification experiments on CIFAR-10 and find substantial representational dissimilarity between some trained models with similarly good performance (\cref{sec:permutations_trained_models}). This dissimilarity appears to stem from a mechanism analogous to that identified in our KL divergence theory.
We also run synthetic experiments showing that wider neural networks tend to learn distributions closer under our distance and have more similar representations (\cref{sec:wider_networks_more_similar}). This suggests that the inductive biases of larger models may promote representational similarity, something already observed in previous works.\footnote{E.g.,  \citet{roeder2021linear}: “as the representational capacity of the model and dataset size increases, learned representations indeed tend towards solutions that are equal up to only a linear transformation.”}
Altogether, our findings indicate that {the robustness of identifiability results~\cite{khemakhem2020ice, roeder2021linear, reizinger2025crossentropy} under the KL divergence may require additional assumptions}, and that {whether distributional closeness implies representational similarity depends critically on the choice of distributional and representational distance}.

The \textbf{structure and main contributions} of our paper are as follows:
\begin{itemize}[itemsep=0em,topsep=0em,leftmargin=0.75em]
    \item We prove that small KL divergence between two models from our considered class (\cref{eq:roeder_model}) does not guarantee similar representations (\cref{sec:small_kl_dissimilar_reps}, \cref{theorem:small_KL_dissimilar_reps}); as a corollary, models with near-maximum data likelihood can have entirely dissimilar representations (\cref{cor:small_loss_dissimilar_reps}).
    \item We define a distance between probability distributions (\cref{sec:dist_prob}) and a dissimilarity measure between representations (\cref{sec:d_svd}), and prove that small distributional distance implies representational similarity up to a specific class of invertible linear transformations (\cref{sec:close_to_identifiability_result}).
    \item We empirically show that: (1) models trained on CIFAR-10 can exhibit dissimilar representations despite similarly good performance through a mechanism similar to that in the proof of \cref{cor:small_loss_dissimilar_reps} (\cref{sec:permutations_trained_models}); (2) on synthetic data, wider networks yield lower mean and variance of our distributional distance, and also have more similar representations (\cref{sec:wider_networks_more_similar}).
\end{itemize}

\section{Model Class and Identifiability}
\label{sec:preliminaries}
We consider input variables $\vx \in \calX$, which can be real-valued (e.g., images) or discrete (e.g., text strings). We assume that the input data is sampled \textit{i.i.d.} from a distribution $p_\calD (\vx)$. 
Given an input $\x$, we consider the task of defining a conditional distribution
over categorical outputs $y \in \mathcal{Y}$, which can be class labels in classification tasks or next-tokens following a context in language modeling.

\paragraph{Model class.}
Following \citet{roeder2021linear}, we consider a model to be a pair of functions $(\vf, \vg) \in \Theta$, where the codomain of both $\f$ and $\g$ is a vector space $\mathbb{R}^M$, which we will also refer to as the \textit{representation space}. Here, $\Theta$ entails the space of all such pairs that can be generated by arbitrarily large neural networks.\footnote{Both $\f$ and $\g$ are therefore treated as nonparametric functions in the following, similar to~\citep{khemakhem2020ice, roeder2021linear, lachapelle2023synergies, marconato2024all}.} 
We will refer to $\vf(\vx)$ as the model \textit{embedding} and to $\vg(y)$ as the \textit{unembedding}. 
The conditional distribution%
\footnote{We use the term ``distribution'' generically. In \cref{eq:roeder_model}, \(p_{\f, \g}(y \vert \x)\) is a softmax over a finite label set, i.e., a probability \emph{mass} function.}
defined by the model is given by
\begin{align}
    p_{\f, \g}(y \vert \x) &= \frac{\exp (\f(\x)^\top \g(y))}{\sum_{y'\in \calY} \exp (\f(\x)^\top \g(y'))}\,.
    \label{eq:roeder_model}
\end{align}
We will also indicate the model distribution with $p_{\vf, \vg} \in \calP_\Theta$, where $\calP_\Theta$
is the space of conditional distributions that can be constructed as in \cref{eq:roeder_model} using models in $\Theta$. 
This model family captures a variety of models for different learning contexts \citep{roeder2021linear} among which:
(i) auto-regressive language models like GPT-2 \citep{radford2019language} and GPT-3 \citep{brown2020language};
(ii) supervised classifiers like energy-based models \citep{khemakhem2020ice} and models trained with DIET \citep{ibrahim2024occam}; (iii) pretrained language models like BERT \citep{devlin2019bert};
(iv)~self-supervised pretraining for image classification with Contrastive Predictive Coding (CPC) \citep{oord2018representation}.

\paragraph{Identifiability.} It is useful to fix a pivot input point $\vx_0 \in \calX$ and a pivot label $y_0 \in \calY$ and consider the displaced embeddings  $\vf_0(\vx)  \coloneqq  \vf(\vx) -\vf(\vx_0)$ and unembeddings $\vg_0(y)  \coloneqq  \vg(y) -\vg(y_0)$.
The conditional probability distribution generated by the model can then be rewritten as
\begin{align}
    p_{\f, \g}(y \vert \x) &\propto {\exp (\f(\x)^\top \g(y))} \exp(- \vf(\vx)^\top \vg(y_0) ) \\
    &= {\exp (\f(\x)^\top \g_0(y))} \, .
\end{align}
Next, we review {\em identifiability} of the model class in \cref{eq:roeder_model}. 
Identifiability theory characterizes the set of transformations of representations that leave a model’s output distribution unchanged; two models are therefore deemed equivalent when one can be obtained from the other by such a transformation. 
To proceed,
we introduce 
an assumption about our considered models. This condition, also known as \textit{diversity} \citep{khemakhem2020ice, roeder2021linear, lachapelle2023synergies}, corresponds to only considering models that \textit{span the whole representation space}, in the sense that there is no proper linear subspace of $\bbR^M$ containing the embeddings or the unembeddings~\cite{marconato2024all}. Formally, this can be written as: 
\begin{definition}[Diversity condition]
    \label{def:diversity}
    A model $(\vf, \vg) \in \Theta$ satisfies the diversity condition %
    if there exists {$\vx_1, \ldots, \vx_M \in \calX$} and
    $y_1, ..., y_M \in \calY$ such that
    {both the displaced embedding vectors} ${\{\f_0(\x_i)\}_{i=1}^M}$ 
    and
    the displaced unembedding vectors ${\{\g_0(y_i)\}_{i=1}^M}$ are linearly independent.
\end{definition}

The diversity condition can hold when the total number of unembeddings in $\calY$ is strictly higher than the representation dimension $M$ \citep{roeder2021linear}.
For example, in GPT-2 \citep{radford2019language}, while the representation dimensionality varies in the order of thousands (depending on the model size), the number of tokens in $\calY$ is of the order of ten thousands ($\sim 50$k tokens),  implying that diversity may be easily satisfied.
With this assumption, we can prove the \textit{identifiability} of the model in \cref{eq:roeder_model} up to invertible linear transformations
(a detailed proof can be found in \cref{app:id_res_proof}):

\begin{restatable}[Linear Identifiability \citep{khemakhem2020ice,roeder2021linear,lachapelle2023synergies}]{theorem}{linearidf}
\label{theorem:identifiability}
    Let $(\vf, \vg), (\vf', \vg') \in \Theta$, and $(\vf, \vg)$ satisfy the diversity condition (\cref{def:diversity}). Let $\Lmat$ (resp. $\Lmat'$) be the matrix with columns $\g_0(y)$ (resp. $\g'_0(y)$), and let $\A \coloneqq  \Lmat^{-\top} \Lmat'^{\top} \in \bbR^{M \times M}$. Then: %
    \[
        p_{\vf, \vg} (y \mid \vx) = p_{\vf', \vg'} (y \mid \vx), \; \forall (\vx, y) \in \calX \times \calY 
        \implies 
        (\vf, \vg) \sim_L (\vf', \vg') 
    \]
    where the equivalence relation $\sim_L$ is defined by
    \begin{align}
        (\vf, \vg) \sim_L (\vf', \vg') 
        \iff 
        \begin{cases} 
            \f(\x) = \A \f' (\x) \\
            \g_0(y) = \A^{-\top} \vg_0'(y)
        \end{cases} \, .
        \label{eq:linear-equivalence}
    \end{align}
\end{restatable}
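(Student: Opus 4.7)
The plan is to convert the assumed distributional equality into a bilinear identity $\vf(\vx)^\top \vg_0(y) = \vf'(\vx)^\top \vg_0'(y)$, and then apply the diversity condition twice---first on the unembedding side to extract $\A$, then on the embedding side to transfer the relation to $\vg_0$.

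To obtain the bilinear identity, I would exploit the pivot label $y_0$ to kill the normalizers. Since $\vg_0(y_0)=0$, we have $\log[p_{\vf,\vg}(y\vert\vx)/p_{\vf,\vg}(y_0\vert\vx)] = \vf(\vx)^\top \vg_0(y)$, and analogously for the primed model; equality of distributions then gives $\vf(\vx)^\top \vg_0(y) = \vf'(\vx)^\top \vg_0'(y)$ for all $(\vx,y)$. Taking the labels $y_1,\dots,y_M$ from \cref{def:diversity} and stacking $\{\vg_0(y_i)\}$ into the invertible $M\times M$ matrix $\Lmat$ (and likewise $\Lmat'$), the identity reads $\vf(\vx)^\top \Lmat = \vf'(\vx)^\top \Lmat'$ for every $\vx$; inverting $\Lmat$ yields the first relation, $\vf(\vx) = \Lmat^{-\top}\Lmat'^\top \vf'(\vx) = \A\vf'(\vx)$. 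Substituting $\vf'(\vx) = \A^{-1}\vf(\vx)$ back into the bilinear identity produces
\[
\vf(\vx)^\top \bigl(\vg_0(y) - \A^{-\top}\vg_0'(y)\bigr) = 0, \qquad \forall (\vx,y)\in\calX\times\calY.
\]
Applying this at $\vx_0,\vx_1,\dots,\vx_M$ (with $\vx_i$ as in diversity) and subtracting the $\vx_0$ equation from each of the others gives $\vf_0(\vx_i)^\top (\vg_0(y)-\A^{-\top}\vg_0'(y)) = 0$ for $i=1,\dots,M$; linear independence of $\{\vf_0(\vx_i)\}_{i=1}^M$ then forces $\vg_0(y) = \A^{-\top}\vg_0'(y)$, which is the second relation.

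The main subtle point is this final step: diversity delivers linear independence of the \textit{displaced} embeddings $\vf_0$, whereas the bilinear identity natively involves $\vf$. Evaluating at the pivot $\vx_0$ and subtracting bridges this asymmetry cleanly, after which the conclusion is immediate. All other steps amount to straightforward linear algebra once the pivot label has collapsed the partition functions, so I expect no further obstacles.
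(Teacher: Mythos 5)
Your argument has one genuine gap: you use $\A^{-1}$ (and the conclusion itself involves $\A^{-\top}$) without ever establishing that $\A = \Lmat^{-\top}\Lmat'^{\top}$ is invertible. The diversity condition is assumed only for $(\f,\g)$, so $\Lmat$ is invertible but $\Lmat'$ is not a priori, and hence $\A$ could in principle be singular---at which point the substitution $\f'(\x)=\A^{-1}\f(\x)$, and indeed the statement $\g_0(y)=\A^{-\top}\g_0'(y)$, would not make sense. The paper closes exactly this hole before proceeding: from $\f(\x)=\A\f'(\x)$ one gets $\f_0(\x_i)=\A\f'_0(\x_i)$ for the diversity points $\x_1,\dots,\x_M$, i.e.\ $\Nmat=\A\Nmat'$, and since $\operatorname{rank}(\Nmat)=M$ while $\operatorname{rank}(\A\Nmat')\leq\min(\operatorname{rank}\A,\operatorname{rank}\Nmat')$, both $\A$ and $\Nmat'$ must have full rank. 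You already have all the ingredients for this (the points $\x_i$ and the relation $\f=\A\f'$), so the fix is a one-line rank argument, but as written the step fails.

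Apart from that, your proof is correct and follows the paper's strategy for the embedding half essentially verbatim (pivot label $y_0$ collapses the partition functions, stack the diversity labels, invert $\Lmat$). Your treatment of the unembedding half is a mildly different and arguably cleaner variant: the paper fixes $y$, varies $\x$, and subtracts the pivot-input equation, which leaves behind partition-function constants $c_i$ that must later be cancelled by passing to displaced unembeddings; you instead substitute $\f'=\A^{-1}\f$ into the already-normalizer-free bilinear identity and use linear independence of $\{\f_0(\x_i)\}$ directly, so no such constants ever appear. Once the invertibility of $\A$ is established, that route is perfectly valid.
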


The result of \cref{theorem:identifiability} is central to the scope of our analysis because it shows that representations of two models generating the same conditional distribution are related by an invertible linear transformation. 
This also means that, under the diversity condition, to each probability distribution $p_{\vf, \vg} \in \calP_\Theta$ corresponds
a unique (up to equivalence) choice of embeddings and unembeddings.
See \cref{fig:main-fig} for an illustration.

\paragraph{Representational similarity.} 
\cref{theorem:identifiability} suggests that a natural notion of similarity between representations for our model class should account for the linear equivalence relation  in \cref{eq:linear-equivalence}. 
Hereafter, we say two models $(\vf, \vg), (\vf', \vg') \in \Theta$ have \textit{equivalent representations} if their embeddings and unembeddings are related by an invertible linear transformation of the kind in \cref{theorem:identifiability}, i.e., $(\vf, \vg) \sim_L (\vf', \vg')$, and we call representations \textit{similar} if they are `close' to having such a relation---which we will make precise in \cref{sec:d_svd}. 
We note that several measures of representational similarity~\citep{morcos2018insights, raghu2017svcca, klabunde2025similarity}
are based on Canonical Correlation Analysis (CCA)~\citep{hotelling1936relations}, %
and are thus invariant to any invertible linear transformation. They thus attain their maximum when applied to representations that are equivalent in the sense above, but also for other invertible linear transformations.
We discuss alternative choices of similarity measures in \cref{sec:discussion_related_work}. %

\section{Models Close in KL Divergence Can Have Dissimilar Representations}
\label{sec:small_kl_dissimilar_reps}

\begin{figure}
    \centering
    \begin{tabular}{ccccc}
        \multicolumn{2}{c}{Model $(\vf, \vg)$} 
        & &
        \multicolumn{2}{c}{Model $(\vf', \vg')$}
        \\
        \scriptsize
        $\quad$
        \textsc{embeddings} $\vf(\cdot)$ &
        \scriptsize
        $\quad$
        \textsc{unembeddings} $\vg(\cdot)$
        & &
        \scriptsize
        $\quad$
        \textsc{embeddings} $\vf'(\cdot)$ &
        \scriptsize
        $\quad$
        \textsc{unembeddings} $\vg'(\cdot)$
        \\
        \includegraphics[height=0.19\textwidth]{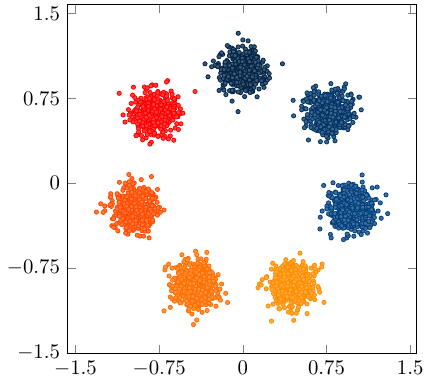}
        &
        \includegraphics[height=0.19\textwidth]{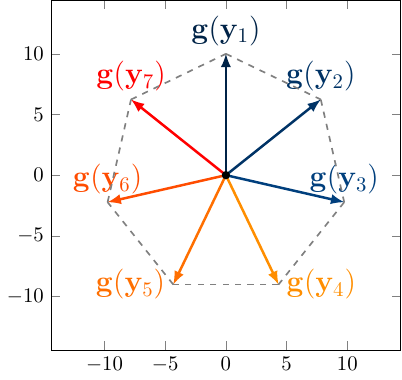}
        & &
        \includegraphics[height=0.19\textwidth]{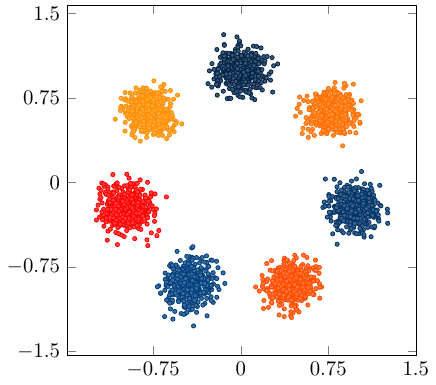}
        &
        \includegraphics[height=0.19\textwidth]{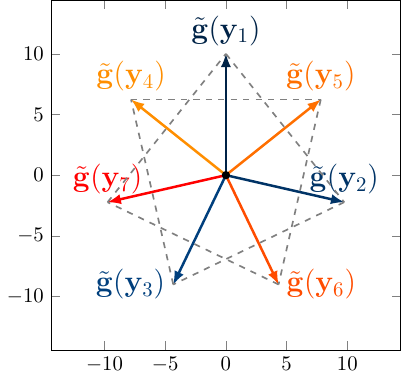}
    \end{tabular}
    \caption{
    \textbf{{Two models with small KL divergence but highly dissimilar representations.} \looseness=-1
    }
    We construct two models $(\vf, \vg), (\vf', \vg') \in \Theta$ whose representations are related by a non-linear transformation: 
    The embeddings and unembeddings of the $(\vf', \vg')$ model are constructed by permuting the embedding clusters and the corresponding unembedding vectors of the $(\vf, \vg)$ model.
    As a result, the nearest unembedding vectors in $\vg(\cdot)$ (dashed lines) are mapped away from each other in $\vg'(\cdot)$.
    In \cref{theorem:small_KL_dissimilar_reps}, we show that as the norm of the unembedding vectors $\rho$ grows for both models, their distributions 
    $p_{\vf, \vg}$ and $p_{\vf', \vg'}$
    become closer in KL divergence, whereas their representations remain dissimilar---i.e., far from being equal up to a linear transformation, see also~\cref{table:kl_to_zero}.%
    }
    \label{fig:small_kl_dissimilar_reps}
\end{figure}

Given \cref{theorem:identifiability}, we next ask whether a similar conclusion holds approximately when the models have \emph{nearly} the same distributions. More precisely, we ask: For a given choice of divergence between distributions, does a small difference in distributions imply representational similarity?

As a natural starting point, we consider the widely used Kullback--Leibler (KL) divergence.
Specifically, we quantify the difference between two models by the KL divergence between their conditional distributions, averaged over the data distribution $p_{\calD}(\x)$. This is equivalent to the KL divergence between the corresponding joint distributions, assuming both share the marginal $p_{\calD}(\x)$. Formally:
\begin{align}
\textstyle
    d_{\mathrm{KL}}(p, p') 
    \coloneqq \Ex_{\x \sim p_{\calD}} \left[ \KL\left( p(y | \x) \,\|\, p'(y | \x) \right) \right] 
    = \KL\left( p(\x, y) \,\|\, p'(\x, y) \right).
    \label{eq: def d_KL}
\end{align}
When $\KL(p(y|\x)\,\|\,p'(y|\x)) = 0$ for all $\x$, we obtain models that are $\sim_L$-equivalent, as specified in \cref{theorem:identifiability}. However, only making the KL divergence small is insufficient to conclude much about the similarity of representations. The following result formalizes this claim (proof in \cref{app:KL_to_zero_reps_dissimilar_full_proofs}):
\begin{theorem}[Informal]
    \label{theorem:small_KL_dissimilar_reps}
    Let $k=|\calY|$ and assume $k \geq M + 1$.
    Then $\forall \epsilon > 0$, there exist models $(\vf, \vg), (\f', \g') \in \Theta$ for which
    $
        d_{\mathrm{KL}}(p_{\vf, \vg}, p_{\f', \g'}) \leq \epsilon 
    $,
    and
    whose embeddings are far from being linearly equivalent.\footnote{We show in~\cref{sec:additional_experiments} how the construction from our proof can lead to high dissimilarity both according to $m_{\mathrm{CCA}}$~\citep{raghu2017svcca,roeder2021linear} and under the measure introduced in \cref{def:m_SVD_dist_rep_main}. %
}
    More precisely, one can construct a sequence of model pairs, depending smoothly on a real-valued parameter $\rho$, such that, as $\rho \to \infty$, the KL divergence between the models tends to zero while their embeddings remain fixed and not equal up to any linear transformation.%
\end{theorem}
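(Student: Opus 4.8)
The plan is to exploit the fact that the softmax in \cref{eq:roeder_model} degenerates to an indicator as the inner products $\vf(\vx)^\top\vg(y)$ are uniformly rescaled: in that regime one can consistently \emph{permute a finite family of embeddings together with the corresponding unembeddings} without changing the limiting conditional distribution, even though such a permutation is in general a genuinely nonlinear transformation of the embedding map. The parameter $\rho$ of the statement will be exactly this rescaling, applied only to the unembeddings (equivalently, an inverse temperature), so that the embeddings themselves stay fixed.

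First I would fix $M+1\le k=|\calY|$ and choose unit vectors $\mathbf{v}_1,\dots,\mathbf{v}_k\in\bbR^M$ in \emph{general position}, which buys three things: (a) for every model built below, the displaced embedding and unembedding vectors span $\bbR^M$, so the diversity condition (\cref{def:diversity}) holds; (b) since the $\mathbf{v}_i$ are distinct points of the sphere, $\mathbf{v}_i^\top\mathbf{v}_i>\mathbf{v}_i^\top\mathbf{v}_j$ for $j\ne i$ by Cauchy--Schwarz, with a uniform gap $\delta>0$ over the finitely many pairs; and (c) --- the crucial point --- no nontrivial permutation of the set $\{\mathbf{v}_1,\dots,\mathbf{v}_k\}$ is realized by an invertible linear map on $\bbR^M$. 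Property (c) is generic: once $\mathbf{v}_1,\dots,\mathbf{v}_M$ are fixed an $\A\in\mathrm{GL}(M)$ realizing a given permutation is determined, and the remaining $k-M\ge1$ vector equations are nontrivial polynomial constraints on the configuration, unless it is exceptionally symmetric --- for the regular simplex, for instance, all permutations \emph{are} realized, even orthogonally --- so the configurations violating (c) form a finite union of proper subvarieties, which a generic choice avoids. Now let $\calX=\{\vx_1,\dots,\vx_k\}$ with $p_{\calD}$ uniform, fix a nontrivial permutation $\pi$ (e.g.\ a transposition), and for $\rho>0$ define two models: the first with embedding $\vf(\vx_i)=\mathbf{v}_i$ and unembedding $y\mapsto\rho\,\mathbf{v}_y$, with conditional $p_\rho$; the second with embedding $\vf'(\vx_i)=\mathbf{v}_{\pi(i)}$ and unembedding $y\mapsto\rho\,\mathbf{v}_{\pi(y)}$, with conditional $q_\rho$. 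Both pairs lie in $\Theta$ (the functions are treated nonparametrically and specified only on finite sets), only the unembeddings depend on $\rho$ (smoothly), and the two models share the \emph{same} set of embeddings, merely assigned to inputs through $\pi$.

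Two estimates remain. For the KL bound: $p_\rho(y\mid\vx_i)\propto\exp(\rho\,\mathbf{v}_i^\top\mathbf{v}_y)$, and reindexing the normalizing sum over $\calY$ by $y\mapsto\pi(y)$ gives the identity $q_\rho(y\mid\vx_i)=p_\rho(\pi(y)\mid\vx_{\pi(i)})$; by the gap $\delta$, both $p_\rho(\cdot\mid\vx_i)$ and $q_\rho(\cdot\mid\vx_i)$ concentrate on the one-hot distribution $\mathbf{1}[\,y=i\,]$ at rate $e^{-\rho\delta}$. A short computation bounds each off-diagonal ($y\ne i$) term of $\KL(p_\rho(\cdot\mid\vx_i)\,\|\,q_\rho(\cdot\mid\vx_i))$ by a quantity of order $\rho\,e^{-\rho\delta}$ --- using $p_\rho(y\mid\vx_i)\le e^{-\rho\delta}$ and the crude bound $q_\rho(y\mid\vx_i)\ge e^{-2\rho}/k$ --- while the diagonal term tends to $0$ as $p_\rho(i\mid\vx_i)$ and $q_\rho(i\mid\vx_i)$ tend to $1$; averaging over $i$ gives $d_{\mathrm{KL}}(p_\rho,q_\rho)\to0$, so choosing $\rho$ large yields $d_{\mathrm{KL}}(p_\rho,q_\rho)\le\epsilon$. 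For the non-equivalence: by \cref{eq:linear-equivalence}, linear equivalence of the two models would require an invertible $\A$ with $\mathbf{v}_i=\A\mathbf{v}_{\pi(i)}$ for all $i$, i.e.\ $\A\mathbf{v}_j=\mathbf{v}_{\pi^{-1}(j)}$ for all $j$ --- a linear realization of the nontrivial permutation $\pi^{-1}$, forbidden by (c). Moreover $\A\mapsto\sum_{i=1}^k\|\mathbf{v}_i-\A\mathbf{v}_{\pi(i)}\|^2$ is continuous and coercive on $M\times M$ matrices (it blows up as $\|\A\|\to\infty$ because the $\mathbf{v}_i$ span) and has no zero (a zero would force $\A$ invertible and realizing $\pi^{-1}$), so its infimum is a constant $c>0$ depending only on $\{\mathbf{v}_i\}$ and $\pi$, hence independent of $\rho$ and $\epsilon$; this is the quantitative sense in which the embeddings stay far from linearly equivalent (in the appendix we phrase this instead via the dissimilarity measure of \cref{def:m_SVD_dist_rep_main}).

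The step I expect to be the main obstacle is making the non-equivalence \emph{robust and quantitative} rather than merely generic: symmetric configurations (regular simplices, orthonormal frames) admit permutations that \emph{are} realized by linear maps, so one must argue carefully that a generic configuration lies a fixed positive distance from each of the finitely many ``bad'' subvarieties, uniformly in $\rho$. By comparison the KL bound is a routine softmax-concentration estimate; its only delicate point is that the logarithmic terms in the KL at labels where $q_\rho$ is exponentially small must not blow up, which holds because $p_\rho$ assigns even less mass there, so the products $p_\rho\log(1/q_\rho)$ are still $o(1)$.
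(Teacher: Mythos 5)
Your proof is correct in its overall architecture and reaches the same conclusion, but it takes a genuinely different route from the paper's. The paper constructs unembeddings as explicit scaled standard basis vectors, permutes just two of them, and repairs the embeddings by rotating only the embeddings whose argmax label was moved; it then exhibits two \emph{sequences} of embeddings converging to a common limit $\va$ whose images under any map $\boldsymbol{\tau}$ satisfying $\boldsymbol{\tau}\circ\vf=\vf'$ converge to distinct limits, so $\boldsymbol{\tau}$ is discontinuous and in particular non-linear. Your construction instead takes a \emph{finite} input set, permutes the \emph{entire} labeling (embeddings and unembeddings in lockstep), and rules out the linear map $\A$ by a generic-position argument: for a configuration outside a finite union of proper subvarieties, no invertible $\A$ realizes a nontrivial permutation of $\{\mathbf{v}_i\}$. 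The two non-equivalence mechanisms are fundamentally different (analytic discontinuity vs.\ algebraic genericity). What your route buys: it handles $k=M+1$ and $k\ge M+2$ uniformly (the paper does a case split), the relation $q_\rho(y\mid\vx_i)=p_\rho(\pi(y)\mid\vx_{\pi(i)})$ gives a cleaner symmetry to exploit in the KL estimate, and the coercive functional $\A\mapsto\sum_i\|\mathbf{v}_i-\A\mathbf{v}_{\pi(i)}\|^2$ gives a quantitative ``distance to linear equivalence.'' What the paper's route buys: it is fully constructive and avoids any appeal to genericity --- the discontinuity at $\va$ is verified directly from explicit formulas, and the argument even rules out any \emph{continuous} $\boldsymbol{\tau}$, not just linear ones.

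The one place your sketch is not yet a proof --- and which you yourself flag --- is property (c). As written, ``once $\mathbf{v}_1,\dots,\mathbf{v}_M$ are fixed an $\A$ realizing a given permutation is determined'' is not quite right: the equations $\mathbf{v}_i=\A\mathbf{v}_{\pi(i)}$ determine $\A$ only after you check that $\{\mathbf{v}_{\pi(1)},\dots,\mathbf{v}_{\pi(M)}\}$ spans, and whether the leftover equations are genuinely nontrivial depends on how $\pi$ interacts with the index split $\{1,\dots,M\}$ vs.\ $\{M+1,\dots,k\}$. For a transposition $\pi=(12)$ with $k=M+1$, for instance, the realizability condition reduces to the codimension-one constraint $\mathbf{v}_1+\mathbf{v}_2\in\operatorname{span}(\mathbf{v}_3,\dots,\mathbf{v}_{M+1})$, which a generic configuration avoids --- but this needs to be checked permutation by permutation (or one needs a cleaner uniform argument). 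In contrast, the paper sidesteps this entirely: its explicit configuration and discontinuity argument carry no implicit genericity hypothesis, and I would recommend either fleshing out the subvariety argument in full or switching to an explicit configuration and a discontinuity-style check, as the paper does. Your KL estimate itself is fine; the only detail to write out is that the diagonal term $p_\rho(i\mid\vx_i)\log\bigl(p_\rho(i\mid\vx_i)/q_\rho(i\mid\vx_i)\bigr)$ also tends to zero, which follows since both probabilities $\to 1$ at rate $O(e^{-\rho\delta})$.
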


\begin{proof}[Proof sketch]    
    We here sketch the construction used for $k\geq M+2$ (see \cref{app:KL_to_zero_reps_dissimilar_full_proofs} for a construction which holds with $k=M+1$ ).
    Let $(\f, \g) \in \Theta$ be a model which has unembeddings with fixed norm, i.e., $\Vert \g(y) \Vert = \rho$, for all $y \in \mathcal{Y}$ and $\rho \in \bbR^+$. Assume the unembeddings have non-zero angles between them and let the unembeddings satisfy diversity (\cref{def:diversity}).
    For $\x \in \mathcal{X}$, let the embedding $\f(\x)$ have strictly largest cosine similarity with one $\g(\hat y)$, for some $\hat y \in \calY$, and such that $\hat y = \argmax_{y \in \calY}( p_{\vf, \vg}(y \vert \x))$. As a consequence, note that the model $(\vf, \vg)$ also satisfies the diversity condition for the embeddings. Let $(\f', \g') \in \Theta$ be another model which is constructed starting from %
    $(\f, \g)$ and considering %
    a permutation, $\pi$, of the label indices such that $\mathbf{g}^{\prime}\left(y_i\right)=\mathbf{g}\left(y_{\pi(i)}\right)$ for $i = 1, \ldots, k$, with a corresponding shift of the embedding clusters.%
    \footnote{This is not a permutation of the representation dimensions---i.e., it is not ${\mathbf{f}^{\prime}(\mathbf{x})=\mathbf{P} \mathbf{f}(\mathbf{x})}$ for some permutation matrix $\mathbf{P}$. The permutation of label indices, and corresponding shift of embedding clusters, is in general not a linear transformation of the representations (see~\cref{fig:small_kl_dissimilar_reps}); it also changes the model's output distribution, since it alters the rank---by predicted probability---of every label except the highest-probability one for each input $\mathbf{x}$.} 
    For every $\x \in \mathcal{X}$, we let $\Vert \f'(\x) \Vert = \Vert \f(\x) \Vert$ and let the angle between $\f' (\x)$ and $\g'(\hat y)$ be equal to that between $\f(\x)$ and $\g(\hat y)$. 
    We illustrate this construction in \cref{fig:small_kl_dissimilar_reps} for $\f(\x), \g(y) \in \R^2$. 
    One can then find a permutation $\pi$ for which $\f(\x)$ is not a linear transformation\footnote{In~\cref{table:kl_to_zero}, we show that one can construct models this way and such that the mean canonical correlation is close to zero, whereas it would be equal to one if their representations were linear transformations of each other.} of $\f'(\x)$ (thus the KL divergence between $p_{\vf, \vg}$ and $p_{\vf', \vg'}$ is non-zero). 
    However, as $\rho \to \infty$, we have that $d_{\mathrm{KL}}(p_{\vf, \vg}, p_{\f', \g'}) \to 0$, although the embeddings stay constant and not linearly equivalent. 
\end{proof}

\cref{theorem:small_KL_dissimilar_reps} shows that, even when $d_{\mathrm{KL}}(p_{\vf,\vg},p_{\f',\g'}) \le \epsilon$, there is no guarantee that the models are close to being $\sim_L$-equivalent. The proof constructs a $\rho$-parameterized sequence in which the embeddings remain fixed and not equal up to a linear transformation; consequently, the error of the best linear regression of one embedding set onto the other is strictly positive and stays constant as $\rho \to \infty$, while $d_{\mathrm{KL}} \to 0$.

In \cref{theorem:small_KL_dissimilar_reps}, we consider the KL divergence between two distributions from our model family. However, when practitioners study representational similarity in trained models, %
they fit each model to the data distribution—usually by maximum-likelihood estimation—and then compare their representations.
We therefore also formulate the following corollary which is closer to the usual setup:
\begin{corollary}[Informal]
    \label{cor:small_loss_dissimilar_reps}
    Let $k = |\calY|$ and assume $k \geq M + 1$. 
    Consider a dataset of $(\vx, y)$ pairs, where only one label is assigned to each unique input.
    Then, we can find two models %
    that obtain an arbitrarily small cross-entropy loss on the data while having representations which are {far from being linearly equivalent} in the same sense as in \cref{theorem:small_KL_dissimilar_reps}.
\end{corollary}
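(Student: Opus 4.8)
The plan is to derive \cref{cor:small_loss_dissimilar_reps} directly from the construction behind \cref{theorem:small_KL_dissimilar_reps}, by observing that the models it produces already fit a suitable labeled dataset with vanishing cross-entropy. Recall that in that construction one fixes unembeddings of common norm $\rho$, i.e.\ $\vg(y)=\rho\,\hat\vg(y)$ with $\hat\vg$ unit vectors at pairwise non-zero angles, and chooses each embedding $\vf(\vx)$ to have a \emph{strict} largest cosine similarity with a single $\hat\vg(\hat y(\vx))$; the companion model $(\vf',\vg')$ is obtained by permuting the unembeddings of $(\vf,\vg)$ and re-positioning each $\vf'(\vx)$ so that $\KL\big(p_{\vf,\vg}(\cdot\mid\vx)\,\|\,p_{\vf',\vg'}(\cdot\mid\vx)\big)\to 0$ as $\rho\to\infty$ while $\vf'$ is not a linear image of $\vf$. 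Since $\vf(\vx)^\top\vg(y)=\rho\,\vf(\vx)^\top\hat\vg(y)$ and the $\hat\vg(y)$ have equal norm, the softmax $p_{\vf,\vg}(\cdot\mid\vx)$ concentrates on $\hat y(\vx)=\argmax_{y}\vf(\vx)^\top\hat\vg(y)$ as $\rho\to\infty$; and the companion model must concentrate on the \emph{same} label, since otherwise the two limiting point masses would differ and the KL divergence could not vanish. Hence, along this $\rho$-indexed family, both $p_{\vf,\vg}(\hat y(\vx)\mid\vx)\to1$ and $p_{\vf',\vg'}(\hat y(\vx)\mid\vx)\to1$ for every $\vx$, while the embeddings $\vf,\vf'$ stay fixed and not $\sim_L$-related.

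Next I would realize an arbitrary dataset of the stated form inside this construction. Given $(\vx,y)$ pairs in which each unique input $\vx$ carries a single label $y(\vx)$, instantiate the construction with $\hat y(\vx)=y(\vx)$ for every input occurring in the data: using $k=|\calY|\ge M+1$, pick the unit unembeddings $\{\hat\vg(y)\}$ so that the diversity condition holds (as in \cref{sec:preliminaries}), set $\vg(y)=\rho\,\hat\vg(y)$, and for each data input $\vx$ choose $\vf(\vx)$ to have strict largest cosine similarity with $\hat\vg(y(\vx))$; the values of $\vf$ on the remaining inputs are free and may be chosen to secure embedding diversity without affecting the loss on the data. Then apply the permutation step with a $\pi$ for which $\vf$ is not a linear image of $\vf'$, obtaining $(\vf',\vg')$ with $d_{\mathrm{KL}}(p_{\vf,\vg},p_{\vf',\vg'})\to0$ and representations far from $\sim_L$-equivalent (in particular exhibiting high $m_{\mathrm{CCA}}$- and $d_{\mathrm{SVD}}$-dissimilarity, exactly as in \cref{theorem:small_KL_dissimilar_reps}).

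The loss estimate is then immediate. With labels chosen as above, the empirical cross-entropy of either model over the $N$ data pairs equals $\frac1N\sum_i\big(-\log p(y(\vx_i)\mid\vx_i)\big)=\frac1N\sum_i\big(-\log p(\hat y(\vx_i)\mid\vx_i)\big)$, and since $p_{\vf,\vg}(\hat y(\vx_i)\mid\vx_i)\to1$ and $p_{\vf',\vg'}(\hat y(\vx_i)\mid\vx_i)\to1$ as $\rho\to\infty$, both cross-entropies tend to $0$. Because the embeddings do not depend on $\rho$ and are not $\sim_L$-equivalent, for any prescribed $\epsilon>0$ it suffices to take $\rho$ large enough to push both losses below $\epsilon$, which is the claim. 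The single-label-per-input hypothesis enters precisely here: it makes the target conditional deterministic, so that zero cross-entropy is attainable in the limit; without it the loss is bounded below by the (positive) conditional entropy of the labels given the input.

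The part I expect to require the most care is the one that is already load-bearing in \cref{theorem:small_KL_dissimilar_reps}: verifying that $\vf'$ can be re-positioned so that $p_{\vf',\vg'}(\cdot\mid\vx)$ concentrates on the \emph{same} $\hat y(\vx)$ as $p_{\vf,\vg}(\cdot\mid\vx)$ while $\vf'$ remains a genuinely non-linear function of $\vf$ --- i.e.\ that the angle-and-norm recipe for $\vf'$ is simultaneously compatible with matching an arbitrary labeling $y(\cdot)$ and with breaking linear equivalence. Beyond invoking that theorem, the only new bookkeeping is choosing the unembedding configuration and the data labeling consistently, which is unobstructed because $\vf$ and $\vg$ are unrestricted functions and $k\ge M+1$ leaves room for both a diverse unembedding set and a non-trivial permutation.
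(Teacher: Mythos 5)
Your proposal is correct and follows essentially the same route as the paper: both reuse the $\rho$-indexed construction from \cref{theorem:small_KL_dissimilar_reps}, align the data labeling with the softmax argmax $\hat y(\vx)$, observe that the single-label-per-input hypothesis removes the conditional-entropy floor, and conclude via the exponential concentration of the softmax (each term $p(\hat y(\vx_i)\mid\vx_i)\to1$) that both empirical cross-entropies vanish while the fixed, non-linearly-related embeddings preserve dissimilarity. The only difference is a minor detour: you infer that the companion model $p_{\vf',\vg'}$ also concentrates on $\hat y(\vx)$ by appealing to the vanishing KL divergence (effectively a Pinsker-type argument), whereas the paper derives it directly from conditions iii)--iv) of the construction, which explicitly preserve the angle to the permuted unembedding and hence the argmax. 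Both steps are sound; the paper's is more self-contained (it does not need the KL theorem's conclusion, only its construction), while yours is arguably more modular by leaning on the already-proved statement.
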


The proof is in \cref{app:Loss_to_zero_reps_dissimilar_full_proofs}, and an illustration through synthetic experiments can be found in \cref{app:loss_diff_vs_emb_mcca}. This corollary shows that two models having close to optimal classification loss on training data, possibly also equal, may not possess similar representations at all. 
Importantly, we find that this situation can arise in practice with discriminative models trained on CIFAR-10 (\cref{sec:experiments}), which can have dissimilar representations despite assigning high probability to the same labels.

\section{When Closeness in Distribution Implies Representational Similarity}

\label{sec:distances_and_bound}

In this section, we introduce a distance between probability distributions (\cref{sec:dist_prob}) and measure of representational similarity (\cref{sec:d_svd})
and prove that, for our model family (\cref{eq:roeder_model}), small values of the former imply high similarity under the latter (\cref{sec:close_to_identifiability_result}). %

\subsection{A Distance Between Probability Distributions}
\label{sec:dist_prob}

We start by studying what relation holds between models in~$\Theta$ which need not have the same probability distributions.
Hereafter, we will also consider the variance over input samples and outputs and denote it with $\mathrm{Var}_{\vx}[ \cdot ]  \coloneqq  \mathrm{Var}_{\vx \sim p_\calD}[\cdot]$ and $\mathrm{Var}_{y}[ \cdot ]  \coloneqq  \mathrm{Var}_{y \sim \mathrm{Unif}(\calY)}[\cdot]$, respectively.
With this in mind, it is useful to introduce the following functions:
\[
\resizebox{\textwidth}{!}{
    $\psi_{\vx}(y_i; p) \coloneqq \sqrt{\mathrm{Var}_{\x}[\log p(y_i| \x) - \log p(y_0| \x)]} 
    \; \text{ and } \;
     \psi_{y}(\vx_j; p) \coloneqq \sqrt{\mathrm{Var}_{y}[\log p(y| \x_j) - \log p(y| \x_0)]}$ \, .
}
\]
We also denote with $\mathbf{S} \in \bbR^{M \times M}$ the diagonal matrix with entries $S_{ii}  \coloneqq {\psi_{\vx}(y_i; p)}^{-1}$.  
We now have everything we need to state a Lemma relating any two model embeddings (the full statement and proof, including the relation between the unembeddings can be found in \cref{app:weighted_close_identifiability_analysis}):
\begin{lemma}
\label{lemma:weighted_close_identifiability_analysis}
    Let $(\f, \g), (\f',\g') \in \Theta$ be models satisfying the diversity condition (\cref{def:diversity}). Let $\Lmat, \Lmat'$ be defined as in \cref{theorem:identifiability}. Let $\tilde{\A}  \coloneqq  \Lmat^{-\top} \mathbf{S}^{-1} \mathbf{S}'\Lmat'^{\top}$ and $\h_{\f}(\x)  \coloneqq  \Lmat^{-\top} \mathbf{S}^{-1}\epsilonb_{y} (\x)$, where the $i$-th entry of $\epsilonb_{y}(\x)$ is given by
\[ %
    \epsilon_{y_i}(\x) = \frac{
        \log p_{\vf, \vg}(y_i|\x) - \log  p_{\vf, \vg}(y_0|\x)
    }{
        \psi_{\vx}(y_i; \, p_{\vf, \vg})
    }
    - \frac{
        \log p_{\vf', \vg'}(y_i|\x) - \log p_{\vf', \vg'}(y_0|\x)
    }{
        \psi_{\vx}(y_i; \, p_{\vf', \vg'})
    } \, .
    \label{eq:epsilon-error-term}
\] 
Then, we have
    \begin{align}
        \f(\x) = \tilde{\A} \f'(\x) + \h_{\f}(\x) \, .
    \end{align}    
\end{lemma}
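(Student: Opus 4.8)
The plan is to reduce \cref{lemma:weighted_close_identifiability_analysis} to a short linear-algebra identity, exploiting the log-linear form of \cref{eq:roeder_model} and the invertibility granted by \cref{def:diversity}. Fix the pivots $\x_0,y_0$ together with labels $y_1,\dots,y_M$ witnessing diversity for $(\f,\g)$, so that $\Lmat$ (with columns $\g_0(y_1),\dots,\g_0(y_M)$) is invertible; let $\Lmat'$ collect the columns $\g_0'(y_1),\dots,\g_0'(y_M)$ for the \emph{same} labels. Since the partition function cancels in any log-odds difference, \cref{eq:roeder_model} gives
\[
\log p_{\f,\g}(y_i\mid\x)-\log p_{\f,\g}(y_0\mid\x)=\f(\x)^\top\g_0(y_i),
\]
and stacking over $i=1,\dots,M$ yields the vector identity $\big(\log p_{\f,\g}(y_i\mid\x)-\log p_{\f,\g}(y_0\mid\x)\big)_{i=1}^{M}=\Lmat^\top\f(\x)$; the analogous identity holds for $(\f',\g')$ with $\Lmat'^\top\f'(\x)$.

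Next I would check that the diagonal scalings are nondegenerate. From the definition, $\psi_{\x}(y_i;p_{\f,\g})^2=\mathrm{Var}_{\x}[\f(\x)^\top\g_0(y_i)]$; were this zero, $\f(\x)^\top\g_0(y_i)$ would be constant in $\x$, and since the displaced embeddings $\{\f_0(\x_j)\}_{j=1}^{M}$ span $\R^M$ by diversity, this forces $\g_0(y_i)=0$, contradicting the linear independence of the columns of $\Lmat$. Hence $\psi_{\x}(y_i;p_{\f,\g})>0$ for all $i$, so $\Starget$ is well-defined, diagonal, and invertible; applying the same argument to $(\f',\g')$ shows the diagonal matrix $\Starget'$ with entries $S'_{ii}=\psi_{\x}(y_i;p_{\f',\g'})^{-1}$ is likewise well-defined and invertible.

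Now I would assemble the error term. Collecting entries, the definition of $\epsilonb_{y}(\x)$ reads, in vector form,
\[
\epsilonb_{y}(\x)=\Starget\,\Lmat^\top\f(\x)-\Starget'\,\Lmat'^\top\f'(\x).
\]
Because $\Starget\Lmat^\top$ is invertible with inverse $\Lmat^{-\top}\Starget^{-1}$, solving for $\f(\x)$ gives
\[
\f(\x)=\Lmat^{-\top}\Starget^{-1}\Starget'\,\Lmat'^\top\f'(\x)+\Lmat^{-\top}\Starget^{-1}\epsilonb_{y}(\x)=\tilde{\A}\,\f'(\x)+\h_{\f}(\x),
\]
which is the claimed identity. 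The unembedding counterpart stated in the appendix follows by the symmetric argument, interchanging the roles of inputs and outputs and using the $\A\mapsto\A^{-\top}$ relation of \cref{theorem:identifiability}.

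I do not anticipate a substantive obstacle: beyond \cref{theorem:identifiability} the lemma is essentially bookkeeping. The one point that needs genuine care is the invertibility accounting — choosing a single label set $y_1,\dots,y_M$ that keeps $\Lmat$ invertible while ensuring no $\psi_{\x}(y_i;\cdot)$ degenerates for \emph{both} models — and this is exactly where the diversity hypotheses enter rather than pure algebra; when $|\calY|=M+1$ it is automatic (each non-pivot displaced unembedding must be linearly independent for diversity to hold), and in general one should verify that a common witnessing set can be selected.
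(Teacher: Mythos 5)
Your proof is correct and takes essentially the same approach as the paper's: both read off $\epsilonb_y(\x)=\mathbf{S}\Lmat^\top\f(\x)-\mathbf{S}'\Lmat'^\top\f'(\x)$ from the log-odds identity and then invert the diagonal scaling and $\Lmat^\top$ to solve for $\f(\x)$. Your nondegeneracy check for $\psi_{\x}(y_i;\cdot)$ is slightly soft---it implicitly needs the diversity-witness points to lie in the support of $p_\calD$, which \cref{def:diversity} alone does not guarantee and which is really what \cref{assu:probabilities-that-welike} supplies---but the paper's own proof also glosses over this point.
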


\begin{remark}
    In the special case where the two models $(\vf, \vg), (\vf', \vg') \in \Theta$ have the same distribution, the error term in \cref{eq:epsilon-error-term} vanishes and we recover $(\vf, \vg) \sim_L (\vf', \vg')$
    (see \cref{corollary:connection_to_identifiability}).
\end{remark}

We note that the error term in \cref{lemma:weighted_close_identifiability_analysis} 
depends entirely on differences of log probabilities. We also see that if $\epsilon_{y_i}(\x)$ is a constant (or equivalently, $\mathrm{Var}[\epsilon_{y_i}(\x)]=0$), the embeddings will be invertible linear transformations of each other. This suggests that if we want a distance between distributions which is related to the similarity of the embeddings, we can measure how far this error is from being a constant. We therefore define a distance which includes these weighted differences of log-likelihoods. 

To proceed, we restrict our analysis to a subset of distributions that satisfy the following assumption:
\begin{assumption}
    \label{assu:probabilities-that-welike}
    We consider probability distributions
    $p$ such that for sets $\calX_{\mathrm{LLV}} \subset \calX$ and $\calY_{\mathrm{LLV}} \subset \calY$ containing all labels except one, the following conditions are satisfied: 
    (1) For all $\vx \in \calX_{\mathrm{LLV}}$ we have that 
    $\log p(y| \x) - \log p(y| \x_0)$ %
    is not constant in $y$ and $p_\calD(\vx) > 0$; 
    and (2)
    For all $y \in\calY_{\mathrm{LLV}}$ we have that $\log p(y| \x) - \log p(y_0| \x)$ %
    is not constant in $\vx$.
\end{assumption}

Note that this assumption guarantees that, for any such $p$, the terms $\psi_{\x}(y_i; p)$, %
and $\psi_{y}(\x_j; p)$ %
are non-zero for $\x_j \in \calX_{\mathrm{LLV}} $ and $y_i \in \calY_{\mathrm{LLV}}$. Under \cref{assu:probabilities-that-welike}, we exclude those probability distributions that assign the same distributions over the labels for all inputs, that is, we need at least one input to result in a different distribution. We also exclude those distributions which assign equal probability to two or more labels for all inputs. 

\begin{restatable}[Log-likelihood variance distance between distributions]{definition}{llvdistance}
    \label{def:d_prob}
    {For any two probability distributions $p, p'$ 
    for which there exists a common choice of $\calX_{\mathrm{LLV}} \subset \calX$ and $\calY_{\mathrm{LLV}} \subset \calY$ 
    such that they both satisfy \cref{assu:probabilities-that-welike}},
    for a fixed $\lambda \in \mathbb{R}^+$, 
    and by considering the following terms: 
    \begin{align*}
        t_1 & \coloneqq  \max_{y \in \calY_{\mathrm{LLV}}\setminus \{ y_0\}}  \left\{ 
            \sqrt{
                \mathrm{Var}_{\x} \left[ 
                    \frac{\log p(y| \x)}{\psi_{\x}(y; p)} - \frac{\log p'(y| \x)}{\psi_{\x}(y; p')} 
                \right]}, 
            \sqrt{
                \mathrm{Var}_{\x} \left[ 
                    \frac{\log p(y_0| \x)}{\psi_{\x}(y; p)} - \frac{\log p'(y_0| \x)}{\psi_{\x}(y; p')} 
                \right]} 
        \right\} 
        \\
        t_2 & \coloneqq  \max_{\vx \in \calX_{\mathrm{LLV}}\setminus \{ \x_0\}} \left\{ 
            \sqrt{
                \mathrm{Var}_{y} \left[ 
                    \frac{\log p(y| \x) }{\psi_{y}(\x; p)} - \frac{\log p'(y| \x)}{\psi_{y}(\x; p')} 
                \right]}, 
                \sqrt{
                \mathrm{Var}_{y} \left[ 
                    \frac{\log p(y| \x_0) }{\psi_{y}(\x; p)} - \frac{\log p'(y| \x_0)}{\psi_{y}(\x; p')} 
                \right]}
        \right\} 
        \\
        &
        \quad
        t_3 \coloneqq   \max_{y \in \calY_{\mathrm{LLV}}\setminus \{ y_0\}}  \left\vert \psi_{\x}(y; p) - \psi_{\x}(y ; p') \right\vert, %
        \quad
        t_4 \coloneqq  \max_{\vx \in \calX_{\mathrm{LLV}}\setminus \{ \x_0\}}  \left\vert \psi_{y}(\x; p) - \psi_{y}(\x; p') \right\vert  \, ,      
    \end{align*}
    the {log-likelihood variance (LLV) distance} between $p$ and $p'$ is given by
    \begin{align}
    d^\lambda_{\mathrm{LLV}}(p, p') & \coloneqq  \max \left\{ t_1, t_2, \lambda t_3, \lambda t_4 \right \} \, .
    \end{align}
\end{restatable}

In \cref{app:dist_prob}, we show that $d^\lambda_{\mathrm{LLV}}$ is a distance metric between sets of conditional probability distributions.\footnote{Specifically, $d^\lambda_{\mathrm{LLV}}(p,p')$ is non-negative; zero iff $p=p'$, symmetric; and it satisfies the triangle inequality.}
The non-zero weighting constant $\lambda$ is introduced because, as we will show in \cref{theorem:main_bounds_d_prob}, $t_1$ and $t_2$ can be used alone to bound the similarity between model representations, 
but $t_3$ and $t_4$ need also to be considered to make $d^\lambda_{\mathrm{LLV}}$ a distance metric. In our experiments, we set $\lambda$ to a small non-zero value, so that the $t_1$ and $t_2$ terms dominate (see \cref{app:implementation_dist_prob} for details). 

Note that the log-likelihood variance distance, $d^\lambda_{\mathrm{LLV}}$, requires a choice of input, $\calX_{\mathrm{LLV}}$, and label, $\calY_{\mathrm{LLV}}$, sets, and the exact value of the distance depends on this choice when the distributions are not equal. However, $d^\lambda_{\mathrm{LLV}}$ is a metric for any choice satisfying \cref{assu:probabilities-that-welike}. Therefore in the experiments, we draw a sample of possible sets and choose the ones which give the smallest value.

\subsection{A Dissimilarity Measure Between Representations}
\label{sec:d_svd}
Having defined a distance between distributions, we turn to a distance between representations which is related to invertible linear transformations. 
We define a dissimilarity measure based on the partial least squares (PLS) approach called PLS-SVD \citep{sampson1989neurobehavioral,streissguth1993enduring}. 
For two random vectors $\vz,\vw$, let $\Smat_{\vz \vw}$ denote the covariance matrix whose $(i, j)$-th entry is
\begin{align}
    (\Smat_{\vz \vw})_{ij} = \Cov_{\vz \vw}[z_i, w_j] \coloneqq \Ex_{(\vz,\vw) \sim p_{\vz, \vw}}\left[ (z_i - \Ex_{\vz \sim p_{\vz}}[z_i])(w_j - \Ex_{\vw \sim p_{\vw}}[w_j]) \right] \, ,
\end{align}
where $p_{\vz,\vw}$ is the joint distribution of $\vz,\vw$, and $p_{\vz},p_{\vw}$ are the respective marginals.
PLS-SVD seeks pairs of unit‑length vectors $\vu_\ell \in \R^{d_Z}$ and $\vv_\ell \in \R^{d_W}$ ($\ell=1,\dots,r$) that maximize the covariance between the one‑dimensional projections $\vu_\ell^\top \vz$ and $\vv_\ell^\top \vw$, subject to mutual orthogonality of successive directions.
This is equivalent to finding the left and right singular vectors of $\Smat_{\vz \vw}$ (more about PLS-SVD in \cref{app:pls}). 
Leveraging this procedure by PLS-SVD, we introduce a similarity and a dissimilarity measure:
\begin{restatable}[PLS SVD distance between vectors]{definition}{plssvddistance}
    \label{def:m_SVD_dist_rep_main}
    Let $\vz, \vw$ be two $M$-dimensional random vectors, and define $\vz', \vw'$ by standardizing their components: $z_i' = (z_i - \Ex[z_i])/\operatorname{std}(z_i)$, $w_i' = (w_i - \Ex[w_i])/\operatorname{std}(w_i)$. Let $\{\vu_i\}_{i=1}^M$ and $\{\vv_i\}_{i=1}^M$ be the left and right singular vectors of the cross-covariance matrix $\Smat_{\vz' \vw'}$. We define
    \[  \textstyle
        m_\mathrm{SVD}(\vz, \vw) \coloneqq \frac{1}{M} \sum_{i = 1}^M \Cov_{\vz' \vw'}[\vu_i^{\top} \vz', \vv_i^{\top} \vw'], 
        \quad
        d_{\mathrm{SVD}}(\vz, \vw) \coloneqq 1 - m_\mathrm{SVD}(\vz, \vw) \ . \label{eq:dist_rep}
    \]
\end{restatable}
Since $m_\mathrm{SVD}(\vz, \vw) \leq m_\mathrm{SVD}(\vz, \vz) = 1$, it follows that $d_{\mathrm{SVD}}(\vz, \vw) \geq 0$ for all random vectors $\vz, \vw$.
We also have that the dissimilarity measure is invariant to rotation followed by dimension-wise scaling (see \cref{app:pls_svd_dissim_measure}): 
if $m_\mathrm{SVD}(\vz, \vw) = 1$ or, equivalently, $d_{\mathrm{SVD}}(\vz, \vw) = 0$, then there exist an orthonormal matrix $\vR$ 
and diagonal matrices $\vS, \vS'$ scaling $\vz, \vw$ to unit variance 
such that $\vz = \vS^{-1} \vR \vS' \vw$.

With this distance between vectors, we can define a dissimilarity between representations for models from our model family by seeing the embeddings and unembeddings as random vectors. 

\begin{restatable}[Representational dissimilarity measure $d_{\vf,\vg}$]{definition}{repdistance}
    \label{def:model_rep_dissim}
    Let $(\vf, \vg), (\vf', \vg') \in \Theta$. Let $\Lmat, \Lmat'$ be defined as in \cref{theorem:identifiability}. Let $\Nmat$ (resp. $\Nmat'$) be the matrix with columns $\f_0(\x)$ (resp. $\f'_0(\x)$). The representational dissimilarity measure $d_{\vf,\vg}$ is defined as follows: 
    \begin{align}
        d_{\vf,\vg}((\vf,\vg), (\vf',\vg'))  \coloneqq  \max \left\{
            d_{\mathrm{SVD}}(\Lmat^{\top}\f(\x), \Lmat^{'\top}\f'(\x)), 
            d_{\mathrm{SVD}}(\Nmat^{\top}\g(y), \Nmat^{'\top}\g'(y))  
        \right\}\,.
    \end{align}
\end{restatable}

\subsection{Bounding Representation Dissimilarity via Distributional Distance}
\label{sec:close_to_identifiability_result}

We now prove that it is possible to relate a bound on the distance metric $d^\lambda_{\mathrm{LLV}}$ between model distributions (\cref{def:d_prob}) to a bound on the dissimilarity measure $d_{\vf,\vg}$ between model representations (\cref{def:model_rep_dissim}). For the result below,  We denote with $\vz_1 \coloneqq \Lmat^{\top}\f(\x)$, $\vz_2 \coloneqq \Lmat^{'\top}\f'(\x)$, $\vw_1 \coloneqq \Nmat^{\top}\g(y)$ and $\vw_2 \coloneqq \Nmat^{'\top}\g'(y)$ (proof in \cref{app:prob_dist_to_reps_result_proof}). 

\begin{restatable}{theorem}{mainbounds}
    \label{theorem:main_bounds_d_prob}
    Let $(\vf, \vg), (\vf', \vg') \in \Theta$ 
    be two models such that:
    (1) There exist $\calX_{\mathrm{LLV}} \subset \calX$ and $\calY_{\mathrm{LLV}} \subset \calY$, consisting of a pivot point and all labels but one, such that
    all $\Lmat, \Lmat'$ and $\Nmat, \Nmat'$ matrices constructed from these sets are invertible;\footnote{This is slightly stronger than the diversity condition (\cref{def:diversity}) in the sense that we need diversity to hold for all sets using labels from $\calY_{\mathrm{LLV}}$ and the chosen pivot point.} 
    (2) Both $p_{\vf, \vg}$ and $p_{\vf', \vg'}$ satisfy \cref{assu:probabilities-that-welike} for $\calX_\mathrm{LLV}$ and %
     $\calY_{\mathrm{LLV}}$; %
    (3)  The covariance matrices $\Smat_{\vz_1 \vz_1}, \Smat_{\vw_1 \vw_1}$ and the cross-covariance matrices $\Smat_{\vz_1 \vz_2}, \Smat_{\vw_1 \vw_2}$ %
    are non-singular.
    Then, for any weighting constant $\lambda \in \bbR^+$, we have
    \begin{align}
        d^\lambda_{\mathrm{LLV}}(p_{\vf, \vg}, p_{\vf', \vg'}) \leq \epsilon 
        \implies 
        d_{\vf,\vg}((\vf,\vg), (\vf',\vg'))  \leq 2M\epsilon \ .
    \end{align}
\end{restatable}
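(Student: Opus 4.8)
The plan is to push everything through \cref{lemma:weighted_close_identifiability_analysis} after rewriting the representation‑space vectors $\vz_1,\vz_2,\vw_1,\vw_2$ in terms of log‑likelihood differences, and then to run a perturbation estimate for the nuclear norm of a cross‑covariance matrix.

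First I would observe that the normalizing constant in \cref{eq:roeder_model} cancels in differences: for any model $(\f,\g)$ one has $\log p_{\f,\g}(y_i\mid\x)-\log p_{\f,\g}(y_0\mid\x)=\f(\x)^\top\g_0(y_i)$, which is exactly the $i$-th coordinate of $\vz_1=\Lmat^{\top}\f(\x)$. By the definition of $\psi_{\x}$, that coordinate has standard deviation $\psi_{\x}(y_i;p_{\f,\g})$ over $\x\sim p_\calD$, so standardizing $\vz_1$ coordinatewise coincides with applying the diagonal matrix $\mathbf{S}$ of \cref{lemma:weighted_close_identifiability_analysis} and centering; likewise for $\vz_2$ with $\mathbf{S}'$ and $p_{\f',\g'}$. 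Multiplying the identity $\f(\x)=\tilde{\A}\f'(\x)+\h_{\f}(\x)$ of the lemma by $\Lmat^{\top}$ (or computing directly) then shows the standardized vectors satisfy $\vz_1'-\vz_2'=\epsilonb_{y}(\x)-\Ex_{\x}[\epsilonb_{y}(\x)]$, with $\epsilonb_{y}$ the error term of the lemma.

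Next I would bound this error coordinatewise. Each $\epsilon_{y_i}(\x)$ is a difference of two terms whose standard deviations over $\x$ are, by inspection, each at most $t_1$, so by the triangle inequality for standard deviation $\sqrt{\mathrm{Var}_{\x}[\epsilon_{y_i}(\x)]}\le 2t_1$; and $d^\lambda_{\mathrm{LLV}}(p_{\f,\g},p_{\f',\g'})\le\epsilon$ forces $t_1\le\epsilon$, hence $\mathrm{Var}_{\x}[(\vz_1'-\vz_2')_i]\le(2\epsilon)^2$ for every $i$. Then I would write $m_\mathrm{SVD}(\vz_1,\vz_2)=\tfrac1M\|\Smat_{\vz_1'\vz_2'}\|_*$ (the sum of singular values of the cross‑covariance matrix of the standardized vectors) and split $\Smat_{\vz_1'\vz_2'}=\Smat_{\vz_2'\vz_2'}+\Cov_{\x}[\vz_1'-\vz_2',\vz_2']$. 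Since $\Smat_{\vz_2'\vz_2'}$ is a correlation matrix, $\|\Smat_{\vz_2'\vz_2'}\|_*=\operatorname{tr}(\Smat_{\vz_2'\vz_2'})=M$; Cauchy--Schwarz bounds every entry of $E\coloneqq\Cov_{\x}[\vz_1'-\vz_2',\vz_2']$ by $2\epsilon\cdot 1$, so $\|E\|_F\le 2M\epsilon$, $\sigma_{\max}(E)\le\|E\|_F$, and $\|E\|_*\le M\,\sigma_{\max}(E)\le 2M^2\epsilon$. The triangle inequality for the nuclear norm then gives $\|\Smat_{\vz_1'\vz_2'}\|_*\ge M-2M^2\epsilon$, i.e.\ $d_{\mathrm{SVD}}(\Lmat^{\top}\f(\x),\Lmat^{'\top}\f'(\x))\le 2M\epsilon$.

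Finally, the unembedding bound should follow from the mirror‑image argument, swapping $\x\leftrightarrow y$, $\calX_\mathrm{LLV}\leftrightarrow\calY_\mathrm{LLV}$, $\psi_{\x}\leftrightarrow\psi_{y}$, $t_1\leftrightarrow t_2$, and using the companion unembedding relation in the full version of \cref{lemma:weighted_close_identifiability_analysis}. I expect this last step to be the main obstacle: unlike $(\Lmat^{\top}\f(\x))_i$, the coordinate $(\Nmat^{\top}\g(y))_j=\f_0(\x_j)^\top\g(y)$ equals $\log p(y\mid\x_j)-\log p(y\mid\x_0)$ only up to the additive term $\log Z(\x_j)-\log Z(\x_0)$, which is constant in $y$; one must check carefully that this constant is annihilated both by the coordinatewise centering in the standardization and by the $\mathrm{Var}_{y}$ and $\Cov_{y}$ taken over $y\sim\mathrm{Unif}(\calY)$, so that $\vw_1'-\vw_2'$ is again a centered $\psi_{y}$‑weighted error with coordinatewise variance at most $(2t_2)^2\le(2\epsilon)^2$, after which the nuclear‑norm estimate applies verbatim to give $d_{\mathrm{SVD}}(\Nmat^{\top}\g(y),\Nmat^{'\top}\g'(y))\le 2M\epsilon$. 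Conditions (1)--(3) of the statement, together with \cref{assu:probabilities-that-welike}, are exactly what make $\Lmat,\Lmat',\Nmat,\Nmat'$ and all the correlation and cross‑covariance matrices above well‑defined and of the rank needed for the standardizations and singular vectors used by $m_\mathrm{SVD}$.
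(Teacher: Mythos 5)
Your proof is correct and reaches the same $2M\epsilon$ bound, but the perturbation estimate is run by a genuinely different route. The paper first proves a standalone lemma (\cref{lemma:bound_m_SVD}) that lower-bounds $m_{\mathrm{SVD}}$ in terms of $\sum_l\mathrm{Var}[z_l'-w_l']$, and that lemma rests on Weyl's inequality applied to singular values of $\Smat_{\vz'\vz'}$ and $\Smat_{\vz'\vw'}$, with $\Emat=\Smat_{\vz'\vz'}-\Smat_{\vz'\vw'}$, $\|\Emat\|_s\le\|\Emat\|_F$, and the entrywise Cauchy--Schwarz bound on $\Emat$. You instead inline the perturbation argument, decompose $\Smat_{\vz_1'\vz_2'}=\Smat_{\vz_2'\vz_2'}+\Cov[\vz_1'-\vz_2',\vz_2']$ (perturbing around the \emph{second} vector's auto-covariance rather than the first), identify $m_{\mathrm{SVD}}$ with $\tfrac1M\|\Smat_{\vz_1'\vz_2'}\|_*$, and use the reverse triangle inequality for the nuclear norm together with $\|\Emat\|_*\le M\sigma_{\max}(\Emat)\le M\|\Emat\|_F$. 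Your chain of norm inequalities is slightly more wasteful than necessary (using $\|\Emat\|_*\le\sqrt{M}\,\|\Emat\|_F$ would already give a tighter $2\sqrt{M}\epsilon$), but both routes produce the stated $2M\epsilon$. Your coordinatewise bound $\sqrt{\mathrm{Var}_\x[\epsilon_{y_i}]}\le 2t_1$ via the triangle inequality for standard deviations is also a more economical packaging of what the paper spells out by expanding the variance with the covariance inequality. The one point you flagged for the unembedding case---that $(\Nmat^\top\g(y))_j$ differs from $\log p(y\mid\x_j)-\log p(y\mid\x_0)$ by the $y$-independent shift $\log Z(\x_j)-\log Z(\x_0)$---is indeed exactly why the paper can equate variances over $y$ after centering, and your check that this constant is annihilated by both the coordinatewise centering and the $\mathrm{Var}_y$/$\Cov_y$ is the right resolution; it is a genuine step that the paper glosses over. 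Overall, the argument is sound; the main tradeoff is that the paper's Weyl-based lemma is reusable (and is also used to derive \cref{prop:msvd_xx_eq_1}), whereas your nuclear-norm approach is more self-contained for this one theorem.
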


\begin{remark}
    \label{remark:invariances}
    It can be shown (\cref{lemma:no-worries-with-d-svd}) that $d_{\mathrm{SVD}}(\vL^\top \vf(\vx), {\vL'}^\top \vf' (\vx))$ remains constant for models that are linearly equivalent to the members in the expression. That is, for any $(\vf^*, \vg^*) \in \Theta$, such that $ (\vf^*, \vg^*) \sim_L (\vf, \vg)$, %
    we have that
    $
        d_{\mathrm{SVD}}(\Lmat^{\top}\f(\x), \Lmat^{'\top}\f'(\x)) 
        = d_{\mathrm{SVD}}({\Lmat^*}^{\top}\f^*(\x), \Lmat^{'\top}\f'(\x)) 
    $.
    A similar argument also holds for $d_{\mathrm{SVD}}(\Nmat^{\top}\g(y), \Nmat^{'\top}\g'(y))$.
\end{remark}

\cref{theorem:main_bounds_d_prob} shows that for the measures $d^\lambda_\mathrm{LLV}$ and $d_{\vf,\vg}$, closeness in distribution (i.e., small values of $d^\lambda_\mathrm{LLV}$) bounds representational dissimilarity (as measured by $d_{\vf,\vg}$). 
 
Notice that in \cref{theorem:main_bounds_d_prob}, 
we can freely choose any set of inputs $\calX_\mathrm{LLV} \subset \calX$ as long as they satisfy (1) and (2). 
Hence, if there is a choice making $t_2$ in \cref{def:d_prob} as small as possible, it will also make $d_{\mathrm{SVD}}(\Nmat^{\top}\g(y), \Nmat^{'\top}\g'(y))$ small. 
There is also some flexibility for $\calY_\mathrm{LLV} \subset \calY$, where a careful choice of pivot point $y_0$ and the left-out label can make the term $t_1$ in \cref{def:d_prob} small and, as a consequence, $d_{\mathrm{SVD}}(\Lmat^{\top}\f(\vx), \Lmat^{'\top}\f'(\vx))$ also small.

\section{Experiments}
\label{sec:experiments}
In this section, we present our key experimental findings: (1) For models trained for classification on CIFAR-10~\citep{krizhevsky2009learning}, we observe cases of similarly well-performing models with highly dissimilar representations, arising from a mechanism analogous to the constructive proof of~\cref{theorem:small_KL_dissimilar_reps} (\cref{sec:permutations_trained_models}); %
(2) 
On synthetic data, training wider neural networks results in a reduction of both the mean and variance of $d^{\lambda}_{\mathrm{LLV}}$ between models and leads to higher representational similarity (\cref{sec:wider_networks_more_similar}).
Additional experiments illustrating the phenomena in \cref{theorem:small_KL_dissimilar_reps} and visualizing the bound in \cref{theorem:main_bounds_d_prob} are summarized in \cref{sec:additional_experiments}. For all experiments we use $\lambda = 10^{-5}$. See \cref{app:experimental_details} for implementation details. Code can be found on github\footnote{\href{https://github.com/bemigini/close-dist-rep-sim}{\nolinkurl{github.com/bemigini/close-dist-rep-sim}}, DOI: \href{https://zenodo.org/records/17250818}{\nolinkurl{10.5281/zenodo.17249361}.}}.   

\subsection{Dissimilar Representations in CIFAR-10 Models with Similar Performance}
\label{sec:permutations_trained_models}
\paragraph{Experimental setup.}
We trained classification models on CIFAR-10 \citep{krizhevsky2009learning} with two-dimensional embedding and unembedding representations. The embedding network is a ResNet18 \citep{he2016deep},  and the unembedding network consists of three fully connected layers of width 128, followed by the final representation layer.

\paragraph{Results.}
Embeddings of images with the same labels form clusters, and certain label pairs---such as ``truck'' and ``automobile''---consistently appear as neighbors across retrainings (See \cref{app:all_cifar_representations}).
However, some clusters are permuted: \cref{fig:cifar10_embeddings_comparison} {\em (Left)} shows 
two retrainings where, in the first model, the ``truck'' cluster lies between ``automobile'' and ``ship'', while in the second it lies between ``automobile'' and ``horse''.
This mirrors the construction in 
\cref{theorem:small_KL_dissimilar_reps}, where models close in KL divergence have dissimilar representations. 
The models in~\cref{fig:cifar10_embeddings_comparison} {\em (Left)} have 
$d_{\mathrm{KL}}$ 
of $1.13$ (0$\rightarrow$1) and $1.12$ (1$\rightarrow$0), similar test losses ($1.19$ and $1.18$), but a large $d^\lambda_{\mathrm{LLV}}$ ($\sim 1.55$) and small $m_{\mathrm{CCA}}$ ($0.55$).

\begin{figure}[t!]    
    \centering
    \begin{minipage}{0.5\textwidth}
        \centering
        \includegraphics[height=3.2cm]{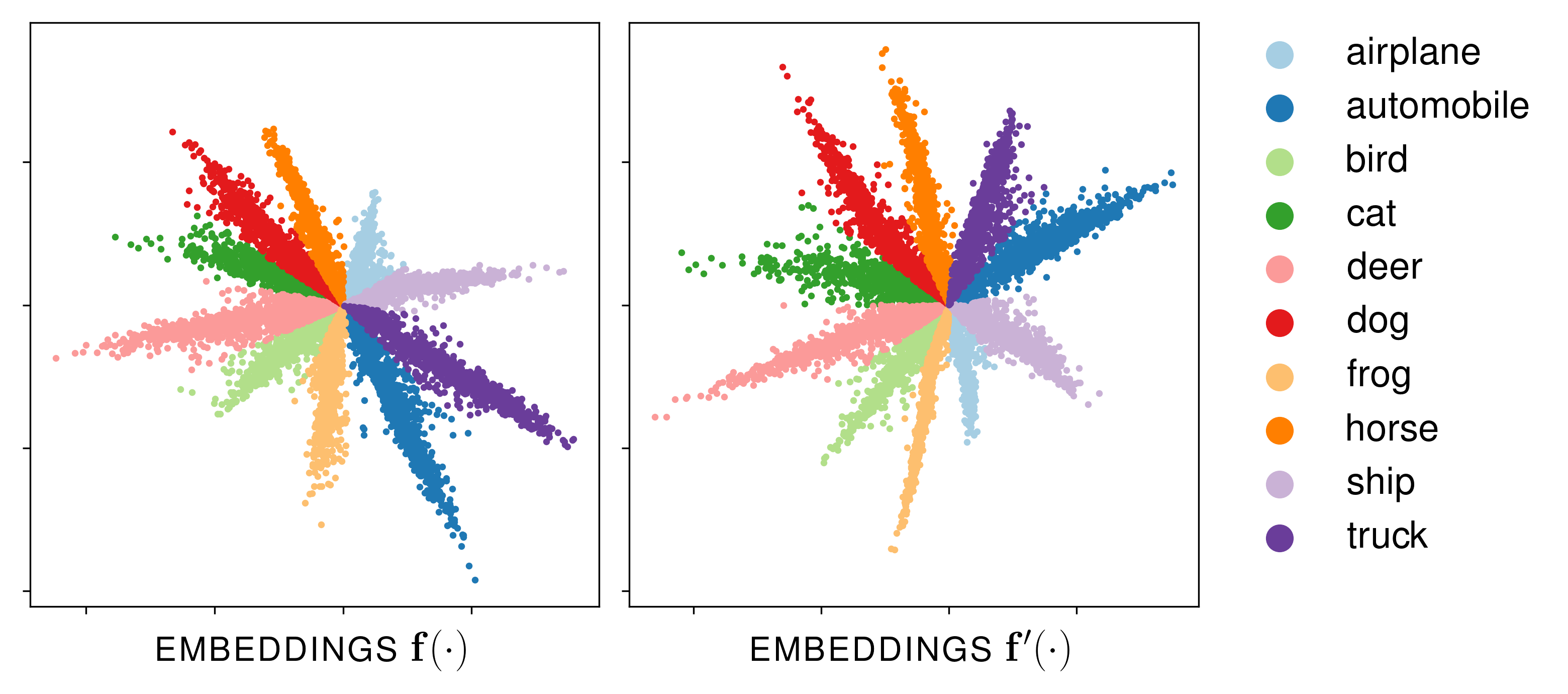}
    \end{minipage}%
    \begin{minipage}{0.5\textwidth}
        \centering
        \includegraphics[height=3.2cm]{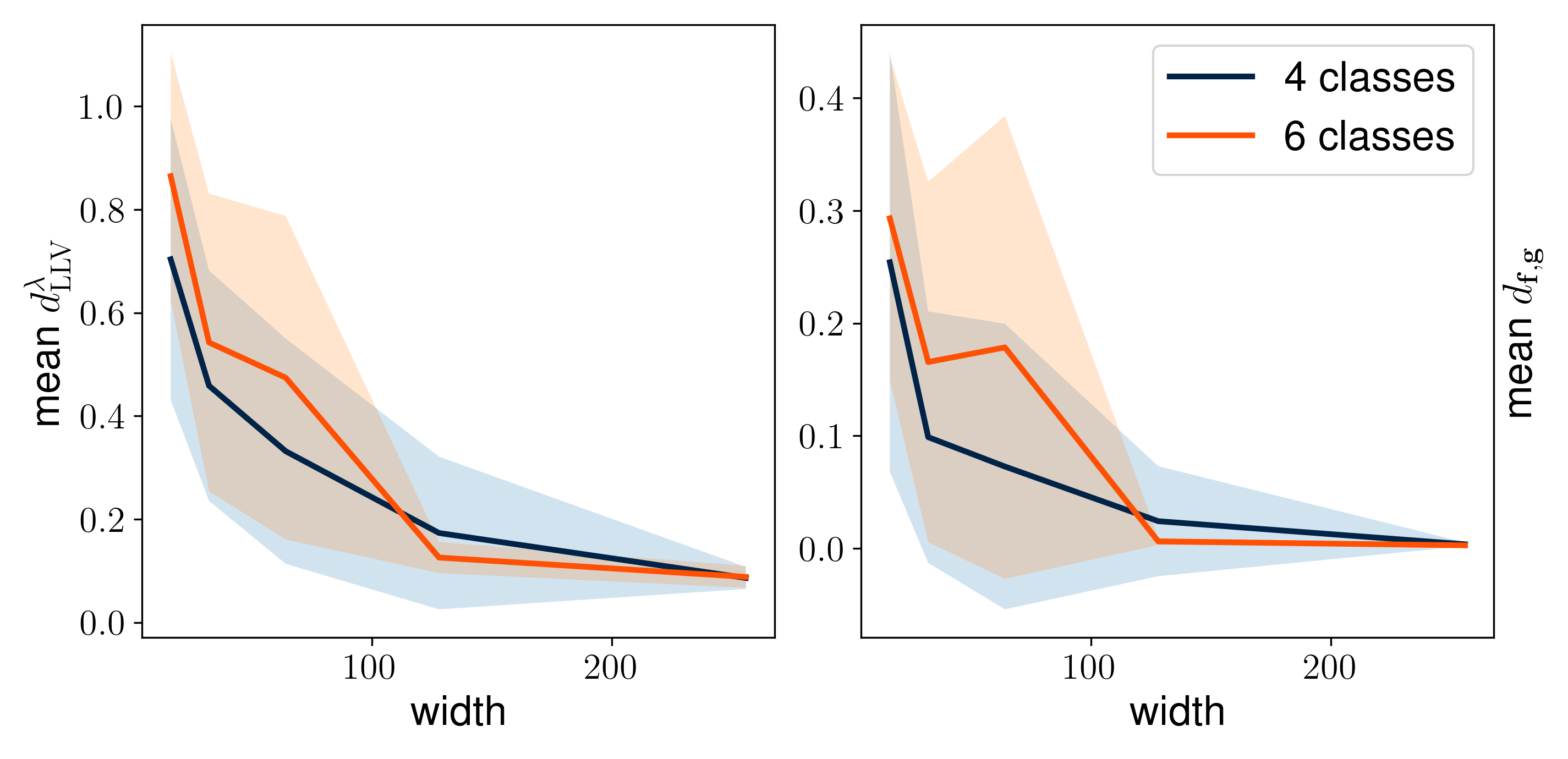}
    \end{minipage}
    \caption{{\em (Left)} Embedding representations of two models trained on CIFAR-10. Representations for some of the labels are permuted, and $m_{\mathrm{CCA}} (\f(\x), \f'(\x)) = 0.55$. {\em (Right)} Mean $d^\lambda_{\mathrm{LLV}}$ and $d_{\f,\g}$ vs network width. Shaded area is standard deviation. Both mean and standard deviation decrease as the network width increases.}
    \label{fig:cifar10_embeddings_comparison}
\end{figure}

\subsection{Wider Networks Learn Closer Distributions and More Similar Representations}
\label{sec:wider_networks_more_similar}

\paragraph{Experimental setup.}
We train models on a 2D classification task with 4, 6, 10 or 18 labels. Data are samples from a 2D Gaussian ($\mu =0$, $\sigma=3$), and labels are based on the angle to the first axis, consisting of a slice of the circle and the opposite slice (see \cref{app:synthetic_data}). We train twenty random seeds of models with two-dimensional representations. Each model consists of three fully connected layers with widths 16, 32, 64, 128 or 256. We retain only models achieving over $90$\%  accuracy. For 4 classes, this yields 19 models at width 16 and 20 models for all other widths. For 6 classes, we retain 16, 17, 19, 19, and 20 models across the five widths, respectively.

\paragraph{Results.}
We find that $d^\lambda_{\mathrm{LLV}}$ and $d_{\f, \g}$ between the models trained with wider networks have smaller both mean and standard deviation. In \cref{fig:cifar10_embeddings_comparison} {\em (Right)} we show the trend for models with $4$ and $6$ classes. Similar plots for $10$ and $18$ classes can be seen in \cref{app:Wider_models_more_similar}. While the bound in \cref{theorem:main_bounds_d_prob} is only non-vacuous when $d_{\f, \g}((\f,\g), (\f', \g')) \leq 2M\epsilon < 1$ (i.e., when $ \epsilon < \nicefrac{1}{2M}$) our plots suggest that there is a broader relationship between $d_{\f, \g}$ and  $d^\lambda_{\mathrm{LLV}}$, since the values of $d^\lambda_{\mathrm{LLV}}$ and $d_{\f, \g}$ seem to be related even before the bound becomes non-vacuous.

\subsection{Visualizing Representation Dissimilarity and the Bound from \cref{theorem:main_bounds_d_prob}}
\label{sec:additional_experiments}

\setlength\intextsep{0pt}
\begin{wraptable}[16]{r}{0.40\textwidth}
    \centering
    \caption{\label{table:kl_to_zero}
        \textbf{Kullback--Leibler divergence rapidly vanishes while other measures stay constant.}
        Cells in the $d_{\mathrm{KL}}$ column are shaded red in proportion to their magnitude (darker $\Rightarrow$ larger value). As the unembedding norm $\rho$ grows, $d_{\mathrm{KL}}$ drops by almost four orders of magnitude, whereas $d^{\lambda}_{\mathrm{LLV}}$ and the maximal $d_{\mathrm{SVD}}$ between embeddings remain virtually unchanged.
    }
    \renewcommand{\arraystretch}{1.1}
    \scriptsize
    \begin{tabular}{rcccc}
        \toprule
        $\rho$ & $d_{\mathrm{KL}}$ & $d^{\lambda}_{\mathrm{LLV}}$ & $m_{\mathrm{CCA}}$ &  $\mathrm{max} d_{\mathrm{SVD}}$ \\
        \midrule
        3  & \cellcolor{red!65}0.8866 & 1.3176 & 0.0006 & 0.9996 \\
        6  & \cellcolor{red!35}0.2230 & 1.3169 & 0.0007 & 0.9998 \\
        9  & \cellcolor{red!20}0.0369 & 1.3171 & 0.0004 & 0.9995 \\
        12 & \cellcolor{red!10}0.0055 & 1.3178 & 0.0006 & 0.9998 \\
        15 & \cellcolor{red!5}0.0008 & 1.3175  & 0.0008 & 0.9999 \\
        18 & \cellcolor{red!2}0.0001 & 1.3172  & 0.0010 & 0.9998 \\
        \bottomrule
    \end{tabular}
\end{wraptable}
Following the construction in \cref{theorem:small_KL_dissimilar_reps}, 
we simulate two models with fixed embeddings and unembeddings with increasing norm. We measure the KL divergence and $d^\lambda_{\mathrm{LLV}}$ between distributions; between representations, we measure the mean canonical correlation ($m_{\mathrm{CCA}}$)~\cite{raghu2017svcca,roeder2021linear, morcos2018insights} and the maximum $d_\mathrm{SVD}$ of the embeddings over the input and label sets for $d^\lambda_{\mathrm{LLV}}$ (since this is what \cref{theorem:main_bounds_d_prob} bounds). The results are in \cref{table:kl_to_zero}: We find that as the unembeddings' norm, $\rho$, increases, $d_{\mathrm{KL}}$ decreases, while $d^\lambda_{\mathrm{LLV}}$ and $d_\mathrm{SVD}$ remain high and $m_{\mathrm{CCA}}$ remains low and almost constant.  
This aligns with the result in \cref{theorem:small_KL_dissimilar_reps}, while it displays that $d^\lambda_{\mathrm{LLV}}$ is robust to these model constructions.

Also, for these model pairs and models trained on synthetic data, we find that the bound in \cref{theorem:main_bounds_d_prob} is always satisfied (refer to \cref{app:Illustration_of_bound} for details and to \cref{fig:d_prob_vs_d_rep} for a visualization). 
However, for models trained on CIFAR-10, the distributions are not close enough for the bound to be non-vacuous.

\section{Discussion}
\label{sec:discussion_related_work}

\paragraph{Limitations.}
To the best of our knowledge, this work provides the first theoretical result that upper-bounds representational dissimilarity by a distributional distance (\cref{theorem:main_bounds_d_prob})---a relationship that~\cref{theorem:small_KL_dissimilar_reps} shows is far from obvious. 
The bound is nonetheless not tight: it grows linearly even though the empirical trend of $\max d_{\mathrm{SVD}}$ is clearly sub-linear (\cref{fig:d_prob_vs_d_rep}), and it presently requires the label set to contain every class but one. We conjecture that a tighter, potentially non-linear bound could be obtained and that $d^{\lambda}_{\mathrm{LLV}}$ would remain a valid distance when computed on a suitably \emph{diverse} subset of labels.  
The distributional distance itself (\cref{def:d_prob}) inherits the same {\em label-diversity} assumption (\cref{def:diversity}), which prevents its applicability to the whole model family of \cref{eq:roeder_model}: diversity is more likely to hold in language models, but may fail in standard image classifiers—especially when the number of classes is smaller than the representation dimension~\citep{roeder2021linear}.
It was recently proved by \citet{marconato2024all} that, when relaxing the diversity assumption, the model family in \cref{eq:roeder_model} can be identified up to an {\em extended-linear} equivalence relation.
Extending our theory to that setting would enable comparing models with different representation dimension~\cite{marconato2024all}.

\paragraph{Alternative similarity measures.} %
The invariances of our dissimilarity measure, $d_{\vf,\vg}$, comprise a subset of all invertible linear transforms and always consider embeddings and unembeddings in combination with each other (see \cref{app:invariances_d_svd_vs_cca}). By contrast, much prior work employs CCA-based measures that are invariant to \emph{any} linear transformation~\citep{raghu2017svcca,morcos2018insights} and only measures (dis)similarity between embeddings. %
\citet{kornblith2019similarity} argue that similarity measures should be invariant only to orthogonal transformations, based on the task-relevance of relative activation scales and the invariance of gradient-descent dynamics~\cite{lecun1990second}. %
What measure best captures representational similarity is still an open question~\citep{bansal2021revisiting}, with different approaches reflecting distinct goals and trade-offs~\cite{klabunde2025similarity}. %
Our goal is not to claim our measure is \emph{the} right one, but to present a pairing of dissimilarity measure and distributional distance that enables theoretical analysis and captures key empirical aspects of representational similarity. %
Extending such guarantees to other (dis)similarity measures and distributional divergences is an important direction for future work. %

\paragraph{Do similarly retrained models learn similar representations?} This has been extensively explored in empirical studies~\citep{kornblith2019similarity, roeder2021linear, moschella2023relative, somepalli2022can, reizinger2025crossentropy, li2015convergent, wang2018towards}. 
Retraining with a different seed can result in different likelihoods, and we show that even models with near-zero KL divergence can have dissimilar internal representations.
While the existence of constructions such as the one in~\cref{theorem:small_KL_dissimilar_reps} does not imply that a model will learn them during training, 
we find it interesting that a mechanism resembling our construction actually emerged in our CIFAR-10 experiments~(\cref{sec:permutations_trained_models}), where we observe large representational dissimilarity despite close distributions in terms of KL divergence.
Such dissimilarity can arise when most output labels have negligible probability---for example, an image might be confidently classified as a ``cat'' or ``dog'', with all other classes (e.g., ``spaceship'', ``chair'') having near-zero probabilities; and in language modeling, often only few next-tokens are plausible. 
Overall, these observations suggest that similar performance alone is insufficient to ensure representational similarity, though higher network capacity may help, see \cref{sec:wider_networks_more_similar} and~\cite{roeder2021linear, somepalli2022can}. 

\paragraph{Identifiability and robustness.}
In nonlinear ICA~\citep{hyvarinen2019nonlinear,khemakhem2020ice,
gresele2020incomplete, buchholz2022function, zheng2022identifiability, sorrenson2020disentanglement, halva2020hidden}, 
disentangled
and causal representation learning~\cite{lippe2022citris,
ahuja2023interventional, %
liang2023causal, 
von2024nonparametric,
varici2024general,brehmer2022weakly,
zhang2024identifiability, yao2024unifying,xu2024sparsity,
buchholz2023learning, li2024disentangled, moran2022identifiable}
models are typically constructed so that their likelihood-maximising solutions are unique up to a fixed
(and `small') set of ambiguities, as guaranteed by identifiability~\citep{hyvarinen2019nonlinear, xi2023indeterminacy}. %
Robustness when the likelihood is only approximately maximized %
has been studied less, though %
\citet{buchholz2024robustness} analyzed isometric-mixing ICA; %
\citet{sliwa2022probing} empirically probed {\em independent mechanism analysis}~\citep{gresele2021independent}; %
and \citet{trauble2021disentangled} showed %
that independence-based disentanglement lacks robustness when true factors are correlated. %
Our results further highlight the importance of  
understanding robustness in
nonlinear representation learning.%
\footnote{%
$\varepsilon$-non-identifiability~\cite{reizinger2024understanding} is closely related: our work illustrates it for learned representations and highlights that {\em it depends on the choice of divergence}---holding in $d_{\mathrm{KL}}$ (\cref{theorem:small_KL_dissimilar_reps}) but not in $d^\lambda_{\mathrm{LLV}}$ (\cref{theorem:main_bounds_d_prob}).%
}
In particular, our~\cref{cor:small_loss_dissimilar_reps} suggests that relying solely on identifiability results---such as those by~\citet{reizinger2025crossentropy}---may be insufficient to guarantee that similar representations are practically attainable when optimizing the cross-entropy loss: additional assumptions may be needed. %

\paragraph{Conclusion and Outlook.} %
We have studied the link between distributional closeness and representational similarity for the embedding and unembedding layers of our model family, focusing on different instances of the {\em same} model class. %
Extending our analysis to earlier layers would first require identifiability results for intermediate network representations, which remains an open problem in representation learning~\citep{hyvarinen2024identifiability, khemakhem2020ice}. %
In practice, learned representations often appear surprisingly robust across datasets, architectures, and training objectives~%
\cite{bansal2021revisiting, barak2024computational, %
geirhos2020surprising, kornblith2019similarity, moschella2023relative, huh2024position},
and explaining this broader robustness is an exciting direction for future work. %
By clarifying when distributional closeness does---and does not---imply representational similarity for embeddings and unembeddings within our model family, we take an initial step toward a principled theory of representational similarity.

\begin{ack}
We thank Simon Buchholz for helpful discussions and feedback.
Beatrix M. G. Nielsen was supported by the Danish Pioneer Centre for AI, DNRF grant number P1.
E.M. acknowledges support from TANGO, Grant Agreement No. 101120763, funded by the European Union. Views and opinions expressed are however those of the author(s) only and do not necessarily reflect those of the European Union or the European Health and Digital Executive Agency (HaDEA). Neither the European Union nor the granting authority can be held responsible for them. 
A.D. acknowledges funding from the Helmholtz Foundation Model Initiative.
L.G. was supported by the Danish Data Science Academy, which is funded by the Novo Nordisk Foundation (NNF21SA0069429).
\end{ack}

\bibliography{sample}

\newpage
\section*{NeurIPS Paper Checklist}

\begin{enumerate}

\item {\bf Claims}
    \item[] Question: Do the main claims made in the abstract and introduction accurately reflect the paper's contributions and scope?
    \item[] Answer: \answerYes{} %
    \item[] Justification: We list our three main claims in \cref{sec:introduction} and they are mainly supported by \cref{sec:small_kl_dissimilar_reps}, \cref{sec:distances_and_bound} and \cref{sec:experiments}. 
     \item[] Guidelines:
     \begin{itemize}
         \item The answer NA means that the abstract and introduction do not include the claims made in the paper.
         \item The abstract and/or introduction should clearly state the claims made, including the contributions made in the paper and important assumptions and limitations. A No or NA answer to this question will not be perceived well by the reviewers. 
         \item The claims made should match theoretical and experimental results, and reflect how much the results can be expected to generalize to other settings. 
         \item It is fine to include aspirational goals as motivation as long as it is clear that these goals are not attained by the paper. 
     \end{itemize}

\item {\bf Limitations}
    \item[] Question: Does the paper discuss the limitations of the work performed by the authors?
    \item[] Answer: \answerYes{} %
    \item[] Justification: We discuss limitations in \cref{sec:discussion_related_work}.
     \item[] Guidelines:
     \begin{itemize}
         \item The answer NA means that the paper has no limitation while the answer No means that the paper has limitations, but those are not discussed in the paper. 
         \item The authors are encouraged to create a separate "Limitations" section in their paper.
         \item The paper should point out any strong assumptions and how robust the results are to violations of these assumptions (e.g., independence assumptions, noiseless settings, model well-specification, asymptotic approximations only holding locally). The authors should reflect on how these assumptions might be violated in practice and what the implications would be.
         \item The authors should reflect on the scope of the claims made, e.g., if the approach was only tested on a few datasets or with a few runs. In general, empirical results often depend on implicit assumptions, which should be articulated.
         \item The authors should reflect on the factors that influence the performance of the approach. For example, a facial recognition algorithm may perform poorly when image resolution is low or images are taken in low lighting. Or a speech-to-text system might not be used reliably to provide closed captions for online lectures because it fails to handle technical jargon.
         \item The authors should discuss the computational efficiency of the proposed algorithms and how they scale with dataset size.
         \item If applicable, the authors should discuss possible limitations of their approach to address problems of privacy and fairness.
         \item While the authors might fear that complete honesty about limitations might be used by reviewers as grounds for rejection, a worse outcome might be that reviewers discover limitations that aren't acknowledged in the paper. The authors should use their best judgment and recognize that individual actions in favor of transparency play an important role in developing norms that preserve the integrity of the community. Reviewers will be specifically instructed to not penalize honesty concerning limitations.
     \end{itemize}

\item {\bf Theory assumptions and proofs}
    \item[] Question: For each theoretical result, does the paper provide the full set of assumptions and a complete (and correct) proof?
    \item[] Answer: \answerYes{} %
    \item[] Justification: Some proof ideas and sketches are in the main article. Full proofs are in the appendix due to size constraints. Proofs for \cref{sec:preliminaries} are in \cref{app:id_res_proof}, proofs for \cref{sec:small_kl_dissimilar_reps} are in \cref{app:KL_theorem_proofs} and proofs for \cref{sec:distances_and_bound} are in \cref{app:distances_and_bound}.
     \item[] Guidelines:
     \begin{itemize}
         \item The answer NA means that the paper does not include theoretical results. 
         \item All the theorems, formulas, and proofs in the paper should be numbered and cross-referenced.
         \item All assumptions should be clearly stated or referenced in the statement of any theorems.
         \item The proofs can either appear in the main paper or the supplemental material, but if they appear in the supplemental material, the authors are encouraged to provide a short proof sketch to provide intuition. 
         \item Inversely, any informal proof provided in the core of the paper should be complemented by formal proofs provided in appendix or supplemental material.
         \item Theorems and Lemmas that the proof relies upon should be properly referenced. 
     \end{itemize}

    \item {\bf Experimental result reproducibility}
    \item[] Question: Does the paper fully disclose all the information needed to reproduce the main experimental results of the paper to the extent that it affects the main claims and/or conclusions of the paper (regardless of whether the code and data are provided or not)?
    \item[] Answer: \answerYes{} %
    \item[] Justification: We give implementation details of our distance in \cref{app:implementation_dist_prob} and details on how to recreate the experiments in \cref{app:experimental_details}. All the relevant code is available at github\footnote{\href{https://github.com/bemigini/close-dist-rep-sim}{\nolinkurl{github.com/bemigini/close-dist-rep-sim}}}. 
     \item[] Guidelines:
     \begin{itemize}
         \item The answer NA means that the paper does not include experiments.
         \item If the paper includes experiments, a No answer to this question will not be perceived well by the reviewers: Making the paper reproducible is important, regardless of whether the code and data are provided or not.
         \item If the contribution is a dataset and/or model, the authors should describe the steps taken to make their results reproducible or verifiable. 
         \item Depending on the contribution, reproducibility can be accomplished in various ways. For example, if the contribution is a novel architecture, describing the architecture fully might suffice, or if the contribution is a specific model and empirical evaluation, it may be necessary to either make it possible for others to replicate the model with the same dataset, or provide access to the model. In general. releasing code and data is often one good way to accomplish this, but reproducibility can also be provided via detailed instructions for how to replicate the results, access to a hosted model (e.g., in the case of a large language model), releasing of a model checkpoint, or other means that are appropriate to the research performed.
         \item While NeurIPS does not require releasing code, the conference does require all submissions to provide some reasonable avenue for reproducibility, which may depend on the nature of the contribution. For example
         \begin{enumerate}
             \item If the contribution is primarily a new algorithm, the paper should make it clear how to reproduce that algorithm.
             \item If the contribution is primarily a new model architecture, the paper should describe the architecture clearly and fully.
             \item If the contribution is a new model (e.g., a large language model), then there should either be a way to access this model for reproducing the results or a way to reproduce the model (e.g., with an open-source dataset or instructions for how to construct the dataset).
             \item We recognize that reproducibility may be tricky in some cases, in which case authors are welcome to describe the particular way they provide for reproducibility. In the case of closed-source models, it may be that access to the model is limited in some way (e.g., to registered users), but it should be possible for other researchers to have some path to reproducing or verifying the results.
         \end{enumerate}
     \end{itemize}

\item {\bf Open access to data and code}
    \item[] Question: Does the paper provide open access to the data and code, with sufficient instructions to faithfully reproduce the main experimental results, as described in supplemental material?
    \item[] Answer: \answerYes{} %
    \item[] Justification: Code will be released on publication of the article. 
     \item[] Guidelines:
     \begin{itemize}
         \item The answer NA means that paper does not include experiments requiring code.
         \item Please see the NeurIPS code and data submission guidelines (\url{https://nips.cc/public/guides/CodeSubmissionPolicy}) for more details.
         \item While we encourage the release of code and data, we understand that this might not be possible, so “No” is an acceptable answer. Papers cannot be rejected simply for not including code, unless this is central to the contribution (e.g., for a new open-source benchmark).
         \item The instructions should contain the exact command and environment needed to run to reproduce the results. See the NeurIPS code and data submission guidelines (\url{https://nips.cc/public/guides/CodeSubmissionPolicy}) for more details.
         \item The authors should provide instructions on data access and preparation, including how to access the raw data, preprocessed data, intermediate data, and generated data, etc.
         \item The authors should provide scripts to reproduce all experimental results for the new proposed method and baselines. If only a subset of experiments are reproducible, they should state which ones are omitted from the script and why.
         \item At submission time, to preserve anonymity, the authors should release anonymized versions (if applicable).
         \item Providing as much information as possible in supplemental material (appended to the paper) is recommended, but including URLs to data and code is permitted.
     \end{itemize}

\item {\bf Experimental setting/details}
    \item[] Question: Does the paper specify all the training and test details (e.g., data splits, hyperparameters, how they were chosen, type of optimizer, etc.) necessary to understand the results?
    \item[] Answer: \answerYes{} %
    \item[] Justification: In \cref{sec:experiments} we provide the necessary details to understand the results and in \cref{app:experimental_details} and \cref{app:assets} we provide the details important for implementation. The code is also available at github, \href{https://github.com/bemigini/close-dist-rep-sim}{\nolinkurl{github.com/bemigini/close-dist-rep-sim}}. 
     \item[] Guidelines:
     \begin{itemize}
         \item The answer NA means that the paper does not include experiments.
         \item The experimental setting should be presented in the core of the paper to a level of detail that is necessary to appreciate the results and make sense of them.
         \item The full details can be provided either with the code, in appendix, or as supplemental material.
     \end{itemize}

\item {\bf Experiment statistical significance}
    \item[] Question: Does the paper report error bars suitably and correctly defined or other appropriate information about the statistical significance of the experiments?
    \item[] Answer: \answerNo{} %
    \item[] Justification:  We do not have error bars in our plots. 
     \item[] Guidelines:
     \begin{itemize}
         \item The answer NA means that the paper does not include experiments.
         \item The authors should answer "Yes" if the results are accompanied by error bars, confidence intervals, or statistical significance tests, at least for the experiments that support the main claims of the paper.
         \item The factors of variability that the error bars are capturing should be clearly stated (for example, train/test split, initialization, random drawing of some parameter, or overall run with given experimental conditions).
         \item The method for calculating the error bars should be explained (closed form formula, call to a library function, bootstrap, etc.)
         \item The assumptions made should be given (e.g., Normally distributed errors).
         \item It should be clear whether the error bar is the standard deviation or the standard error of the mean.
         \item It is OK to report 1-sigma error bars, but one should state it. The authors should preferably report a 2-sigma error bar than state that they have a 96\% CI, if the hypothesis of Normality of errors is not verified.
         \item For asymmetric distributions, the authors should be careful not to show in tables or figures symmetric error bars that would yield results that are out of range (e.g. negative error rates).
         \item If error bars are reported in tables or plots, The authors should explain in the text how they were calculated and reference the corresponding figures or tables in the text.
     \end{itemize}

\item {\bf Experiments compute resources}
    \item[] Question: For each experiment, does the paper provide sufficient information on the computer resources (type of compute workers, memory, time of execution) needed to reproduce the experiments?
    \item[] Answer: \answerYes{} %
    \item[] Justification: We report computing resources in \cref{app:computing_resources}.
     \item[] Guidelines:
     \begin{itemize}
         \item The answer NA means that the paper does not include experiments.
         \item The paper should indicate the type of compute workers CPU or GPU, internal cluster, or cloud provider, including relevant memory and storage.
         \item The paper should provide the amount of compute required for each of the individual experimental runs as well as estimate the total compute. 
         \item The paper should disclose whether the full research project required more compute than the experiments reported in the paper (e.g., preliminary or failed experiments that didn't make it into the paper). 
     \end{itemize}
    
\item {\bf Code of ethics}
    \item[] Question: Does the research conducted in the paper conform, in every respect, with the NeurIPS Code of Ethics \url{https://neurips.cc/public/EthicsGuidelines}?
    \item[] Answer: \answerYes{} %
    \item[] Justification:  This work conforms with the principles outlined in the NeurIPS Code of Ethics.
     \item[] Guidelines:
     \begin{itemize}
         \item The answer NA means that the authors have not reviewed the NeurIPS Code of Ethics.
         \item If the authors answer No, they should explain the special circumstances that require a deviation from the Code of Ethics.
         \item The authors should make sure to preserve anonymity (e.g., if there is a special consideration due to laws or regulations in their jurisdiction).
     \end{itemize}

\item {\bf Broader impacts}
    \item[] Question: Does the paper discuss both potential positive societal impacts and negative societal impacts of the work performed?
    \item[] Answer: \answerNA{} %
    \item[] Justification: Our work focuses on improving our understanding a popular class of discriminative models including autoregressive language models. We hope our work can lead to safer deployment of such models in the future, however for now this is fundamental research that has no direct societal impact.
     \item[] Guidelines:
     \begin{itemize}
         \item The answer NA means that there is no societal impact of the work performed.
         \item If the authors answer NA or No, they should explain why their work has no societal impact or why the paper does not address societal impact.
         \item Examples of negative societal impacts include potential malicious or unintended uses (e.g., disinformation, generating fake profiles, surveillance), fairness considerations (e.g., deployment of technologies that could make decisions that unfairly impact specific groups), privacy considerations, and security considerations.
         \item The conference expects that many papers will be foundational research and not tied to particular applications, let alone deployments. However, if there is a direct path to any negative applications, the authors should point it out. For example, it is legitimate to point out that an improvement in the quality of generative models could be used to generate deepfakes for disinformation. On the other hand, it is not needed to point out that a generic algorithm for optimizing neural networks could enable people to train models that generate Deepfakes faster.
         \item The authors should consider possible harms that could arise when the technology is being used as intended and functioning correctly, harms that could arise when the technology is being used as intended but gives incorrect results, and harms following from (intentional or unintentional) misuse of the technology.
         \item If there are negative societal impacts, the authors could also discuss possible mitigation strategies (e.g., gated release of models, providing defenses in addition to attacks, mechanisms for monitoring misuse, mechanisms to monitor how a system learns from feedback over time, improving the efficiency and accessibility of ML).
     \end{itemize}
    
\item {\bf Safeguards}
    \item[] Question: Does the paper describe safeguards that have been put in place for responsible release of data or models that have a high risk for misuse (e.g., pretrained language models, image generators, or scraped datasets)?
    \item[] Answer: \answerNA{} %
    \item[] Justification: We do not release anything with a high risk of misuse. 
     \item[] Guidelines:
     \begin{itemize}
         \item The answer NA means that the paper poses no such risks.
         \item Released models that have a high risk for misuse or dual-use should be released with necessary safeguards to allow for controlled use of the model, for example by requiring that users adhere to usage guidelines or restrictions to access the model or implementing safety filters. 
         \item Datasets that have been scraped from the Internet could pose safety risks. The authors should describe how they avoided releasing unsafe images.
         \item We recognize that providing effective safeguards is challenging, and many papers do not require this, but we encourage authors to take this into account and make a best faith effort.
     \end{itemize}

\item {\bf Licenses for existing assets}
    \item[] Question: Are the creators or original owners of assets (e.g., code, data, models), used in the paper, properly credited and are the license and terms of use explicitly mentioned and properly respected?
    \item[] Answer: \answerYes{} %
    \item[] Justification: We cite original creators when relevant. Available licence information is in \cref{app:assets}. 
     \item[] Guidelines:
     \begin{itemize}
         \item The answer NA means that the paper does not use existing assets.
         \item The authors should cite the original paper that produced the code package or dataset.
         \item The authors should state which version of the asset is used and, if possible, include a URL.
         \item The name of the license (e.g., CC-BY 4.0) should be included for each asset.
         \item For scraped data from a particular source (e.g., website), the copyright and terms of service of that source should be provided.
         \item If assets are released, the license, copyright information, and terms of use in the package should be provided. For popular datasets, \url{paperswithcode.com/datasets} has curated licenses for some datasets. Their licensing guide can help determine the license of a dataset.
         \item For existing datasets that are re-packaged, both the original license and the license of the derived asset (if it has changed) should be provided.
         \item If this information is not available online, the authors are encouraged to reach out to the asset's creators.
     \end{itemize}

\item {\bf New assets}
    \item[] Question: Are new assets introduced in the paper well documented and is the documentation provided alongside the assets?
    \item[] Answer: \answerYes{} %
    \item[] Justification: We describe how to generate the synthetic data used in \cref{app:synthetic_data}. Our code also includes the necessary documentation. 
     \item[] Guidelines:
     \begin{itemize}
         \item The answer NA means that the paper does not release new assets.
         \item Researchers should communicate the details of the dataset/code/model as part of their submissions via structured templates. This includes details about training, license, limitations, etc. 
         \item The paper should discuss whether and how consent was obtained from people whose asset is used.
         \item At submission time, remember to anonymize your assets (if applicable). You can either create an anonymized URL or include an anonymized zip file.
     \end{itemize}

\item {\bf Crowdsourcing and research with human subjects}
    \item[] Question: For crowdsourcing experiments and research with human subjects, does the paper include the full text of instructions given to participants and screenshots, if applicable, as well as details about compensation (if any)? 
    \item[] Answer: \answerNA{} %
    \item[] Justification: The work does not involve crowdsourcing nor research with human subjects.
     \item[] Guidelines:
     \begin{itemize}
         \item The answer NA means that the paper does not involve crowdsourcing nor research with human subjects.
         \item Including this information in the supplemental material is fine, but if the main contribution of the paper involves human subjects, then as much detail as possible should be included in the main paper. 
         \item According to the NeurIPS Code of Ethics, workers involved in data collection, curation, or other labor should be paid at least the minimum wage in the country of the data collector. 
     \end{itemize}

\item {\bf Institutional review board (IRB) approvals or equivalent for research with human subjects}
    \item[] Question: Does the paper describe potential risks incurred by study participants, whether such risks were disclosed to the subjects, and whether Institutional Review Board (IRB) approvals (or an equivalent approval/review based on the requirements of your country or institution) were obtained?
    \item[] Answer: \answerNA{} %
    \item[] Justification: The work does not involve crowdsourcing nor research with human subjects.
     \item[] Guidelines:
     \begin{itemize}
         \item The answer NA means that the paper does not involve crowdsourcing nor research with human subjects.
         \item Depending on the country in which research is conducted, IRB approval (or equivalent) may be required for any human subjects research. If you obtained IRB approval, you should clearly state this in the paper. 
         \item We recognize that the procedures for this may vary significantly between institutions and locations, and we expect authors to adhere to the NeurIPS Code of Ethics and the guidelines for their institution. 
         \item For initial submissions, do not include any information that would break anonymity (if applicable), such as the institution conducting the review.
     \end{itemize}

\item {\bf Declaration of LLM usage}
    \item[] Question: Does the paper describe the usage of LLMs if it is an important, original, or non-standard component of the core methods in this research? Note that if the LLM is used only for writing, editing, or formatting purposes and does not impact the core methodology, scientific rigorousness, or originality of the research, declaration is not required.
    \item[] Answer: \answerNA{} %
    \item[] Justification: LLMs have not been used for any core components of this research. 
     \item[] Guidelines:
     \begin{itemize}
         \item The answer NA means that the core method development in this research does not involve LLMs as any important, original, or non-standard components.
         \item Please refer to our LLM policy (\url{https://neurips.cc/Conferences/2025/LLM}) for what should or should not be described.
     \end{itemize}

\end{enumerate}

\clearpage

\appendix

\addcontentsline{toc}{section}{Appendix} %
\part{Appendix} %
\parttoc %

\clearpage

\section{Notation}
\label{app:notation}

We here present a table (\cref{table:notation}) with the notation used in the paper.

\setlength\intextsep{5pt}
\begin{table}[hb]
    \centering  
    \caption{\label{table:notation}
        \textbf{Notation.} We use bold letters for vectors and non-bold letters for scalars.  
        }
    \begin{tabular}{lll}
        \toprule
        \textbf{Symbol} & \textbf{Description} & \textbf{Reference} \\
        \midrule
        $\x$                & vector and vector-valued variables                            & \cref{sec:preliminaries} \\
        $y$                 & scalar-valued variable                                                               & \cref{sec:preliminaries} \\
        $(\f, \g)$          & a model with embedding and unembedding functions $\f$ and $\g$, see~\cref{eq:roeder_model}   & \cref{sec:preliminaries} \\
        $p_{\f, \g}$        & distributions of a model $(\f, \g)$                                  & \cref{sec:preliminaries} \\
        $\vf_0(\vx)$        & the displaced embedding functions $\vf(\vx) -\vf(\vx_0)$             & \cref{sec:preliminaries} \\
        $\vg_0(y)$          & the displaced unembedding functions $\vg(y) -\vg(y_0)$               & \cref{sec:preliminaries} \\
        $\Lmat$             & the matrix with columns $\vg_0(y)$, see \cref{theorem:identifiability}                                  & \cref{sec:preliminaries} \\
        $d_{\mathrm{KL}}$   & the distance between distributions arising from the KL divergence    & \cref{sec:small_kl_dissimilar_reps} \\
        $\mathrm{Var}_{\vx}[ \cdot ]$ & the variance over the inputs from some input distribution, $p_{\vx}$  & \cref{sec:distances_and_bound} \\
        $\mathrm{Unif}(\calY)$ & the uniform distribution over the elements of $\calY$  & \cref{sec:distances_and_bound} \\
        $d^\lambda_{\mathrm{LLV}}$ & the log-likelihood variance distance with weighting parameter $\lambda$  & \cref{sec:distances_and_bound} \\
        $d_{\mathrm{SVD}}$  & the mean PLS-SVD distance between random vectors       & \cref{sec:distances_and_bound} \\
        $d_{\f,\g}$  & the distance between representations for our model class      & \cref{sec:distances_and_bound} \\
        $\Smat_{\vz \vw}$   & the cross-covariance matrix of random vectors $\vz,\vw$              & \cref{sec:distances_and_bound} \\
        $\Nmat$             & the matrix with columns $\vf_0(\x)$                                  & \cref{sec:distances_and_bound} \\
        $m_{\mathrm{CCA}}$  & the mean canonical correlation coefficient                          & \cref{app:cca_def} \\
        \bottomrule
    \end{tabular}
    
\end{table}

\newpage

\section{Identifiability Result and Proof}
\label{app:id_res_proof}

We present a unified and slightly adapted version of the identifiability results from~\citep{khemakhem2020ice, roeder2021linear, lachapelle2023synergies}, see also~\citep[Corollary 6]{marconato2024all}.

\linearidf*

\begin{proof}
We first prove that $p_{\vf, \vg} (y \mid \vx) = p_{\vf', \vg'} (y \mid \vx), \; \forall (\vx, y) \in \calX \times \calY \implies \f(\x) = \A \f'(\x)$ for $\A \coloneqq \Lmat^{-\top} \Lmat^{'\top}$ invertible.

By assumption, the models $(\f, \g), (\f', \g')$ have equal likelihoods. Moreover, by construction, the two models have representations in $\bbR^M$. 
Let $Z(\x, \calY) = \sum_{y_j \in \calY} \exp(\f(\x)^\top\g(y_j))$, and similarly for $Z'(\x, \calY)$.    
Then 
\begin{align}
    p_{\vf, \vg} (y \mid \vx) &= p_{\vf', \vg'} (y \mid \vx) \\
    \f(\x)^\top\g(y) - \log(Z(\x, \calY)) &= \f'(\x)^\top\g'(y) - \log(Z'(\x, \calY)) 
\end{align}
for all $\vx \in \calX$ and all $y \in \calY$. In particular, this equation holds for any fixed $y$.
From the diversity condition (\cref{def:diversity}), we can choose $M + 1$ $y$'s, $y_0, \ldots, y_M$, such that the displaced unembeddings $ \{ \vg_0 (y_i)  \}_{i=1}^M$ are linearly independent.
By subtracting the log-conditional distribution for the pivot $y_0 \in \calY$, we obtain for each $y_i \in \calY$ the following:

\begin{align}
    \f(\x)^\top\g(y_i) - \f(\x)^\top\g(y_0) &+ \log(Z(\x, \calY)) - \log(Z(\x, \calY)) \notag \\
    = \f'(\x)^\top\g'(y_i) &- \f'(\x)^\top\g'(y_0) + \log(Z'(\x, \calY)) - \log(Z'(\x, \calY)) \\
    \f(\x)^\top\g(y_i) - \f(\x)^\top\g(y_0) &= \f'(\x)^\top\g'(y_i) - \f'(\x)^\top\g'(y_0) \\
    \f(\x)^\top \g_0(y_i)  &= \f'(\x)^\top \g'_0(y_i)  \, ,
\end{align}
where the last passage holds by definition of $\g_0(y), \g'_0(y)$, see~\cref{sec:preliminaries}. 
Let $\Lmat$ be the matrix which has $\vg_0 (y_i)$ as columns and $\Lmat'$ be the matrix which has $\vg'_0(y_i)$ as columns. Notice that the diversity condition (\cref{def:diversity}) implies that $\vL$ is an invertible matrix.
We can then stack the equations to get 
\begin{align}
    \Lmat^{\top} \f(\x) &= \Lmat^{'\top} \f'(\x)
\end{align}
and since $\Lmat$ is invertible, 
\begin{align}
     \f(\x) &= \Lmat^{-\top} \Lmat^{'\top} \f'(\x) \, .
\end{align}
If we set $\A  \coloneqq  (\Lmat'\Lmat^{-1})^\top $, we only need to show that $\A$ is invertible. 
Using the diversity condition, 
we pick points $\x_0, ..., x_M \in \calX$ such that the displaced embedding vectors $\{ \f_0(\x_i)\}_{i=1}^M$ are linearly independent. 
Let $\Nmat$ be the matrix with $\f_0(\x_i)$ as columns and $\Nmat'$ be the matrix with $\f'_0(\x_i)$ as columns. Notice that, from the diversity condition $\vN$ is an invertible matrix.
Then  
\begin{align}
     \Nmat &= \A \Nmat' \, . \label{eq:N-equals-A-Nprime}
\end{align}
Since any two matrices, 
$\B, \mathbf{C} \in \bbR^{M \times M}$, have the property $\text{rank}(\B\mathbf{C}) \leq \min (\text{rank}(\B),\text{rank}(\mathbf{C})) $, and $\Nmat$ has rank equal to $M$, 
we have that necessarily also $\A$ and $\Nmat'$ have rank $M$. In particular, $\A$ is an invertible matrix.

Next we prove that $p_{\vf, \vg} (y \mid \vx) = p_{\vf', \vg'} (y \mid \vx), \; \forall (\vx, y) \in \calX \times \calY \implies \g_0(y) = \A^{-\top} \g_0'(y)$ for $\A \coloneqq \Lmat^{-\top} \Lmat^{'\top}$ invertible. 

As before, we have that 
\begin{align}
    \f(\x)^\top\g(y) - \log(Z(\x, \calY)) &= \f'(\x)^\top\g'(y) - \log(Z'(\x, \calY)) 
\end{align}
holds for all $\x \in \calX$ and all $y \in \calY$. In particular, this equation holds for any specific $\x$.
From the diversity condition (\cref{def:diversity}), we can choose $M + 1$ $\x$'s, $\x_0, \ldots, \x_M$, such that the displaced embeddings $ \{ \vf_0 (\x_i)  \}_{i=1}^M$ are linearly independent. By subtracting the log-conditional distribution for the pivot $\x_0 \in \calX$, we obtain for each $\x_i \in \calX$ the following: 
\begin{align}
    \f(\x_i)^\top\g(y) - \f(\x_0)^\top\g(y) &+ \log(Z(\x_0, \calY)) - \log(Z(\x_i, \calY)) \notag \\
    = \f'(\x_i)^\top\g'(y) &- \f'(\x_0)^\top\g'(y) + \log(Z'(\x_0, \calY)) - \log(Z'(\x_i, \calY)) \\
    \f(\x_i)^\top\g(y) - \f(\x_0))^\top\g(y) &= \f'(\x_i)^\top\g'(y) - \f'(\x_0)^\top\g'(y) + c_i \\
    \f_0(\x_i)^\top\g(y) &= \f'_0(\x_i)^\top\g'(y) + c_i \, ,
\end{align}
where 
\begin{align}
    c_i &= \log\left( \frac{Z'(\x_0, \calY)}{Z(\x_0, \calY)} \right) + \log\left( \frac{Z(\x_i, \calY)}{Z'(\x_i, \calY)} \right)   \, .
\end{align}
Let $\Nmat$ be the matrix with $\f_0(\x_i)$ as columns, let $\Nmat'$ be the matrix with $\f'_0(\x_i)$ as columns and let $\mathbf{c}$ be the vector with $c_i$ as entries. 
Then, since $\Nmat$ is invertible 
\begin{align}    
    \Nmat^{\top}\g(y) &= \Nmat'^{\top}\g'(y) + \mathbf{c} \\
    \g(y) &= \Nmat^{-\top}\Nmat'^{\top}\g'(y) + \Nmat^{-\top}\mathbf{c} \, .
\end{align}
Since we found in \cref{eq:N-equals-A-Nprime} that $\vA$ is invertible, we have that 
\begin{align}
     \Nmat' = \A^{-1} \Nmat \, .
\end{align}
Therefore, we get 
\begin{align}        
    \g(y) &= \Nmat^{-\top}\Nmat'^{\top}\g'(y) + \Nmat^{-\top}\mathbf{c} \\
    \g(y) &= \Nmat^{-\top}(\A^{-1}\Nmat)^{\top}\g'(y) + \Nmat^{-\top}\mathbf{c} \\
    \g(y) &= \Nmat^{-\top}\Nmat^{\top}\A^{-\top}\g'(y) + \Nmat^{-\top}\mathbf{c} \\
    \g(y) &= \A^{-\top}\g'(y) + \mathbf{b} \, ,
\end{align}
where $\mathbf{b} = \Nmat^{-\top}\mathbf{c}$.
So, considering the displaced unembedding vectors, we get:
\begin{align}
    \g_0(y) &= \vg(y) - \vg(y_0) \\
    &= \A^{-\top}\g'(y) + \mathbf{b} - \A^{-\top}\g'(y_0) - \mathbf{b} \\
    &= \vA^{- \top} \vg'_0(y)
\end{align}
This proves the claim.
\end{proof}

\newpage

\section{Models close in KL divergence can have dissimilar representations - details}
\label{app:KL_theorem_proofs}

\subsection{KL goes to Zero but Representations are Dissimilar}
\label{app:KL_to_zero_reps_dissimilar_full_proofs}

Here, we restate \cref{theorem:small_KL_dissimilar_reps} in full detail:

\begin{theorem*}[Formal statement of \cref{theorem:small_KL_dissimilar_reps}]
    \label{app:theorem:kl_to_zero_dissimilar_reps_no_emb_distribution}
    
    Let $(\f, \g_{\rho}) \in \Theta$ be a model with representations in $\R^M$. Assume $M\geq 2$. Let $k \coloneqq  |\calY|$ be the total number of unembeddings and assume $k\geq M+1$. Assume the model satisfies the following requirements: 
    \newcounter{counter_3_1}
    \begin{enumerate}[i)]
        \item The unembeddings have fixed norm, that is $\Vert \g_{\rho}(y_i) \Vert = \Vert \g_{\rho}(y_j) \Vert = \rho$, for all $y_i, y_j \in \mathcal{Y}$, where $\rho \in \bbR^+$. 
        \item For every $\x_i \in \mathcal{X}$, let there be one %
    element $y_j \in \calY$
    for which the cosine similarity 
    $\cos(\f(\x_i), \g_{\rho}(y_j)) > 0$ and $\cos(\f(\x_i), \g_{\rho}(y_j)) > \cos(\f(\x_i), \g(y_\ell))$ for all $y_\ell \neq y_j$.
        \setcounter{counter_3_1}{\value{enumi}}
    \end{enumerate}
    Let $(\f', \g_{\rho}')$ be a model which also satisfies i) and ii) and which is related to $(\vf, \vg_{\rho})$ in the following way: 
    \begin{enumerate}[i)]
        \setcounter{enumi}{\value{counter_3_1}}
        \item For every $\x_i \in \mathcal{X}$, $\Vert \f'(\x_i) \Vert = \Vert \f(\x_i) \Vert = c(\x_i)$. 
        \item For $y_j = \argmax_{y \in \calY}( p(y \vert \x_i))$ we have that the angle between $\f'(\x_i)$ and $\g_{\rho}'(y_j)$ is equal to the angle between $\f(\x_i)$ and $\g_{\rho}(y_j)$, in particular, $\cos( \vf(\vx_i), \vg_{\rho}(y_j) ) = \cos( \vf' (\vx_i), \vg_{\rho}' (y_j) ) $.
    \end{enumerate}
    Then, making pairs of models satisfying assumptions i) to iv) and with increasing values of $\rho$ gives us a sequence of pairs of models for which we have: 
    \begin{align}
        d_{\mathrm{KL}}(p_{\f, \g_{\rho}}, p_{\f', \g'_{\rho}}) \to 0 \;\;\text{ for } \;\; \rho \to \infty \, .
    \end{align}
    Also, it is possible to construct a model $(\vf', \vg_{\rho}') \in \Theta$ which satisfies the requirements above, but where as $\rho \to \infty$, the embeddings stay fixed and $\f'(\x)$ is not an invertible linear transformation of $\f(\x)$. 
\end{theorem*}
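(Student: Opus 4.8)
The plan is to exploit the fact that multiplying the unembeddings by $\rho$ scales the softmax logits linearly: $\f(\x)^\top\g_{\rho}(y)=\rho\,\Vert\f(\x)\Vert\,\cos(\f(\x),\g_{\rho}(y))$. Hence for large $\rho$ both conditionals become arbitrarily peaked on the \emph{same} input-dependent top label, which forces $d_{\mathrm{KL}}$ to zero. Meanwhile the embeddings $\f,\f'$ are never rescaled, so they stay fixed as $\rho\to\infty$, and we are free to pick the permutation relating the two unembedding sets so that no invertible linear map sends $\f$ to $\f'$.

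\paragraph{Step 1: common argmax and concentration.}
Fix $\x\in\calX$, set $c(\x)=\Vert\f(\x)\Vert=\Vert\f'(\x)\Vert$ (using iii)), and write $s_\x(y)=\cos(\f(\x),\g_{\rho}(y))$, $s'_\x(y)=\cos(\f'(\x),\g'_{\rho}(y))$; these cosines do not depend on $\rho$ since only the unembedding norms vary. By i), $p_{\f,\g_{\rho}}(y\mid\x)=e^{\rho c(\x)s_\x(y)}/\sum_{y'}e^{\rho c(\x)s_\x(y')}$, and likewise for the primed model. Let $\hat y=\hat y(\x)$ be the unique maximiser of $s_\x(\cdot)$ (by ii)), with $s_\x(\hat y)>0$. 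By iv), $s'_\x(\hat y)=s_\x(\hat y)$, and since the primed model also satisfies ii) with $\hat y$ as its top label, $\hat y$ is also the unique maximiser of $s'_\x(\cdot)$. Dividing by $e^{\rho c(\x)s_\x(\hat y)}$ and setting $\delta_\x=s_\x(\hat y)-\max_{y\neq\hat y}s_\x(y)>0$ (and $\delta'_\x$ analogously) gives
\begin{align*}
p_{\f,\g_{\rho}}(\hat y\mid\x)\ge 1-(k-1)e^{-\rho c(\x)\delta_\x},\qquad p_{\f,\g_{\rho}}(y\mid\x)\le e^{-\rho c(\x)\delta_\x}\ \ (y\neq\hat y),
\end{align*}
and the same bounds for the primed model with $\delta'_\x$. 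So both conditionals converge to the point mass at $\hat y$.

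\paragraph{Step 2: the KL vanishes.}
From the explicit softmax forms,
\begin{align*}
\log\frac{p_{\f,\g_{\rho}}(y\mid\x)}{p_{\f',\g'_{\rho}}(y\mid\x)}=\rho c(\x)\big(s_\x(y)-s'_\x(y)\big)-\log Z_\x+\log Z'_\x,
\end{align*}
and since $\rho c(\x)s_\x(\hat y)\le\log Z_\x\le\rho c(\x)s_\x(\hat y)+\log k$ (and the same for $Z'_\x$, using $s'_\x(\hat y)=s_\x(\hat y)$), the log-ratio has absolute value at most $2\rho c(\x)+\log k$. For $y=\hat y$ it is squeezed between $\log(1-(k-1)e^{-\rho c(\x)\delta'_\x})$ and $-\log(1-(k-1)e^{-\rho c(\x)\delta_\x})$, hence $\to0$; for $y\neq\hat y$ the summand is bounded in absolute value by $e^{-\rho c(\x)\delta_\x}\,(2\rho c(\x)+\log k)$, which tends to $0$ because the exponential decay dominates the linear growth. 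Summing the $k$ terms shows $\KL\big(p_{\f,\g_{\rho}}(\cdot\mid\x)\,\|\,p_{\f',\g'_{\rho}}(\cdot\mid\x)\big)\to0$. The estimate depends on $\x$ only through $c(\x),\delta_\x,\delta'_\x$, so if these are controlled uniformly over $\mathrm{supp}(p_\calD)$ (automatic when $\calX$ is finite, and otherwise part of the construction) the per-input bound is uniform, and taking $\Ex_{\x\sim p_\calD}$ gives $d_{\mathrm{KL}}(p_{\f,\g_{\rho}},p_{\f',\g'_{\rho}})\to0$ as $\rho\to\infty$.

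\paragraph{Step 3: non-linearly-related embeddings, and the main obstacle.}
It remains to realise i)--iv) with embeddings not related by any invertible linear map. Fix pairwise-distinct unembedding directions $d_1,\dots,d_k$ for $(\f,\g_{\rho})$ (common norm $\rho$), and let $\g'_{\rho}(y_i)$ have direction $d_{\pi(i)}$ for a permutation $\pi$. By ii)--iv), inputs with top label $y_i$ have $\f(\x)$ concentrated near $d_i$ and $\f'(\x)$ near $d_{\pi(i)}$ with matching norms; making these clusters tight (an open condition compatible with ii)), $\f'=\A\f$ for invertible $\A$ would force $\A d_i=d_{\pi(i)}$ for all $i$. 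For $k=M+1$ choose $d_1,\dots,d_M$ linearly independent, $d_{M+1}=\sum_{i\le M}d_i$, and $\pi$ with $\pi(M+1)\neq M+1$: then $\A d_{M+1}=\sum_{i\le M}\A d_i=\sum_{i\le M}d_{\pi(i)}=2d_{M+1}-d_{\pi(M+1)}$, which cannot equal $d_{\pi(M+1)}$ since the directions are distinct -- a contradiction; for $k\ge M+2$ one extends this by letting $\pi$ act adversarially on an $(M+1)$-subset (equivalently, by placing the directions in a $2$-plane and changing their cyclic order, which no invertible linear map of $\R^2$ can realise, cf.\ \cref{fig:small_kl_dissimilar_reps}). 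Since $\f,\f'$ are defined independently of $\rho$, they stay fixed. The delicate point throughout is Step~2: the log-likelihood ratios inside the KL grow linearly in $\rho$, so one must check that the exponentially small probabilities multiplying them win, and that the per-input estimate is uniform enough (or $\calX$ finite) to exchange limit and expectation; Step~3 is then a matter of choosing an adversarial unembedding configuration and permutation.
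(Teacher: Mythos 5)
Your Steps 1--2 (the KL part) are essentially the same argument as the paper, organized somewhat differently: the paper decomposes the summand $p(y_j\mid\x)\log\bigl(p(y_j\mid\x)/p'(y_j\mid\x)\bigr)$ into a dot-product term and a $\log(Z'/Z)$ term and bounds both by $2c_i\rho\,p(y_j\mid\x)$, whereas you bound the log-ratio uniformly by $2\rho c(\x)+\log k$ using $0\le\log Z_\x-\rho c(\x)s_\x(\hat y)\le\log k$ and multiply by $p(y\mid\x)\le e^{-\rho c(\x)\delta_\x}$. Both are correct; yours is a bit cleaner, and you are right to flag that exchanging the $\rho$-limit with $\Ex_{\x\sim p_\calD}$ requires uniformity of $(c(\x),\delta_\x,\delta'_\x)$ over the support (or finiteness of $\calX$) -- a point the paper glosses over. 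You also correctly note, as the paper implicitly assumes, that $\hat y(\x)$ must be the argmax for \emph{both} models; this is really a consequence of iv) together with the construction rather than of ii) alone, and should be stated.

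Your Step~3, however, departs from the paper's route and as written has a real gap. The paper fixes explicit embedding sequences $\{\f(\x_n)\}$ and $\{\f(\x_m)\}$ accumulating at a common point $\va$ while $\f'(\x_n)\to\vb\neq\va=\lim\f'(\x_m)$, so that any $\boldsymbol\tau$ with $\boldsymbol\tau\circ\f=\f'$ is \emph{discontinuous}, hence in particular not linear. You instead argue algebraically: pick unembedding directions $d_1,\dots,d_{M+1}$ with $d_{M+1}\propto\sum_{i\le M}d_i$, and show no invertible $\A$ can satisfy $\A d_i=d_{\pi(i)}$ for a transposition $\pi$ swapping $M+1$ with some $m\le M$. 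The algebra is fine, but the bridging claim -- that ``making the clusters tight'' forces $\A d_i=d_{\pi(i)}$ -- is not established: a linear map that matches $\f$ to $\f'$ on arbitrarily small but nondegenerate clusters around $d_i$ need not send $d_i$ exactly to $d_{\pi(i)}$. The fix is easy: arrange (consistently with ii)--iv)) that $\f(\x)=d_i$ is \emph{achieved} for some $\x$ with $\|\f(\x)\|=1$; then iii) and iv) force $\f'(\x)=d_{\pi(i)}$, and any linear $\A$ with $\A\f=\f'$ must satisfy $\A d_i=d_{\pi(i)}$ exactly, after which your contradiction goes through. Your sketch for $k\ge M+2$ is also too loose as stated -- the remark that ``no invertible linear map of $\R^2$ can change the cyclic order'' is false (a reflection reverses it); what rules out general permutations of $\ge4$ coplanar directions is the cross-ratio/ordering argument, which you would need to make explicit. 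So: same KL argument, genuinely different (algebraic rather than topological) non-equivalence argument, but the latter needs the two repairs above to be complete.
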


\begin{proof}
    Let the models $(\f, \g_{\rho}), (\f', \g_{\rho}') \in \Theta$ be as described above. Below, we use the shorthand $p \coloneqq p_{\f, \g_{\rho}}$ and $p' \coloneqq p_{\f', \g_{\rho}'}$. 

    We prove the results in two parts. 
    (1) We show that, for any two models satisfying the requirements i) to iv) above, we get that $d_{\mathrm{KL}}(p, p') \to 0$ as $\rho \to \infty$. 
    (2) We then specifically show how construct a model $(\f, \g_{\rho})$ and a model $(\f', \g_{\rho}')$, which satisfies the requirements i) to iv) but where as $\rho \to \infty$, the embeddings stay fixed and $\f'(\x)$ is not an invertible linear transformation of $\f(\x)$.

    (1) Since $D_\mathrm{KL}(p(y \vert \x_i) \| p'(y \vert \x_i)) \to 0$ for all $\x_i \in \calX$ implies that $d_{\mathrm{KL}}(p, p') \to 0$, we will show that $D_\mathrm{KL}(p(y \vert \x_i) \| p'(y \vert \x_i)) \to 0$ for all $\x_i \in \calX$. 
    Fix $\x_i \in \mathcal{X}$. For notational brevity, we denote with $c_i \in \bbR^+$ the value of the norm of the embeddings on $\vx_i$, i.e., $\Vert \f(\x_i) \Vert = \Vert \f'(\x_i) \Vert = c(\x_i)$ (assumption iii)). We then consider the KL divergence
    \begin{align}
        D_\mathrm{KL}(p(y \vert \x_i) \| p'(y \vert \x_i)) &= \sum_{j = 1}^k p(y_j \vert \x_i) \log \frac{p(y_j \vert \x_i)}{p'(y_j \vert \x_i)} \, \label{eq:KL_divergence_p_p_mark}.
    \end{align}
    Since $D_\mathrm{KL}(p(y \vert \x_i) \| p'(y \vert \x_i))$ involves a sum over $j$, $D_\mathrm{KL}(p(y \vert \x_i) \| p'(y \vert \x_i))$ will go to zero if each term in the sum goes to zero. To show that each term of this sum goes to zero, we first consider the term for $y_j = \text{arg\,max}_{y\in \calY}( p(y \vert \x_i))$. 
    In this case, we have that
    \begin{align}
        p(y_j \vert \x_i) &= \frac{\exp(\f(\x_i)^\top\g_{\rho}(y_j))}{\sum_{\ell = 1}^k\exp(\f(\x_i)^\top\g_{\rho}(y_\ell))} \\
        &= \left(1 + \sum_{\ell = 1, \ell\neq j}^k\exp(\f(\x_i)^\top\g_{\rho}(y_\ell) - \f(\x_i)^\top\g_{\rho}(y_j)) \right)^{-1} \, . \label{eq:p_y_j_max}
    \end{align}
    From iii) we have $\Vert \f(\x_i) \Vert = c_i$ and from i) we have that $\Vert \g_{\rho}(y_j)\Vert = \rho$. 
    We can therefore write  
    \begin{align}
        \f(\x_i)^\top \g_{\rho}(y_j) = \cos(\f(\x_i), \g_{\rho}(y_j)) \cdot c_i \cdot \rho \, , \label{eq:dot_is_angle_and_norms}
    \end{align}
    and substituting this into \cref{eq:p_y_j_max}, we see that
    \begin{align}
        p(y_j \vert \x_i) 
        &= \left(
            1 + \sum_{\ell = 1, \ell\neq j}^k\exp \big( 
                c_i \cdot \rho \cdot (\cos(\f(\x_i), \g_{\rho}(y_\ell)) - \cos(\f(\x_i), \g_{\rho}(y_j)))
            \big)
        \right)^{-1} \, . \label{eq:one_plus_sum_exp_inv}
    \end{align}
    Now since by assumption ii) $\cos(\f(\x_i), \g_{\rho}(y_j)) > 0$ and for all $y_\ell \neq y_j$ 
    \begin{align}
        \cos(\f(\x_i), \g_{\rho}(y_j)) > \cos(\f(\x_i), \g_{\rho}(y_\ell)) \, , 
    \end{align}
    we have that 
    \begin{align}
        \cos(\f(\x_i), \g_{\rho}(y_\ell)) - \cos(\f(\x_i), \g_{\rho}(y_j)) < 0 \, ,
    \end{align}
    Taking the limit in $\rho$, we get    
    \begin{align}
        \exp(c \cdot \rho \cdot (\cos(\f(\x_i), \g_{\rho}(y_\ell)) - \cos(\f(\x_i), \g_{\rho}(y_j)))) \to 0 \;\; \text{ for } \;\; \rho \to \infty 
    \end{align}
    Since every term of the sum in \cref{eq:one_plus_sum_exp_inv} goes to zero for $\rho \to \infty$, the whole sum goes to zero, which means that from \cref{eq:one_plus_sum_exp_inv} we get that  
    \begin{align}
        p(y_j \vert \x_i) \to \frac{1}{1 + 0} = 1 \;\; \text{ for } \;\; \rho \to \infty \, . \label{eq:p_to_one}
    \end{align}
    Similarly, we also have that 
    \begin{align}
        p'(y_j \vert \x_i) \to  1 \;\; \text{ for } \;\; \rho \to \infty \, . \label{eq:p_mark_to_one}
    \end{align}
    This means that for $y_j = \argmax_{y \in \calY} p(y \vert \x_i)$, from combining \cref{eq:p_to_one}, \cref{eq:p_mark_to_one} and \cref{eq:KL_divergence_p_p_mark} we get that 
    \begin{align}
        p(y_j \vert \x_i) \log \frac{p(y_j \vert \x_i)}{p'(y_j \vert \x_i)}  \to 1\cdot \log  \frac{1}{1} = 0 \;\; \text{ for } \;\; \rho \to \infty \, .
    \end{align}

    Next, we consider the case where $y_j \neq \argmax_{y \in \calY} p(y \vert \x_i)$. 
    In this case, we consider   
    \begin{align}
        p(y_j \vert \x_i) \log \frac{p(y_j \vert \x_i)}{p'(y_j \vert \x_i)} &= p(y_j \vert \x_i)\big( \log p(y_j \vert \x_i) - \log p'(y_j \vert \x_i)\big) \\        
        &= p(y_j \vert \x_i)\big( \f(\x_i)^\top \g_{\rho}(y_j) - \log Z(\x_i) - \f'(\x_i)^\top \g_{\rho}'(y_j) +\log Z'(\x_i)\big) \\
        &= p(y_j \vert \x_i)\big( \f(\x_i)^\top \g_{\rho}(y_j) - \f'(\x_i)^\top \g_{\rho}'(y_j) \big) + p(y_j \vert \x_i)\log \frac{Z'(\x_i)}{Z(\x_i)} \, .
        \label{eq:what-we-want-to-rework}
    \end{align}
    We will consider the two terms in \cref{eq:what-we-want-to-rework} separately. 
    First, we consider 
    \begin{align}
        p(y_j \vert \x_i)\big( \f(\x_i)^\top \g_{\rho}(y_j) - \f'(\x_i)^\top \g_{\rho}'(y_j) \big) \, .
        \label{eq:expression-prob-dotprod}
    \end{align}    
    Similarly as for \cref{eq:dot_is_angle_and_norms}, we have
    \begin{align}
        \f(\x_i)^\top \g_{\rho}(y_j) = \cos(\f(\x_i), \g_{\rho}(y_j)) \cdot c_i \cdot \rho \, .
    \end{align}
    We use this to rewrite \cref{eq:expression-prob-dotprod} and see that 
    \begin{align}
        \vert \f(\x_i)^\top \g_{\rho}(y_j) - \f'(\x_i)^\top \g_{\rho}'(y_j) \vert &= \vert\cos(\f(\x_i), \g_{\rho}(y_j)) \cdot c_i \cdot \rho - \cos(\f'(\x_i), \g_{\rho}'(y_j)) \cdot c_i \cdot \rho \vert \\
        &= \vert c_i \cdot \rho \cdot (\cos(\f(\x_i), \g_{\rho}(y_j)) - \cos(\f'(\x_i), \g_{\rho}'(y_j))) \vert \\
        & \leq 2c_i \cdot \rho \, .
    \end{align}
    This gives us that 
    \begin{align}
        p(y_j \vert \x_i)& \vert\f(\x_i)^\top \g_{\rho}(y_j) - \f'(\x_i)^\top \g_{\rho}'(y_j)\vert \leq  2c_i \rho \cdot p(y_j \vert \x_i) \label{eq:upperbounded_by_p_y_j_2c_i} 
    \end{align}
    Since $\vert x \vert \to 0$ implies that $x \to 0$ and we have shown that the absolute values of the term is upper bounded by $ 2c_i \rho \cdot p(y_j \vert \x_i)$, we will show that the term goes to zero by showing that $2c_i \rho \cdot p(y_j \vert \x_i)$ goes to zero. We see that 
    \begin{align}
        2c_i \rho \cdot & p(y_j \vert \x_i)  = \frac{2c_i\cdot \rho \cdot\exp(\f(\x_i)^\top \g_{\rho}(y_j))}{\sum_{\ell=1}^k \exp (\f(\x_i)^\top \g_{\rho}(y_\ell)) } \\
        &= \frac{2c_i \cdot \rho }{ 1 + \sum_{\ell=1, \ell \neq j}^k \exp (\f(\x_i)^\top \g_{\rho}(y_\ell) - \f(\x_i)^\top \g_{\rho}(y_j)) } \\
        &= \frac{2c_i \cdot \rho }{ 1 + \sum_{\ell=1, \ell \neq j}^k \exp (c_i \cdot \rho \cdot(\cos(\f(\x_i), \g_{\rho}(y_\ell)) - \cos(\f(\x_i), \g_{\rho}(y_j)))) } \\
        &= {2c_i }\left/  \left( 
            1/\rho + \sum_{\ell=1, \ell \neq j}^k \frac{\exp \big(
                c_i \cdot \rho \cdot(\cos(\f(\x_i), \g_{\rho}(y_\ell)) - \cos(\f(\x_i), \g_{\rho}(y_j)))
            \big)}{\rho}
        \right) \right. \, . \label{eq:for_def_demominator}
    \end{align}
    Consider now $y_r = \argmax_{y \in \calY} p(y \vert \x_i)$, recall we have $y_j \neq y_r$. Considering this $y_r$, we have by assumption ii) that $\cos(\f(\x_i), \g_{\rho}(y_r)) > 0$ and 
    \begin{align}
        \cos(\f(\x_i), \g_{\rho}(y_r)) - \cos(\f(\x_i), \g_{\rho}(y_j)) > 0 \, ,
    \end{align}
    Since for $\alpha \in \R^+$, we have $\exp(\alpha x)/x \to \infty $ for $x \to \infty$, for this $y_r$, we have that  
    \begin{align}
        \frac{\exp (c_i \cdot \rho \cdot(\cos(\f(\x_i), \g_{\rho}(y_r)) - \cos(\f(\x_i), \g_{\rho}(y_j))))}{\rho} \to \infty \;\; \text{ for } \;\; \rho \to \infty  \, . \label{eq:infty_limit}
    \end{align}
    We can now define the denominator of \cref{eq:for_def_demominator} as 
    \begin{equation}
    \begin{aligned}
    d(\rho) \coloneqq {} & \frac{1}{\rho} + \frac{
      \exp\!\bigl(
        c_i \rho\bigl(\cos(\f(\x_i),\g_{\rho}(y_r))
                  - \cos(\f(\x_i),\g_{\rho}(y_j))\bigr)
      \bigr)
    }{\rho} \\[4pt]
    &+ \sum_{\ell=1,\;\ell\neq j,r}^{k}
       \frac{
         \exp\!\bigl(
           c_i \rho\bigl(\cos(\f(\x_i),\g_{\rho}(y_\ell))
                     - \cos(\f(\x_i),\g_{\rho}(y_j))\bigr)
         \bigr)
       }{\rho}
    \end{aligned}
    \label{eq:d_rho_def}
    \end{equation}
    and see that the first term of \cref{eq:d_rho_def} goes to zero, the second term goes to infinity \cref{eq:infty_limit} and the third term is always positive since $\exp(x)$ is always positive. Therefore we have that 
    \begin{align}
        d(\rho) \to \infty \;\; \text{ for } \;\; \rho \to \infty 
    \end{align}
    which means that 
    \begin{align}
        2c_i \rho \cdot p(y_j \vert \x_i)  = \frac{2c_i}{d(\rho)} \to 0  \;\; \text{ for } \;\; \rho \to \infty \label{eq:2c_rho_p_to_zero}
    \end{align}
    
    Thus, we get for the first term of \cref{eq:what-we-want-to-rework} that
    \begin{align}
        p(y_j \vert \x_i)& ( \f(\x_i)^\top \g_{\rho}(y_j) - \f'(\x_i)^\top \g_{\rho}'(y_j)) \to 0 \;\; \text{ for } \;\; \rho \to \infty 
    \end{align}
    which was the desired result. 

    We now consider the second term in \cref{eq:what-we-want-to-rework} 
    \begin{align}
        p(y_j \vert \x_i)\log \frac{Z'(\x_i)}{Z(\x_i)} \, .
    \end{align}
    We see that 
    \begin{align}
        \frac{Z'(\x_i)}{Z(\x_i)} 
        &= \frac{\sum_{\ell=1}^k \exp(\cos(\f'(\x_i), \g_{\rho}'(y_\ell)) \cdot c_i \cdot \rho)}{\sum_{m=1}^k \exp(\cos(\f'(\x_i), \g_{\rho}'(y_m)) \cdot c_i \cdot \rho)} \\
        &\leq \frac{\sum_{\ell=1}^k \exp(1 \cdot c_i \cdot \rho)}{\sum_{m=1}^k \exp(-1 \cdot c_i \cdot \rho)} \\
        &= \frac{\exp(c_i \cdot \rho)}{\exp(-c_i \cdot \rho)} \\
        &= \exp(2c_i \cdot \rho) \, .
    \end{align}
    Which means we have that 
    \begin{align}
        p(y_j \vert \x_i)\log \frac{Z'(\x_i)}{Z(\x_i)} &\leq p(y_j \vert \x_i)\log \exp(2c_i \cdot \rho) = 2c_i  \rho\cdot p(y_j \vert \x_i)  \, . \label{eq:p_y_j_log_Z_Z_leq_2c_i_rho}
    \end{align}
    We also have that 
    \begin{align}
        \frac{Z'(\x_i)}{Z(\x_i)} 
        &= \frac{\sum_{\ell=1}^k \exp(\cos(\f'(\x_i), \g_{\rho}'(y_\ell)) \cdot c_i \cdot \rho)}{\sum_{m=1}^k \exp(\cos(\f'(\x_i), \g_{\rho}'(y_m)) \cdot c_i \cdot \rho)} \\
        &\geq \frac{\sum_{\ell=1}^k \exp(-1 \cdot c_i \cdot \rho)}{\sum_{m=1}^k \exp(1 \cdot c_i \cdot \rho)} \\
        &= \frac{\exp(-c_i \cdot \rho)}{\exp(c_i \cdot \rho)} \\
        &= \exp(-2c_i \cdot \rho) \, .
    \end{align}
    Which means we have that 
    \begin{align}
        p(y_j \vert \x_i)\log \frac{Z'(\x_i)}{Z(\x_i)} &\geq p(y_j \vert \x_i)\log \exp(-2c_i \cdot \rho) \\
        &= -2c_i  \rho\cdot p(y_j \vert \x_i)  \, .
    \end{align}
    which gives us 
    \begin{align}
        -p(y_j \vert \x_i)\log \frac{Z'(\x_i)}{Z(\x_i)} &\leq  2c_i  \rho\cdot p(y_j \vert \x_i)  \, . \label{eq:minus_p_y_j_log_Z_Z_leq_2c_i_rho}
    \end{align}
    Now \cref{eq:p_y_j_log_Z_Z_leq_2c_i_rho} and \cref{eq:minus_p_y_j_log_Z_Z_leq_2c_i_rho} together give us that 
    \begin{align}
        \left\vert p(y_j \vert \x_i)\log \frac{Z'(\x_i)}{Z(\x_i)} \right\vert \leq 2c_i  \rho\cdot p(y_j \vert \x_i) 
    \end{align}    
    Thus, we have upper bounded the absolute value of the term with $2c_i  \rho\cdot p(y_j \vert \x_i)$ like in \cref{eq:upperbounded_by_p_y_j_2c_i}. Using again that $2c_i  \rho\cdot p(y_j \vert \x_i) \to 0$ for $\rho \to \infty$ (\cref{eq:2c_rho_p_to_zero}), we get that 
    \begin{align}
        p(y_j \vert \x_i)\log  \frac{Z'(\x_i)}{Z(\x_i)} \to 0 \;\; \text{ for }\;\; \rho \to \infty \, .
    \end{align}    
    Since both terms in the sum (\cref{eq:what-we-want-to-rework}) go to zero for $\rho \to \infty$, the entire sum goes to zero and for $y_j \neq \argmax_{y \in \calY} p(y \vert \x_i)$, we have that 
    \begin{align}
        p(y_j \vert \x_i) \log \frac{p(y_j \vert \x_i)}{p'(y_j \vert \x_i)} \to 0 \;\; \text{ for }\;\; \rho \to \infty \, .
    \end{align}

    Since we now have this result for both $y_j = \argmax_{y \in \calY} p(y \vert \x_i)$ and $y_j \neq \argmax_{y \in \calY} p(y \vert \x_i)$, 
    the sum over all $j$ in $D_\mathrm{KL}(p(y \vert \x_i) \| p'(y \vert \x_i))$ (\cref{eq:KL_divergence_p_p_mark}) also goes to zero. 
    Thus, 
    \begin{align}
        D_\mathrm{KL}(p(y \vert \x_i) \| p'(y \vert \x_i)) \to 0 \quad \quad \forall \x_i \in \calX ,
    \end{align}
    which means that
    \begin{align}
        d_{\mathrm{KL}}(p, p') = \Ex_{\vx \sim p(\vx)}[D_\mathrm{KL}(p(y \vert \x) \| p'(y \vert \x))] \to 0 \;\;\text{ for } \;\; \rho \to \infty
    \end{align}
    proving the first claim of the theorem.

    (2) We now present two constructions which satisfy the requirements i) to iv) above, but where $\f'(\x)$ is not an invertible linear transformation of $\f(\x)$. In the following, we use $\ve_i$ to denote the $i$-th unit vector in $\bbR^M$, whose $i$-th component is equal to $1$, and all other components are equal to zero.

    \textbf{Construction for $k\geq M+2$}. 
    For $i= 1, \ldots, M$, 
    let $\g_{\rho}(y_i) = \rho \cdot \ve_i$. 
    Let $\g_{\rho}(y_{M+1}) = -\rho \cdot \ve_1$ and $\g_{\rho}(y_{M+2}) = -\rho \cdot \ve_2$. 
    For any remaining labels with index $j > M +2$, 
    let them be such that $\g_{\rho}(y_j)\neq \g_{\rho}(y_\ell)$ for $j\neq \ell$ and more than $\pi/2$ radian angle away from $e_2, -e_1$ and $-e_2$.  
    Let $\{\x_n\}_{n = 1}^{\infty} \subset \calX$ and $\{\x_m\}_{m = 1}^{\infty}\subset \calX$ be two sets of inputs. We define 
    \begin{align}
        \f(\x_n) = \begin{pmatrix}
                        \cos (\pi - \frac{\pi}{4}(1-\frac{1}{n})) \\
                        \sin (\pi - \frac{\pi}{4}(1-\frac{1}{n})) \\
                        0 \\
                        \vdots
                    \end{pmatrix}
    \end{align} 
    Now $\{\f(\x_n)\}_{n = 1}^{\infty}$ is a sequence with a well-defined finite limit, which we for ease of reference shall call $\va$
    \begin{align}
        \va = \lim_{n \to \infty}\f(\x_n) = \begin{pmatrix}
                        \cos (\frac{3\pi}{4}) \\
                        \sin (\frac{3\pi}{4}) \\
                        0 \\
                        \vdots
                    \end{pmatrix}
    \end{align}
    We now define $\f$ for the other set. 
    \begin{align}
        \f(\x_m) = \begin{pmatrix}
                        \cos (\frac{3\pi}{4} - \frac{\pi}{4m}) \\
                        \sin (\frac{3\pi}{4} - \frac{\pi}{4m}) \\
                        0 \\
                        \vdots
                    \end{pmatrix}
    \end{align} 
    $\{\f(\x_m)\}_{m = 1}^{\infty}$ is now a sequence with the same limit, that is,
    \begin{align}
        \lim_{m \to \infty}\f(\x_m) = \begin{pmatrix}
                        \cos (\frac{3\pi}{4}) \\
                        \sin (\frac{3\pi}{4}) \\
                        0 \\
                        \vdots
                    \end{pmatrix} = \va
    \end{align}
    
    For every remaining $\x \in \mathcal{X}$, construct $\f(\x)$ such that there exists a label $y_j \in \calY$ such that 
    we have the cosine similarities
    $\cos(\f(\x), \g_{\rho}(y_j)) > 0$ and $\cos(\f(\x), \g_{\rho}(y_j)) > \cos(\f(\x), \g_{\rho}(y_\ell))$ for $y_\ell \neq y_j$. With this $(\f, \g)$ satisfies i) and ii).

    We now construct the second model $(\f', \g')$. 
    Let $\g_{\rho}'(y_i) = \g_{\rho}(y_i)$ for $i = 1, \ldots,  M$ and $i > M+2$. 
    Let $\g_{\rho}'(y_{M+1}) = -\rho \cdot \ve_2$ and $\g_{\rho}'(y_{M+2}) = -\rho \cdot \ve_1$. Note that we swapped the unembeddings for $\g'$ of $y_{M+1}$ and $y_{M+2}$ compared to $\g$.  
    For all $\x \in \calX$ and the corresponding $y_j = \argmax_{y \in \calY} p(y \vert \x)$ such that $y_j \neq y_{M+1}, y_{M+2}$, let $\f'(\x) = \f(\x)$. 
    However, for all $\x \in \calX$ and the corresponding $y_j = \argmax_{y \in \calY} p(y \vert \x)$ such that $y_j= M+1$, 
    let $\Vert \f'(\x) \Vert = \Vert\f(\x )\Vert$ and let $\f'(\x)$ be rotated counterclockwise with respect to the first two axes (i.e., the first two components of the representations) 
    such that $\cos(\f'(\x), \g_{\rho}'(y_j)) = \cos(\f(\x), \g_{\rho}(y_j))$. 
    Also, for all $\x_i \in \calX$ and the corresponding $y_j = \argmax_{y \in \calY} p(y \vert \x_i)$ such that $y_j= M+2$, let $\Vert \f'(\x_i) \Vert = \Vert\f(\x_i )\Vert$ and let $\f'(\x_i)$ be rotated clockwise with respect to the first two axes such that $\cos(\f'(\x_i), \g_{\rho}'(y_j)) = \cos(\f(\x_i), \g_{\rho}(y_j))$. 
    This construction satisfies the requirements i) to iv). 

    We will show that $\f'(\x)$ is not an invertible linear transformation of $\f(\x)$ by showing that any function $\boldsymbol{\tau}: \R^M \to \R^M$ with $\boldsymbol{\tau}(\f(\x)) = \f'(\x), \forall \x\in \calX$ cannot be continuous. 

    Let $\boldsymbol{\tau}: \R^M \to \R^M$ be such that $\boldsymbol{\tau}(\f(\x)) = \f'(\x), \forall \x\in \calX$. We recall that a function $\h: \R^M \to \R^M$ is by definition continuous at $\vc \in \R^M$ if 
    \begin{align}
        \h(\x) \to \h(\vc) \; \; \text{ for } \;\; \x \to \vc
    \end{align}
    Therefore, we consider the limits of the sequences $\{\boldsymbol{\tau}(\f(\x_n))\}_{n = 1}^{\infty}$ and  $\{\boldsymbol{\tau}(\f(\x_m))\}_{m = 1}^{\infty}$. We first note that by our construction, we have for the rotation matrix 
    \begin{align}
        \Rmat = \begin{bmatrix}
                        \cos (\frac{\pi}{2}) & -\sin (\frac{\pi}{2}) & 0 & \cdots \\
                        \sin (\frac{\pi}{2}) & \cos (\frac{\pi}{2})  & 0 & \cdots\\
                        0                    & 0                     & 1 & \cdots\\
                        \vdots & \vdots & \vdots & \ddots
                    \end{bmatrix}
    \end{align}
    that 
    \begin{align}
        \f'(\x_n) = \Rmat \f(\x_n)
    \end{align}
    Which means that since $\boldsymbol{\tau}(\f(\x_n)) = \f'(\x_n)$, we have that 
    \begin{align}
        \boldsymbol{\tau}(\f(\x_n)) = \Rmat \f(\x_n) = \begin{pmatrix}
                        \cos (\frac{3\pi}{2} - \frac{\pi}{4}(1-\frac{1}{n})) \\
                        \sin (\frac{3\pi}{2} - \frac{\pi}{4}(1-\frac{1}{n})) \\
                        0 \\
                        \vdots
                    \end{pmatrix} 
    \end{align}
    and we see that when $n \to \infty$,
    \begin{align}
        \boldsymbol{\tau}(\f(\x_n)) \to \vb \coloneqq \begin{pmatrix}
                        \cos (\frac{3\pi}{2}) \\
                        \sin (\frac{3\pi}{2}) \\
                        0 \\
                        \vdots 
                    \end{pmatrix} \; \; \text{ for } \;\; \f(\x_n) \to \va \label{eq:lim_val_b_def}
    \end{align}
    However, we also have that $\boldsymbol{\tau}(\f(\x_m)) = \f'(\x_m) = \f(\x_m)$. This means that for $m \to \infty$, we have 
    \begin{align}
        \boldsymbol{\tau}(\f(\x_m))  \to \va \; \; \text{ for } \;\; \f(\x_m) \to \va
    \end{align}
    Now since for $\vb$ from \cref{eq:lim_val_b_def}, we have $\vb \neq \va$, $\boldsymbol{\tau}$ cannot be continuous. In particular, it cannot be an invertible linear transformation, and thus $\f'(\x)$ is not an invertible linear transformation of $\f(\x)$. 

    Note that if either $\f$ or $\f'$ is not injective, then both $\f$ and $\f'$ can be smooth. However, if both $\f$ and $\f'$ are injective, then either $\f$ or $\f'$ has to be non-continuous.   
    
    Notice also that we can permute additional labels to bring the embeddings further from a linear transformation. See for example \cref{fig:small_kl_dissimilar_reps} which has been constructed by permuting multiple labels.

    \textbf{Construction for $k= M+1$}. 
    For $i = 1, \ldots M$, let $\g_{\rho}(y_i) = \rho \cdot \ve_i$ and $\g_{\rho}(y_{M+1}) = -\rho \cdot \ve_1$. 
    Let $\{\x_n\}_{n = 1}^{\infty} \subset \calX$ and $\{\x_m\}_{m = 1}^{\infty}\subset \calX$ be two sets of inputs. We define 
    \begin{align}
        \f(\x_n) = \begin{pmatrix}
                        \cos (\pi - \frac{\pi}{4}(1-\frac{1}{n})) \\
                        \sin (\pi - \frac{\pi}{4}(1-\frac{1}{n})) \\
                        0 \\
                        \vdots
                    \end{pmatrix}
    \end{align} 
    Now $\{\f(\x_n)\}_{n = 1}^{\infty}$ is a sequence with a well-defined finite limit, which we for ease of reference shall call $\va$
    \begin{align}
        \va = \lim_{n \to \infty}\f(\x_n) = \begin{pmatrix}
                        \cos (\frac{3\pi}{4}) \\
                        \sin (\frac{3\pi}{4}) \\
                        0 \\
                        \vdots
                    \end{pmatrix}
    \end{align}
    We now define $\f$ for the other set. 
    \begin{align}
        \f(\x_m) = \begin{pmatrix}
                        \cos (\frac{3\pi}{4} - \frac{\pi}{4m}) \\
                        \sin (\frac{3\pi}{4} - \frac{\pi}{4m}) \\
                        0 \\
                        \vdots
                    \end{pmatrix}
    \end{align} 
    $\{\f(\x_m)\}_{m = 1}^{\infty}$ is now a sequence with the same limit, that is,
    \begin{align}
        \lim_{m \to \infty}\f(\x_m) = \begin{pmatrix}
                        \cos (\frac{3\pi}{4}) \\
                        \sin (\frac{3\pi}{4}) \\
                        0 \\
                        \vdots
                    \end{pmatrix} = \va
    \end{align}
    
    For every remaining $\x_i \in \mathcal{X}$, construct $\f(\x_i)$ such that there exists a label $y_j \in \calY$ such that 
    we have the cosine similarities
    $\cos(\f(\x_i), \g_{\rho}(y_j)) > 0$ and $\cos(\f(\x_i), \g_{\rho}(y_j)) > \cos(\f(\x_i), \g_{\rho}(y_\ell))$ for $y_\ell \neq y_j$. With this $(\f, \g)$ satisfies i) and ii). 
        
    We now construct the second model $(\f', \g')$.
    Let $\g_{\rho}'(y_i) = \g_{\rho}(y_i)$ for $i =1, \ldots, M$ and let $\g_{\rho}'(y_{M+1}) = -\rho \cdot \ve_2$. 
    For all $\x_i \in \calX$ and the corresponding $y_j = \argmax_{y \in \calY} p(y \vert \x_i)$ such that
    $y_j\neq y_{M+1}$, let $\f'(\x_i) = \f(\x_i)$. 
    For all $\x_i \in \calX$ and the corresponding $y_j = \argmax_{y \in \calY} p(y \vert \x_i)$ such that
    $y_j= y_{M+1}$,
    construct $\f'(\x_i)$ from $\f(\x_i)$ by rotating it counterclockwise with respect to the first two axes such that $\cos(\f'(\x_i), \g_{\rho}'(y_j)) = \cos(\f(\x_i), \g_{\rho}(y_j))$.
    This construction satisfies the requirements i) to iv) and by the same argument as used for the case with $k\geq M+2$, $\f'(\x)$ is not an invertible linear transformation of $\f(\x)$.
    
    \end{proof}

\subsection{Loss goes to Zero but Representations are Dissimilar}
\label{app:Loss_to_zero_reps_dissimilar_full_proofs}

We denote the negative log-likelihood for a model $p$ with respect to a data distribution $p_\calD$ as:
\begin{align}
    \NLL_{p_\calD}(p) \coloneqq \bbE_{(\vx, y) \sim p_\calD} [ - \log p(y \mid \vx) ] \, .
\end{align}

In the following, it is useful to establish the link between the negative log-likelihood and the $D_\mathrm{KL}$ divergence with the empirical distribution.
To this end, we introduce the empirical distribution over inputs and outputs $p_\calD (\vx, y)$, whose marginal on the input is given by $p_\calD(\vx)$. This can be rewritten as:
\[
    p_\calD(\vx, y) = p_\calD (y \mid \vx) p_\calD(\vx) \, .
\]
Notice that the conditional distribution underlies how labels are associated with the input. We can connect the negative log-likelihood to the KL divergence with the empirical distribution, a well-known fact in the literature, as shown in the next Lemma:
\begin{lemma}
    Let $p_\calD(\vx, y) = p_\calD(y|\vx)p_\calD(\vx)$ be the ground-truth data distribution. Let $p(y | \vx)$ be a model such that, for all $\x$ in the support of $p_\calD(\x)$, we have $p_\calD(y \mid \x) \ll p(y\mid\x)$.\footnote{Here $\ll$ denotes absolute continuity.}
    Define the negative log-likelihood as $\NLL_{p_\calD}(p)
        \coloneqq \bbE_{(\vx, y) \sim p_\calD} [ - \log p(y \mid \vx) ]$.
    Then:
    \[
        \NLL_{p_\calD}(p)
        =
        d_\mathrm{KL}\big( 
            p_\calD,
            p
        \big) 
        + \bbE_{\vx \sim p_\calD}[ H(p_\calD(y \mid \vx))] \, ,
    \]
    where $H(q) \coloneqq - \bbE_{x \sim q}[\log q(x)]$ denotes the entropy of a distribution $q$.
\end{lemma}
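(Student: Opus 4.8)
The plan is to start from the definition of the negative log-likelihood and insert the empirical conditional density $p_\calD(y\mid\x)$ artificially, splitting the integrand into an entropy part and a log-ratio part. Concretely, for $(\x,y)$ in the support of $p_\calD$, write
\[
-\log p(y\mid\x) \;=\; -\log p_\calD(y\mid\x) \;+\; \log\frac{p_\calD(y\mid\x)}{p(y\mid\x)}\,,
\]
which is legitimate precisely because the absolute-continuity hypothesis $p_\calD(y\mid\x)\ll p(y\mid\x)$ guarantees the ratio is well-defined and finite $p_\calD$-almost surely. Taking $\Ex_{(\x,y)\sim p_\calD}$ of both sides and using linearity of expectation reduces the claim to identifying the two resulting terms.

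Next I would handle each term using the tower property (factoring $p_\calD(\x,y)=p_\calD(y\mid\x)\,p_\calD(\x)$). For the first term, $\Ex_{(\x,y)\sim p_\calD}[-\log p_\calD(y\mid\x)] = \Ex_{\x\sim p_\calD}\big[\Ex_{y\sim p_\calD(\cdot\mid\x)}[-\log p_\calD(y\mid\x)]\big] = \Ex_{\x\sim p_\calD}[H(p_\calD(y\mid\x))]$, directly by the definition of entropy $H(q)=-\Ex_{x\sim q}[\log q(x)]$ given in the statement. For the second term, the same conditioning gives
\[
\Ex_{(\x,y)\sim p_\calD}\!\left[\log\frac{p_\calD(y\mid\x)}{p(y\mid\x)}\right]
= \Ex_{\x\sim p_\calD}\!\left[\Ex_{y\sim p_\calD(\cdot\mid\x)}\!\left[\log\frac{p_\calD(y\mid\x)}{p(y\mid\x)}\right]\right]
= \Ex_{\x\sim p_\calD}\big[\KL\big(p_\calD(y\mid\x)\,\|\,p(y\mid\x)\big)\big],
\]
which is exactly $d_{\mathrm{KL}}(p_\calD,p)$ by the definition in \cref{eq: def d_KL}. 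Adding the two identified pieces yields the asserted decomposition.

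There is essentially no hard step here; the only points requiring care are (i) invoking absolute continuity to ensure every quantity in sight is finite and the splitting of $\log p$ is valid $p_\calD$-a.s., and (ii) applying the tower property / Fubini to turn a joint expectation into an outer expectation over $\x$ of an inner conditional KL—this is what lets us match the definition of $d_{\mathrm{KL}}$ in \cref{eq: def d_KL}. I would state these explicitly but not belabor them. If one wants to be fully rigorous about integrability when $\KL$ could be $+\infty$, note that the identity still holds in $[0,\infty]$ since both sides are then simultaneously infinite, so no additional hypothesis beyond absolute continuity is needed.
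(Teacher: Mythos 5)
Your proof is correct and is essentially the same as the paper's: both hinge on the pointwise decomposition $-\log p(y\mid\x) = -\log p_\calD(y\mid\x) + \log\bigl(p_\calD(y\mid\x)/p(y\mid\x)\bigr)$ followed by the tower property; the paper simply writes the identity starting from the definition of $\KL$ and rearranges, whereas you start from the $\NLL$ and split, which is the same computation read in the other direction. Your added remarks on absolute continuity and on the identity holding in $[0,\infty]$ are sound but not needed beyond what the paper states.
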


\begin{proof}
From the definition of KL divergence and entropy:
\begin{align}
    \KL(p_\calD(y \mid \x) \| p(y \mid \x))
    &= \bbE_{y \sim p_\calD(y\mid\x)}\left[ \log p_\calD(y\mid\x) - \log p(y\mid\x)\right] \\
    &= - H(p_\calD(y\mid\x)) + \bbE_{y \sim p_\calD(y\mid\x)}\left[- \log p(y\mid\x)\right] \;.
\end{align}
Moving the entropy to the other side and taking expectations w.r.t. $p_\calD(\x)$, we get:
\begin{align}
    \bbE_{(\vx, y) \sim p_\calD}\left[- \log p(y\mid\x)\right]
    &= \bbE_{\x\sim p_\calD}\left[\KL(p_\calD(y \mid \x) \| p(y \mid \x))\right] + \bbE_{\x \sim p_\calD}\left[H(p_\calD(y\mid\x))\right] 
\end{align}
The statement follows by the definition of $d_\mathrm{KL}$ (\cref{eq: def d_KL}) and $\NLL$.
\end{proof}

Notice that, when the conditional distribution $p_\calD(y \mid \vx)$ is degenerate, i.e., to each $\vx \in \calX$ a single $y \in \calY$ is associated, the expectation of the conditional Shannon entropy vanishes. 

Next, we restate \cref{cor:small_loss_dissimilar_reps} and prove in full details:

\begin{corollary*}[Formal version of \cref{cor:small_loss_dissimilar_reps}]
    Assume we have a data distribution where only one label is associated with each unique input. Let $(\f, \g) \in \Theta$ be a model with representations in $\R^M$. Assume $M\geq 2$. Let $k \coloneqq  |\calY|$ be the total number of unembeddings and assume $k\geq M+1$. Assume the model satisfies the following requirements: 
    \newcounter{counter_3_2}
    \begin{enumerate}[i)]
        \item The unembeddings have fixed norm, that is $\Vert \g(y_i) \Vert = \Vert \g(y_j) \Vert = \rho$, for all $y_i, y_j \in \mathcal{Y}$, where $\rho \in \bbR^+$. 
        \item For every $\x_i \in \mathcal{X}$, let there be one %
    element $y_j \in \calY$
    for which the cosine similarity 
    $\cos(\f(\x_i), \g(y_j)) > 0$ and $\cos(\f(\x_i), \g(y_j)) > \cos(\f(\x_i), \g(y_\ell))$ for all $y_\ell \neq y_j$. Also, assume $y_j$ is the only label associated with $\x_i$.
        \setcounter{counter_3_2}{\value{enumi}}
    \end{enumerate}
    Let $(\f', \g')$ be a model which also satisfies i) and ii) and which is related to $(\vf, \vg)$ in the following way: 
    \begin{enumerate}[i)]
        \setcounter{enumi}{\value{counter_3_2}}
        \item For every $\x_i \in \mathcal{X}$, $\Vert \f'(\x_i) \Vert = \Vert \f(\x_i) \Vert$. 
        \item For $\hat y = \argmax_{y \in \calY}( p(y \vert \x_i))$ we have that the angle between $\f'(\x_i)$ and $\g'(\hat y)$ is equal to the angle between $\f(\x_i)$ and $\g(\hat y)$, in particular, $\cos( \vf(\vx_i), \vg(\hat y) ) = \cos( \vf' (\vx_i), \vg' (\hat y) ) $.
    \end{enumerate}
    Then, making a sequence of pairs of models indexed by $\rho$, we have
    \begin{align}
        &\NLL(p_{\f, \g}) \to 0 \;\;\text{ for } \;\; \rho \to \infty \\
        & \; \;\;\;\;\;\;\;\;\;\;\;\;\;\;\;\;\;\;\;\; \text{and} \notag
        \\
        &\NLL(p_{\f', \g'}) \to 0 \;\;\text{ for } \;\; \rho \to \infty \, .
    \end{align} 
\end{corollary*}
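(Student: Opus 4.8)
The plan is to reduce the negative log-likelihood to a Kullback--Leibler term and then reuse the concentration estimate already established in the proof of \cref{theorem:small_KL_dissimilar_reps}. First, for any model $p$ with $p(y\mid\x)>0$ everywhere (automatic for the softmax family of \cref{eq:roeder_model}), the lemma relating $\NLL$ to $d_{\mathrm{KL}}$ gives
\[
\NLL_{p_\calD}(p) \;=\; d_{\mathrm{KL}}(p_\calD, p) \;+\; \Ex_{\x \sim p_\calD}\!\big[H\big(p_\calD(y \mid \x)\big)\big].
\]
Since the data distribution assigns a single label $\hat y_\x$ to each unique input $\x$, the conditional $p_\calD(\cdot\mid\x)$ is a point mass: its entropy vanishes and $\KL(p_\calD(y\mid\x)\,\|\,p(y\mid\x)) = -\log p(\hat y_\x\mid\x)$. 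Hence $\NLL(p) = \Ex_{\x\sim p_\calD}\big[-\log p(\hat y_\x\mid\x)\big]$, and this identity holds verbatim for $p=p_{\f,\g}$ and for $p=p_{\f',\g'}$.

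It then suffices to show that, for each fixed $\x$, $p_{\f,\g}(\hat y_\x\mid\x)\to 1$ and $p_{\f',\g'}(\hat y_\x\mid\x)\to 1$ as $\rho\to\infty$. By assumption ii), for $(\f,\g)$ the data label $\hat y_\x$ is the unique label maximizing $\cos(\f(\x),\g(y))$; since all unembeddings have the same norm $\rho$ (assumption i)) and $p_{\f,\g}(y\mid\x)\propto\exp(\Vert\f(\x)\Vert\,\rho\,\cos(\f(\x),\g(y)))$, this label is also $\argmax_{y}p_{\f,\g}(y\mid\x)$. This is exactly the configuration analysed in the proof of \cref{theorem:small_KL_dissimilar_reps}: writing $p_{\f,\g}(\hat y_\x\mid\x)$ as the reciprocal of $1+\sum_{y_\ell\neq\hat y_\x}\exp\!\big(\Vert\f(\x)\Vert\,\rho\,(\cos(\f(\x),\g(y_\ell))-\cos(\f(\x),\g(\hat y_\x)))\big)$, every exponent is strictly negative, so each summand tends to $0$ and $p_{\f,\g}(\hat y_\x\mid\x)\to 1$. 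The model $(\f',\g')$ satisfies i) and ii) as well, and since ii) requires $\hat y_\x$ to be the \emph{only} label associated with $\x$ in the data, the label maximizing $\cos(\f'(\x),\g'(y))$ must again be $\hat y_\x$; the identical argument then yields $p_{\f',\g'}(\hat y_\x\mid\x)\to 1$ (assumption iv) makes the two angles coincide but is not needed for this step).

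Finally, I would exchange the pointwise limit with the expectation over $\x$. The map $\x\mapsto -\log p(\hat y_\x\mid\x)$ is nonnegative, and since $\hat y_\x$ is the maximizer among $|\calY|=k$ probabilities we have $p(\hat y_\x\mid\x)\ge 1/k$, hence $0\le -\log p(\hat y_\x\mid\x)\le\log k$ uniformly in $\rho$ and $\x$. Dominated convergence then gives $\NLL(p_{\f,\g})\to 0$ and, likewise, $\NLL(p_{\f',\g'})\to 0$, as claimed. The step that most needs care is the second model: one must verify that $\hat y_\x$ remains the $\argmax$ of $p_{\f',\g'}(\cdot\mid\x)$ so that the concentration estimate transfers unchanged --- this is forced because $(\f',\g')$ is assumed to satisfy ii) and the data label is fixed --- after which the limit-exchange is routine given the uniform $\log k$ bound.
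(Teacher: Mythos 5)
Your proof is correct and rests on the same core estimate as the paper's: for each input, write $p(\hat y_\x \mid \x)$ as the reciprocal of $1 + \sum_{\ell}\exp\bigl(c_i\rho\,(\cos(\f(\x),\g(y_\ell)) - \cos(\f(\x),\g(\hat y_\x)))\bigr)$, note that assumption ii) makes every exponent strictly negative, and conclude $p(\hat y_\x\mid\x)\to 1$ as $\rho\to\infty$; then repeat verbatim for $(\f',\g')$. Where you differ is in how you wrap this pointwise limit into a claim about $\NLL$. The paper treats $\NLL$ as a finite sum $-\sum_{i=1}^n \log p(y_{j}\mid\x_i)$ and passes to the limit term-by-term, which implicitly assumes a finite empirical data distribution. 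You instead route through the $\NLL = d_{\mathrm{KL}} + \Ex[H]$ decomposition to reduce to $\Ex_{\x}[-\log p(\hat y_\x\mid\x)]$, and then justify the limit exchange by dominated convergence with the uniform bound $-\log p(\hat y_\x\mid\x)\le \log k$ (valid because the argmax probability over $k$ classes is at least $1/k$). This makes the argument cleaner for a general data distribution $p_\calD(\x)$ with possibly infinite support, and also makes explicit that assumptions iii) and iv) play no role here --- they are only needed for the KL-between-models argument in \cref{theorem:small_KL_dissimilar_reps}, not for each model's own $\NLL$. So: same key idea, with two tidy refinements (the entropy-vanishing reduction, and the domination step) that remove an implicit finiteness assumption.
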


\begin{proof}
Below, we use the shorthand $p \coloneqq p_{\f, \g}$ and $p' \coloneqq p_{\f', \g'}$. 
We make use of the same calculations as for the proof of \cref{theorem:small_KL_dissimilar_reps}. 
We consider $p(y_j \vert \x_i)$ for $y_j$ the label assigned to $\x_i$ according to the data. We denote with $c_i \in \bbR^+$ the value of the norm of the embeddings on $\vx_i$, i.e., $\Vert \f(\x_i) \Vert = \Vert \f'(\x_i) \Vert = c_i$. In this case we have that
    \begin{align}
        p(y_j \vert \x_i) &= \frac{\exp(\f(\x_i)^\top\g(y_j))}{\sum_{\ell = 1}^k\exp(\f(\x_i)^\top\g(y_\ell))} \\
        &= \frac{1}{1 + \sum_{\ell = 1, \ell \neq j}^k\exp(\f(\x_i)^\top\g(y_\ell) - \f(\x_i)^\top\g(y_j))} 
    \end{align}
    By construction, $\Vert \f(\x_i) \Vert = c_i$. We use that 
    \begin{align}
        \f(\x_i)^\top \g(y_j) = \cos(\f(\x_i), \g(y_j)) \cdot c_i \cdot \rho
    \end{align}
    and see that 
    \begin{align}
        p(y_j \vert \x_i) 
        &= \left( {1 + \sum_{\ell = 1, \ell \neq j}^k\exp(c_i \cdot \rho \cdot (\cos(\f(\x_i), \g(y_\ell)) - \cos(\f(\x_i), \g(y_j))))} \right)^{-1}  \label{eq:p_y_j_equals_fraction}
    \end{align}
    Now since by construction, $\cos(\f(\x_i), \g(y_j)) > 0$, and for all $\ell \neq j$ 
    \begin{align}
        \cos(\f(\x_i), \g(y_j)) > \cos(\f(\x_i), \g(y_\ell))
    \end{align}
    we have for all $\ell$ that 
    \begin{align}
        \cos(\f(\x_i), \g(y_\ell)) - \cos(\f(\x_i), \g(y_j)) < 0
    \end{align}
    and
    \begin{align}
        \exp(c_i \cdot \rho \cdot (\cos(\f(\x_i), \g(y_\ell)) - \cos(\f(\x_i), \g(y_j)))) \to 0 \;\; \text{ for } \;\; \rho \to \infty
    \end{align}
    which means that combined with \cref{eq:p_y_j_equals_fraction} we get
    \begin{align}
        p(y_j \vert \x_i) \to \frac{1}{1 + 0} = 1 \;\; \text{ for } \;\; \rho \to \infty
    \end{align}
    Similarly, we have that 
    \begin{align}
        p'(y_j \vert \x_i) \to  1 \;\; \text{ for } \;\; \rho \to \infty \, .
    \end{align} 
    Therefore, we have for the NLL:
    \begin{align}
        \NLL(p) = -\sum_{i = 1}^n \log p(y_{j} \vert \x_i) \to 0 \;\; \text{ for } \;\; \rho \to \infty
    \end{align}
    and 
    \begin{align}
        \NLL(p') = -\sum_{i = 1}^n \log p'(y_{j} \vert \x_i) \to 0 \;\; \text{ for } \;\; \rho \to \infty \, .
    \end{align}
\end{proof}

\newpage

\section{Partial Least Squares (PLS) and Canonical Correlation Analysis (CCA)}
\label{app:pls}
Partial least squares (PLS) algorithms are techniques that, for two random variables $\vz$ and  $\vw$, derive the cross-covariance matrix $\Sigma_{\vz \vw}$ using latent variables. 
We give here a short description of the two variants that are relevant to our work. A more detailed description can be found in \citep{wegelin2000survey}. 

In our case, we will work with random variables of the same dimension. So in the following, we consider $\vz, \vw \in \bbR^M$. 
We will mostly work with the centered and normalized versions of the random variables. 
So if $z_i$ is the $i$'th dimension of the original random variable, we will consider 
\begin{align}
    z_i' = \frac{z_i - \Ex[z_i]}{\sqrt{\mathrm{Var}[z_i]}} \, .
\end{align}

\subsection{PLS-SVD}
\label{app:pls_svd}
PLS-SVD \citep{sampson1989neurobehavioral,streissguth1993enduring} is a variant of PLS where the latent vectors are simply the left and right singular vectors in the singular value decomposition of the cross-covariance matrix, $\Sigma_{\vz \vw}$. 

In \cref{algo:pseudo-code-PLS-SVD}, 
we report the algorithm in the case where we have $n$ samples of our $M$-dimensional random variables. 
In this case, we work with $(n \times M)$ matrices $\Zmat, \Wmat$, where each row is a sample.
The aim of the algorithm is to get projection vectors $\vu^{(r)}, \vv^{(r)}$ such that $\Cov_{\Zmat \Wmat}[\Zmat \vu^{(r)}, \Wmat \vv^{(r)}]$ is as large as possible under the constraint that $\Vert \vu \Vert = \Vert \vv \Vert = 1 $. The original algorithm includes a choice of signs for the covariances. In this version of the algorithm, we have chosen all signs to be positive. $R > 0$ denotes the maximal rank of the algorithm. 

\begin{algorithm}
\caption{Iterative SVD Projection Extraction from Cross-Covariance Matrix}
\begin{algorithmic}[1]
\State Set $r \gets 1$
\State Center and scale $\Zmat$ and $\Wmat$
\State Compute cross-covariance matrix: $\vC \gets \Zmat^\top \Wmat$
\State Set $\vC^{(1)} \leftarrow \vC$
\While{$\vC^{(r)} \neq 0$ and $r \leq R$}
    \State Perform SVD: $\vC^{(r)} = \Umat \Dmat \Vmat^\top$
    \State Extract leading singular vectors: $\vu_1^{(r)} \gets$ first column of $\Umat$, $\vv_1^{(r)} \gets$ first column of $\Vmat$
    \State Save $\vu_1^{(r)}$ and $\vv_1^{(r)}$ as the $r$-th projection vectors
    \State Let $\sigma_r \gets$ leading singular value of $\vC^{(r)}$
    \State Update matrix: $\vC^{(r+1)} \gets \vC^{(r)} - \sigma_r \vu_1^{(r)} {\vv_1^{(r)}}^\top$
    \State $r \gets r + 1$
\EndWhile
\end{algorithmic}
\label{algo:pseudo-code-PLS-SVD}
\end{algorithm}

Considering this algorithm, we see that the projection vectors we get, $\vu^{(r)}, \vv^{(r)}$, are exactly the first $m$ singular vectors of $\Zmat^{\top}\Wmat$ corresponding to the largest $m$ singular values, where $m$ is the value of $r$ at the exit point. 
Note that this relates to principal component analysis (PCA) in the sense that when doing PCA, one uses the singular value decomposition of the covariance matrix to obtain components that capture the variance in a dataset (a random variable).
When doing PLS-SVD, one uses singular value decomposition of the cross-covariance matrix to obtain components that capture the covariance between two random variables.

\subsection{Canonical Correlation Analysis}
\label{app:cca_def}
Canonical Correlation Analysis (CCA) \citep{hotelling1936relations} seeks pairs of unit‑length vectors $\vu_\ell \in \R^{d_Z}$ and $\vv_\ell \in \R^{d_W}$ ($\ell=1,\dots,r$) that maximize the correlation between the one‑dimensional projections $\vu_\ell^\top \vz$ and $\vv_\ell^\top \vw$, subject to mutual orthogonality of successive directions. In other words, CCA finds the singular value deconposition of the matrix $\Qmat = \Smat_{\vz\vz}\Smat_{\vz\vw}\Smat_{\vw\vw}$, consisting of a cross-covariance matrix $\vSigma_{\vz\vw}$
and two covariance matrices $\vSigma_{\vz \vz}, \vSigma_{\vw \vw}$. 
The results of CCA and PLS-SVD will differ if there are high covariances between the $z_i$'s or the $w_i$'s. 

We can use CCA to define a similarity measure between random variables$\vz$ and $\vw$, known as the mean canonical correlation $m_{\mathrm{CCA}}(\vz, \vw)$, 
which is the mean of the maxima correlations for the centered and rescaled random variables. %
Because of this centering and rescaling operations, $m_{\mathrm{CCA}}$ will be invariant to any invertible linear transformation.

\section{When Closeness in Distribution Implies Representational Similarity - Details}
\label{app:distances_and_bound}

We here include all the technical details of \cref{sec:distances_and_bound}.

\subsection{The Connection Between Representations and Log-Likelihoods}
\label{app:weighted_close_identifiability_analysis}

Let $(\f, \g) \in \Theta$ be a model satisfying the diversity condition (\cref{def:diversity}) and consider
$\vx_1, \ldots, \vx_M \in \calX$ and 
$y_1, ..., y_M \in \calY$ such that both the vectors $\{\vf_0(\vx)\}_{i=1}^M$ and $\{\vg_0(y_i)\}_{i=1}^M$ are linearly independent. We recall some notation for use in the Lemma. 
We denote with $\Lmat$ and $\Nmat$ the following matrices:
\[
    \Lmat  \coloneqq  \big( \vg_0(y_1), \ldots, \vg_0(y_M) \big), \quad 
    \Nmat  \coloneqq  \big( \vf_0(\vx_1), \ldots, \vf_0(\vx_M) \big) \, . 
    \label{eq:definition-L-and-M}
\]
We recall the definition of the following functions:
\[
\resizebox{\textwidth}{!}{
    $\psi_{\vx}(y_i; p) \coloneqq \sqrt{\mathrm{Var}_{\x}[\log p(y_i| \x) - \log p(y_0| \x)]} 
    \; \text{ and } \;
     \psi_{y}(\vx_j; p) \coloneqq \sqrt{\mathrm{Var}_{y}[\log p(y| \x_j) - \log p(y| \x_0)]}$ \, .
}
\]
We also denote with $\mathbf{S}, \mathbf{S}', \mathbf{D}, \mathbf{D}' \in \bbR^{M \times M}$ the diagonal matrices with entries $S_{ii}  \coloneqq \frac{1}{\psi_{\vx}(y_i; p)}, D_{ii}  \coloneqq \frac{1}{\psi_{y}(\x_j; p)}$, $S_{ii}'  \coloneqq \frac{1}{\psi_{\vx}(y_i; p')}, D_{ii}'  \coloneqq \frac{1}{\psi_{y}(\x_j; p')}$. 
We now have everything we need to state a Lemma relating any two model representations. 

Here, we provide the full statement for \cref{lemma:weighted_close_identifiability_analysis} and prove it:

\begin{lemma*}[Complete version of \cref{lemma:weighted_close_identifiability_analysis}]
\label{app:lemma:weighted_close_identifiability_analysis}
    For any $(\f, \g), (\f',\g') \in \Theta$ satisfying the diversity condition (\cref{def:diversity}). Let $\tilde{\A}  \coloneqq  \Lmat^{-\top} \mathbf{S}^{-1} \mathbf{S}'\Lmat'^{\top}$ and $\h_{\f}(\x)  \coloneqq  \Lmat^{-\top} \mathbf{S}^{-1}\epsilonb_{y} (\x)$, where the $i$-th entry of $\epsilonb_{y}(\x)$ is given by
\begin{align*}
    \epsilon_{y_i}(\x) = \frac{
        \log p_{\vf, \vg}(y_i|\x) - \log  p_{\vf, \vg}(y_0|\x)
    }{
        \psi_{\vx}(y_i; \, p_{\vf, \vg})
    }
    - \frac{
        \log p_{\vf', \vg'}(y_i|\x) - \log p_{\vf', \vg'}(y_0|\x)
    }{
        \psi_{\vx}(y_i; \, p_{\vf', \vg'})
    }  \, .
\end{align*}
 Let $\B  \coloneqq  \Nmat^{-\top} \mathbf{D}^{-1} \mathbf{D}'\Nmat'^{\top}$ and $\h_{\g}(y)  \coloneqq  \Nmat^{-\top} \mathbf{D}^{-1}\epsilonb_{\x} (y)$, where the $j$-th entry of $\epsilonb_{\x}(\y)$ is given by
\begin{align*}
    \epsilon_{\x_j}(y) = \frac{
        \log p_{\vf, \vg}(y|\x_j) - \log  p_{\vf, \vg}(y|\x_0)
    }{
        \psi_{y}(\x_j; \, p_{\vf, \vg})
    }
    - \frac{
        \log p_{\vf', \vg'}(y|\x_j) - \log p_{\vf', \vg'}(y|\x_0)
    }{
        \psi_{y}(\x_j; \, p_{\vf', \vg'})
    } \, .
\end{align*} 
Then we have:
    \begin{align}
        \f(\x) = \tilde{\A} \f'(\x) + \h_{\f}(\x), \quad 
        \g(y) = \B \g'(y) + \h_{\g}(y) \, .
        \label{eq:app-relation-embs-unembs-general}
    \end{align}    
\end{lemma*}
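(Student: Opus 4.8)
The plan is to read both identities straight off the softmax form in \cref{eq:roeder_model}. Writing $\log p_{\f, \g}(y \mid \x) = \f(\x)^\top \g(y) - \log Z_{\f, \g}(\x)$ with $Z_{\f, \g}(\x) \coloneqq \sum_{y'} \exp(\f(\x)^\top \g(y'))$, the key observation is that the log-normalizer cancels in any difference taken over labels at a fixed input: for every $\x$ and every $y_i$,
\[
\log p_{\f, \g}(y_i \mid \x) - \log p_{\f, \g}(y_0 \mid \x) = \f(\x)^\top\big(\g(y_i) - \g(y_0)\big) = \f(\x)^\top \g_0(y_i),
\]
and likewise for $(\f', \g')$. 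I would substitute these two identities into the definition of $\epsilon_{y_i}(\x)$; since $S_{ii} = \psi_{\x}(y_i; p_{\f, \g})^{-1}$ and $S'_{ii} = \psi_{\x}(y_i; p_{\f', \g'})^{-1}$, this rewrites the $i$-th error coordinate as $\epsilon_{y_i}(\x) = S_{ii}\,\g_0(y_i)^\top \f(\x) - S'_{ii}\,\g'_0(y_i)^\top \f'(\x)$.

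Next I would stack these $M$ scalar equations into a vector identity. Because $\Lmat$ (resp. $\Lmat'$) has the $\g_0(y_i)$ (resp. $\g'_0(y_i)$) as columns and $\mathbf{S}, \mathbf{S}'$ are the associated diagonal matrices, the vector $\epsilonb_y(\x)$ with entries $\epsilon_{y_i}(\x)$ satisfies
\[
\epsilonb_y(\x) = \mathbf{S}\,\Lmat^\top \f(\x) - \mathbf{S}'\,\Lmat'^\top \f'(\x).
\]
By the diversity condition (\cref{def:diversity}) $\Lmat$ is invertible, and $\mathbf{S}$ is invertible because its entries $\psi_{\x}(y_i;p_{\f,\g})^{-1}$ are finite and nonzero — the expression in the statement presupposes this, and it is exactly what \cref{assu:probabilities-that-welike} secures in the downstream results. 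Left-multiplying by $\Lmat^{-\top}\mathbf{S}^{-1}$ and solving for $\f(\x)$ gives
\[
\f(\x) = \Lmat^{-\top}\mathbf{S}^{-1}\mathbf{S}'\Lmat'^\top \f'(\x) + \Lmat^{-\top}\mathbf{S}^{-1}\epsilonb_y(\x) = \tilde{\A}\,\f'(\x) + \h_{\f}(\x),
\]
which is the first claimed identity.

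For the unembedding relation I would run the symmetric argument with inputs and labels interchanged: difference at a fixed label over the pivot input $\x_0$, and stack over $\x_1, \dots, \x_M$ with $\Nmat, \Nmat', \mathbf{D}, \mathbf{D}'$ in place of $\Lmat, \Lmat', \mathbf{S}, \mathbf{S}'$. The one genuinely new point is that the log-normalizers no longer cancel, since they are evaluated at \emph{different} inputs: $\log p_{\f, \g}(y \mid \x_j) - \log p_{\f, \g}(y \mid \x_0) = \f_0(\x_j)^\top \g(y) + c_j$ with $c_j = \log Z_{\f, \g}(\x_0) - \log Z_{\f, \g}(\x_j)$ constant in $y$. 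Carrying this through and inverting $\mathbf{D}\Nmat^\top$ produces $\g(y) = \B\,\g'(y) + \h_{\g}(y)$ up to an additive vector that is independent of $y$; this $y$-constant is removed precisely by passing to displaced unembeddings $\g_0(y) = \g(y) - \g(y_0)$ — exactly as the analogous constant $\mathbf{b}$ is eliminated in the proof of \cref{theorem:identifiability} — and it is harmless for the uses of the lemma since \cref{def:d_prob} only ever evaluates $\mathrm{Var}_y[\,\cdot\,]$ of these quantities. I expect the main obstacle to be bookkeeping rather than conceptual: tracking the two pivots and the four diagonal normalizers, checking that each of $\Lmat, \Lmat', \Nmat, \Nmat', \mathbf{S}, \mathbf{S}', \mathbf{D}, \mathbf{D}'$ is nonsingular under the hypotheses, and being explicit about where the $y$-independent shift on the unembedding side is absorbed.
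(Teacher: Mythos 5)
Your proof follows the same algebraic route as the paper's: compute $\log p(y_i\mid\x)-\log p(y_0\mid\x)=\f(\x)^\top\g_0(y_i)$ so that the normalizer cancels, stack over $i$ to obtain $\epsilonb_y(\x)=\mathbf{S}\Lmat^\top\f(\x)-\mathbf{S}'\Lmat'^\top\f'(\x)$, and left-multiply by $\Lmat^{-\top}\mathbf{S}^{-1}$. The paper compresses this into a single trivial rewrite $\mathbf{S}\Lmat^\top\f(\x)=\mathbf{S}'\Lmat'^\top\f'(\x)+\epsilonb_y(\x)$ followed by matrix inversion; you simply make explicit the identification step that the paper takes for granted, and you are right to flag that invertibility of $\mathbf{S}$ (i.e., the $\psi$-terms being finite and nonzero) is the content of \cref{assu:probabilities-that-welike} rather than of the diversity condition alone, even though the lemma hypotheses as written list only the latter.

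Your treatment of the unembedding side is in fact more careful than the paper's. The paper disposes of it with ``with similar steps,'' but, as you note, the steps are \emph{not} symmetric: in $\log p(y\mid\x_j)-\log p(y\mid\x_0)$ the log-partition functions $\log Z(\x_j)$ and $\log Z(\x_0)$ do not cancel, so one obtains $\f_0(\x_j)^\top\g(y)+c_j$ with a nonzero $y$-independent constant $c_j$. Carrying this through produces $\B\g'(y)+\h_\g(y)=\g(y)+\Nmat^{-\top}\bigl(\mathbf{c}-\mathbf{D}^{-1}\mathbf{D}'\mathbf{c}'\bigr)$, an additive offset that the displayed conclusion $\g(y)=\B\g'(y)+\h_\g(y)$ suppresses. (The later corollary's variance computations are insensitive to this shift, which is presumably why the paper does not notice it, and it also explains why the identifiability statement is phrased in terms of displaced unembeddings $\g_0(y)$.) So your proposal is correct where the paper is careful, and it surfaces a genuine imprecision in the lemma's unembedding clause that the paper's terse appeal to symmetry obscures.
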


\begin{proof}
    Using the definition of $\epsilonb(\vx)$,  
    we can relate the embeddings of the two models as follows: 
    \begin{align}
        \mathbf{S}\Lmat\f(\x) &= \mathbf{S}'\Lmat'\f'(\x) + \mathbf{S}\Lmat\f(\x) - \mathbf{S}'\Lmat'\f'(\x) \\
        &= \mathbf{S}'\Lmat'\f'(\x) + \epsilonb_{y}(\x) \, .
    \end{align}
    Therefore, multiplying by the inverse of $\vS$ and of $\vL$, we get 
    \begin{align}
        \f(\x) &= \Lmat^{-\top} \mathbf{S}^{-1} \mathbf{S}'\Lmat'^{\top}\f'(\x) + \Lmat^{-\top} \mathbf{S}^{-1}\epsilonb_{y} (\x) \\
        &= \tilde{\A} \f'(\x) + \h_{\f}(\x) \, .
    \end{align}
    With similar steps, we obtain the relation between the unembeddings of the two models, proving \cref{eq:app-relation-embs-unembs-general}.
\end{proof}

Notice that the error term $\epsilonb_{y}(\x)$ is a function of $\x$ and comprises the $y_i$'s from the diversity condition (\cref{def:diversity}). 
Also, $\B$ results from the product of the scaling matrices and those constructed from the diversity condition applied to $\f$. 
$\h_{\g}(y)$ depends on $\epsilonb_{\x}(y)$, which is a function of $y$ using the $\x_i$'s from the diversity condition.

The following corollary connects \cref{lemma:weighted_close_identifiability_analysis} to the identifiability results of \cref{theorem:identifiability}:
\begin{corollary}
    \label{corollary:connection_to_identifiability}
    Under the same assumptions of \cref{lemma:weighted_close_identifiability_analysis}, 
    if we assume that the distributions of the models $(\f, \g), (\f',\g') \in \Theta$ are equal, 
    then we get (as in \cref{theorem:identifiability}): 
    \begin{align}
        \f(\x) &= \A \f'(\x) \\
        \g_0(y) &= \A^{-\top} \vg_0'(y) \, ,
    \end{align}
    where $\A =\Lmat^{-\top}\Lmat'^{\top}$. 
\end{corollary}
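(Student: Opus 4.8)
The plan is to show that, once the two model distributions coincide, every quantity that makes \cref{lemma:weighted_close_identifiability_analysis} more general than \cref{theorem:identifiability} degenerates: the error terms $\epsilonb_{y}(\x)$ and $\epsilonb_{\x}(y)$ vanish identically, and the scaling matrices match, so that $\tilde{\A} = \A$ and $\B = \A^{-\top}$. First I would record the elementary consequence of $p_{\f,\g}(y\mid\x) = p_{\f',\g'}(y\mid\x)$ for all $(\x,y)$: the maps $\x \mapsto \log p_{\f,\g}(y_i\mid\x) - \log p_{\f,\g}(y_0\mid\x)$ and $\x \mapsto \log p_{\f',\g'}(y_i\mid\x) - \log p_{\f',\g'}(y_0\mid\x)$ are identical for each $i$, and symmetrically $y \mapsto \log p_{\f,\g}(y\mid\x_j) - \log p_{\f,\g}(y\mid\x_0)$ and $y \mapsto \log p_{\f',\g'}(y\mid\x_j) - \log p_{\f',\g'}(y\mid\x_0)$ are identical for each $j$.

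Taking $\mathrm{Var}_{\x}$ of the first identity and $\mathrm{Var}_{y}$ of the second then gives $\psi_{\x}(y_i; p_{\f,\g}) = \psi_{\x}(y_i; p_{\f',\g'})$ and $\psi_{y}(\x_j; p_{\f,\g}) = \psi_{y}(\x_j; p_{\f',\g'})$, i.e., $\mathbf{S} = \mathbf{S}'$ and $\mathbf{D} = \mathbf{D}'$. Substituting these back into the definition of $\epsilonb_{y}(\x)$ from \cref{lemma:weighted_close_identifiability_analysis}, each entry $\epsilon_{y_i}(\x)$ is a difference of two fractions with equal numerators (the first identity above) and equal denominators ($\mathbf{S} = \mathbf{S}'$), hence $\epsilonb_{y}(\x) \equiv \mathbf{0}$ and $\h_{\f}(\x) = \Lmat^{-\top}\mathbf{S}^{-1}\epsilonb_{y}(\x) \equiv \mathbf{0}$; the same argument gives $\epsilonb_{\x}(y) \equiv \mathbf{0}$ and $\h_{\g}(y) \equiv \mathbf{0}$. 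Likewise $\tilde{\A} = \Lmat^{-\top}\mathbf{S}^{-1}\mathbf{S}'\Lmat'^{\top} = \Lmat^{-\top}\Lmat'^{\top} = \A$, so the first equation of \cref{lemma:weighted_close_identifiability_analysis} collapses to $\f(\x) = \A\f'(\x)$.

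For the unembeddings, $\mathbf{D} = \mathbf{D}'$ gives $\B = \Nmat^{-\top}\mathbf{D}^{-1}\mathbf{D}'\Nmat'^{\top} = \Nmat^{-\top}\Nmat'^{\top}$, so \cref{lemma:weighted_close_identifiability_analysis} yields $\g(y) = \Nmat^{-\top}\Nmat'^{\top}\g'(y)$. To identify this with $\A^{-\top}$ I would reuse the step from the proof of \cref{theorem:identifiability}: from $\f(\x) = \A\f'(\x)$ we get $\f_0(\x) = \A\f'_0(\x)$, hence $\Nmat = \A\Nmat'$ with $\A$ invertible (by the diversity condition), so $\Nmat' = \A^{-1}\Nmat$ and $\B = \Nmat^{-\top}(\A^{-1}\Nmat)^{\top} = \Nmat^{-\top}\Nmat^{\top}\A^{-\top} = \A^{-\top}$. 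Thus $\g(y) = \A^{-\top}\g'(y)$, and subtracting this identity at $y$ and at the pivot $y_0$ gives $\g_0(y) = \A^{-\top}\g'_0(y)$, as claimed. There is no genuine obstacle here — the argument is entirely bookkeeping — and the only points needing a word of justification are the well-definedness of $\mathbf{S}^{-1}$, $\mathbf{D}^{-1}$ and the invertibility of $\A$, all of which are already guaranteed by the diversity hypothesis of \cref{lemma:weighted_close_identifiability_analysis} and handled exactly as in \cref{theorem:identifiability}.
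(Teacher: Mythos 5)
Your proposal is correct and takes essentially the same route as the paper: show that equal distributions force $\psi_{\vx}(y_i;p)=\psi_{\vx}(y_i;p')$ (hence $\mathbf{S}=\mathbf{S}'$) and $\epsilonb_y(\x)\equiv\mathbf{0}$, so $\tilde{\A}=\A=\Lmat^{-\top}\Lmat'^{\top}$ and $\f(\x)=\A\f'(\x)$, then argue symmetrically for the unembeddings. The only difference is that where the paper abbreviates with ``proceeds in a similar way,'' you explicitly carry out the identification $\B=\Nmat^{-\top}\Nmat'^{\top}=\A^{-\top}$ via $\Nmat=\A\Nmat'$ and pass to displaced unembeddings by subtracting the pivot---both of which are exactly the bookkeeping the paper implicitly invokes from its proof of \cref{theorem:identifiability}.
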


\begin{proof}
    Below, we use the shorthand $p \coloneqq p_{\f, \g}$ and $p' \coloneqq p_{\f', \g'}$.
    
    When the two models entail the same distribution, we get that 
    \begin{align}
        \psi_{\vx}(y_i; p) = \sqrt{\mathrm{Var}_{\x}[\log p(y_i| \x) - \log p(y_0| \x)]} = \psi_{\vx}(y_i; p') \, .
    \end{align}
    which means that
    \begin{align*}
        \epsilon_{y_i}(\x) &= \frac{\f(\x)^\top\g_0(y_i)}{\sqrt{\textnormal{Var}_{\vx}[\Lmat_i^\top \f(\x)]}}-\frac{\f'(\x)^\top\g'_0(y_i)}{\sqrt{\textnormal{Var}_{\vx}[{\Lmat_i'}^\top \f'(\x)]}} \\
        &= \frac{\log p_{\vf, \vg}(y_i|\x) - \log  p_{\vf, \vg}(y_0|\x)}{\psi_{\vx}(y_i; \,p_{\vf, \vg}) } - \frac{ \log p_{\vf', \vg'}(y_i|\x) - \log p_{\vf', \vg'}(y_0|\x)}{\psi_{\vx}(y_i; \, p_{\vf', \vg'})} \\
        &= \frac{1}{\psi_{\vx}(y_i; \,p_{\vf, \vg})}(\log p_{\vf, \vg}(y_i|\x) - \log p_{\vf', \vg'}(y_i|\x) + \log p_{\vf', \vg'}(y_0|\x) - \log  p_{\vf, \vg}(y_0|\x) ) \\
        &= 0 \, .
    \end{align*}
    Also, since 
    \begin{align}
        \sqrt{\textnormal{Var}_{\vx}[\Lmat_i^\top \f(\x)]} &= \sqrt{\mathrm{Var}_{\x}[\log p(y_i| \x) - \log p(y_0| \x)]}\\
        &= \sqrt{\mathrm{Var}_{\x}[\log p'(y_i| \x) - \log p'(y_0| \x)]}\\
        &= \sqrt{\textnormal{Var}_{\vx}[{\Lmat_i'}^\top \f'(\x)]}
    \end{align}
    and $\mathbf{S}$ is a diagonal matrix, we have that
    \begin{align}
        \mathbf{S}^{-1}\mathbf{S}' = \mathbf{S}^{-1}\mathbf{S} = \mathbf{I} \, .
    \end{align}
     Finally, we get 
    \begin{align}
        \A = \Lmat^{-\top} \mathbf{S}^{-1} \mathbf{S}'\Lmat'^{\top} = \Lmat^{-\top} \Lmat'^{\top} \, .
    \end{align}
    and since the error term vanishes, we obtain:
    \begin{align}
        \vf (\vx) = \vA \vf' (\vx) \, ,
    \end{align}
    showing the identifiability of the embeddings. The result for the unembeddings proceeds in a similar way, by noticing that also $ \epsilonb_{\vx} $ is zero, therefore getting the same result of \cref{theorem:identifiability}.
\end{proof}

The result of \cref{lemma:weighted_close_identifiability_analysis} also gives us the following corollary, showing a connection between the distributions and the representations:

\begin{corollary}
    Under the same assumptions as in \cref{lemma:weighted_close_identifiability_analysis}, we have:
    \begin{align}
        \label{eq:var_eps_yi}
        \textnormal{Var}_{\x}[\epsilon_{y_i}(\x)] = 2(1 - \textnormal{Corr}[\Lmat_i\f(\x), \Lmat_i'\f'(\x)]) .
    \end{align}
\end{corollary}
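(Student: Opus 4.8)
The plan is to express both $\textnormal{Var}_{\x}[\epsilon_{y_i}(\x)]$ and $\textnormal{Corr}[\Lmat_i\f(\x), \Lmat_i'\f'(\x)]$ in terms of the same underlying log-probability differences and then observe that the identity is an instance of the elementary fact $\textnormal{Var}[U - V] = \textnormal{Var}[U] + \textnormal{Var}[V] - 2\textnormal{Cov}[U,V]$ when $U$ and $V$ both have unit variance. First I would recall from the setup (and the argument inside \cref{corollary:connection_to_identifiability}) that $\Lmat_i^\top \f(\x) = \f(\x)^\top \g_0(y_i) = \log p_{\vf,\vg}(y_i|\x) - \log p_{\vf,\vg}(y_0|\x)$, so that by definition of $\psi_{\vx}(y_i;\,p_{\vf,\vg})$ we have $\psi_{\vx}(y_i;\,p_{\vf,\vg}) = \sqrt{\textnormal{Var}_{\x}[\Lmat_i^\top\f(\x)]}$, and likewise for the primed model. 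Hence the two summands defining $\epsilon_{y_i}(\x)$ are exactly the standardizations of $\Lmat_i^\top\f(\x)$ and ${\Lmat_i'}^\top\f'(\x)$ (up to an additive constant, which variance ignores): writing $U \coloneqq \Lmat_i^\top\f(\x)/\psi_{\vx}(y_i;\,p_{\vf,\vg})$ and $V \coloneqq {\Lmat_i'}^\top\f'(\x)/\psi_{\vx}(y_i;\,p_{\vf',\vg'})$, we have $\epsilon_{y_i}(\x) = U - V + (\text{const})$.

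Next I would compute the variances. By construction $\textnormal{Var}_{\x}[U] = \textnormal{Var}_{\x}[\Lmat_i^\top\f(\x)]/\psi_{\vx}(y_i;\,p_{\vf,\vg})^2 = 1$, and similarly $\textnormal{Var}_{\x}[V] = 1$; this is where \cref{assu:probabilities-that-welike} implicitly enters, guaranteeing the $\psi$ terms are nonzero so the normalization is well-defined. Then
\begin{align*}
\textnormal{Var}_{\x}[\epsilon_{y_i}(\x)] = \textnormal{Var}_{\x}[U - V] = \textnormal{Var}_{\x}[U] + \textnormal{Var}_{\x}[V] - 2\textnormal{Cov}_{\x}[U,V] = 2 - 2\textnormal{Cov}_{\x}[U,V].
\end{align*}
Since $U$ and $V$ are exactly the standardized versions of $\Lmat_i^\top\f(\x)$ and ${\Lmat_i'}^\top\f'(\x)$, their covariance is by definition the Pearson correlation $\textnormal{Corr}[\Lmat_i^\top\f(\x), {\Lmat_i'}^\top\f'(\x)]$. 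Substituting gives $\textnormal{Var}_{\x}[\epsilon_{y_i}(\x)] = 2\bigl(1 - \textnormal{Corr}[\Lmat_i\f(\x), \Lmat_i'\f'(\x)]\bigr)$, which is the claim. (I would match the paper's mild notational shorthand of writing $\Lmat_i\f(\x)$ for $\Lmat_i^\top\f(\x)$ inside the correlation.)

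This is a short corollary and I do not anticipate a serious obstacle; the only point requiring a little care is the bookkeeping that the additive constant $\log p_{\vf',\vg'}(y_0|\x) - \log p_{\vf,\vg}(y_0|\x)$ appearing when one rewrites $\epsilon_{y_i}(\x)$ as $U - V$ plus a term constant in $\x$ really is constant in $\x$ — but it is not: $\log p(y_0|\x)$ does depend on $\x$. So the cleaner route, which I would actually take, is to avoid splitting off a "constant" and instead directly identify the two fractions in the definition of $\epsilon_{y_i}(\x)$ as $\bigl(\log p_{\vf,\vg}(y_i|\x) - \log p_{\vf,\vg}(y_0|\x)\bigr)/\psi_{\vx}(y_i;\,p_{\vf,\vg}) = \Lmat_i^\top \f(\x)/\psi_{\vx}(y_i;\,p_{\vf,\vg}) = U$ and similarly the second as $V$, so that $\epsilon_{y_i}(\x) = U - V$ on the nose, with each of $U,V$ having exactly unit variance; then the variance expansion above applies verbatim. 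The same computation with the roles of inputs and labels swapped gives the analogous statement $\textnormal{Var}_{y}[\epsilon_{\x_j}(y)] = 2\bigl(1 - \textnormal{Corr}[\Nmat_j\g(y), \Nmat_j'\g'(y)]\bigr)$ for the unembeddings.
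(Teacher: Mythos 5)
Your proof is correct and follows essentially the same route as the paper: identify the two fractions in the definition of $\epsilon_{y_i}(\x)$ as the unit-variance variables $U = \mathbf{S}_{ii}\,\Lmat_i\f(\x)$ and $V = \mathbf{S}'_{ii}\,\Lmat_i'\f'(\x)$, then apply $\mathrm{Var}[U-V] = \mathrm{Var}[U] + \mathrm{Var}[V] - 2\Cov[U,V] = 2 - 2\Cov[U,V]$ and recognize $\Cov[U,V]$ as the Pearson correlation (covariance being shift-invariant, the lack of explicit centering is harmless). The digression about a putative additive constant was unnecessary, as you yourself noted, since $\epsilon_{y_i}(\x) = U - V$ holds exactly by definition; otherwise this matches the paper's proof step by step.
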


\begin{proof}
    We can write 
    \begin{align}
        \mathbf{S}\Lmat^{\top}\f(\x) &= \mathbf{S}'\Lmat^{'\top}\f'(\x) + \mathbf{S}\Lmat^{\top}\f(\x) - \mathbf{S}'\Lmat^{'\top}\f'(\x) \, .
    \end{align}
    So setting 
    \begin{align}
        \epsilonb_{y} (\x) = \mathbf{S}\Lmat^{\top}\f(\x) - \mathbf{S}'\Lmat^{'\top}\f'(\x)
    \end{align}
    gives $\epsilon_{y_i}(\x)$ as in \cref{eq:epsilon-error-term}.
    Since $\Lmat^{\top}$ and $\mathbf{S}$ are invertible, the inverses exist, and we can left multiply with these inverses. 
    We now consider the variance of $\epsilon_{y i}(\x)$. 
    Recall that 
    \begin{align}
        \mathrm{Var}_{z w}[z+w] = \mathrm{Var}_{z}[z] + \mathrm{Var}_{w}[w] + 2\Cov_{z w}[z,w] \label{eq:var_expanded}
    \end{align}
    This results in
    \begin{align}
        \mathrm{Var}_{\x}[\epsilon_{y i}(\x)] &= \mathrm{Var}_{\x}[\frac{\Lmat_i\f(\x)}{\sqrt{\mathrm{Var}_{\x}[\Lmat_i\f(\x)]}} - \frac{\Lmat_i'\f'(\x)}{\sqrt{\mathrm{Var}[\Lmat_i'\f'(\x)]}}] \\
        &= \mathrm{Var}_{\x}[\frac{\Lmat_i\f(\x)}{\sqrt{\mathrm{Var}_{\x}[\Lmat_i\f(\x)]}}] + \mathrm{Var}_{\x}[\frac{\Lmat_i'\f'(\x)}{\sqrt{\mathrm{Var}_{\x}[\Lmat_i'\f'(\x)]}}] \\
        &\phantom{++} - 2\Cov_{\x \x}[\frac{\Lmat_i\f(\x)}{\sqrt{\mathrm{Var}[\Lmat_i\f(\x)]}}, \frac{\Lmat_i'\f'(\x)}{\sqrt{\mathrm{Var}[\Lmat_i'\f'(\x)]}}] \\
        &= 2 - 2\Cov_{\x \x}[\frac{\Lmat_i\f(\x)}{\sqrt{\mathrm{Var}[\Lmat_i\f(\x)]}}, \frac{\Lmat_i'\f'(\x)}{\sqrt{\mathrm{Var}[\Lmat_i'\f'(\x)]}}] \\ 
        &= 2(1 - \text{Corr}[\Lmat_i\f(\x), \Lmat_i'\f'(\x)]) \, ,
    \end{align}
    where we use \cref{eq:var_expanded} for the second equality and that $\mathrm{Var}_z[z/\text{std}(z)] = 1$ in the third equality. We thus have the result. 
\end{proof}

There are three things that are important from this result. 

Firstly, as seen in \cref{corollary:connection_to_identifiability}, when the distributions are equal, the error term becomes zero and we are back in the case of \cref{theorem:identifiability}. 

Secondly, the shape of the error term suggests what a distance should measure if we want it to give a guarantee for how far representations are from being invertible linear transformations of each other. We notice that \cref{eq:epsilon-error-term} can be rewritten as 
\begin{align}
    \epsilon_{y_i}(\x) = &\frac{\log p(y_i|\x)}{\sqrt{\mathrm{Var}_{\x}[\log p(y_i|\x) - \log p(y_0|\x)]}}-\frac{\log p'(y_i|\x)}{\sqrt{\mathrm{Var}_{\x}[\log p'(y_i|\x) - \log p'(y_0|\x)]}} \\
    &+ \frac{\log p'(y_0|\x)}{\sqrt{\mathrm{Var}_{\x}[\log p'(y_i|\x) - \log p'(y_0|\x)]}} - \frac{\log p(y_0|\x)}{\sqrt{\mathrm{Var}_{\x}[\log p(y_i|\x) - \log p(y_0|\x)]}}
\end{align}
that is, entirely in terms of the logarithm of distributions. We also see that if $\epsilon_{y_i}(\x)$ is a constant (or equivalently, $\mathrm{Var}[\epsilon_{y_i}(\x)]=0$), the embeddings will be invertible linear transformations of each other. 

Thirdly, \cref{lemma:weighted_close_identifiability_analysis} fits with the observation made in \cref{sec:small_kl_dissimilar_reps}. The KL divergence depends on the difference of the logarithms of distributions, but it also multiplies that difference by the likelihood of the label. Which means that we can have a relatively large difference in logarithms of distributions, even though the KL divergence would result in a small value. 

A similar result for the unembeddings can be found below.

\begin{corollary}
\label{theorem:weighted_close_identifiability_unemb}
Under the same assumptions as in \cref{lemma:weighted_close_identifiability_analysis}, we have:
\begin{align}
    \mathrm{Var}_{y}[\epsilon_{\x i}(y)] = 2(1 - \text{Corr}[\Nmat_i\g(y), \Nmat_i'\g'(y)]) .
\end{align}
\end{corollary}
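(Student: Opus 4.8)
The plan is to mirror, step for step, the proof of the embedding-side corollary just above (the one giving $\mathrm{Var}_{\x}[\epsilon_{y_i}(\x)] = 2(1 - \text{Corr}[\Lmat_i\f(\x), \Lmat_i'\f'(\x)])$), but now with $\Nmat,\mathbf{D},\g$ in place of $\Lmat,\mathbf{S},\f$ and with the variance taken over $y$. First I would write the tautological decomposition
\[
\mathbf{D}\Nmat^{\top}\g(y) = \mathbf{D}'\Nmat^{'\top}\g'(y) + \big( \mathbf{D}\Nmat^{\top}\g(y) - \mathbf{D}'\Nmat^{'\top}\g'(y) \big),
\]
and observe that the $j$-th component of the bracketed residual is exactly $\epsilon_{\x_j}(y)$ as defined in the complete version of \cref{lemma:weighted_close_identifiability_analysis}; since $\mathbf{D}$ and $\Nmat^{\top}$ are invertible under the diversity condition, this is just the relation $\g(y) = \B\g'(y) + \h_{\g}(y)$ read componentwise, so nothing new is needed there.

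The substantive step is to re-express $\epsilon_{\x_j}(y)$ purely in terms of $\Nmat_j^{\top}\g(y)$ and ${\Nmat'_j}^{\top}\g'(y)$, and here one must be slightly more careful than in the embedding case. Whereas $\log p(y_i|\x) - \log p(y_0|\x) = \f(\x)^{\top}\g_0(y_i)$ holds exactly (the partition function $Z(\x)$ cancels), on the unembedding side one only gets $\log p(y|\x_j) - \log p(y|\x_0) = \f_0(\x_j)^{\top}\g(y) + c_j = \Nmat_j^{\top}\g(y) + c_j$, where $c_j := \log Z(\x_0) - \log Z(\x_j)$ is a constant in $y$. Consequently $\psi_{y}(\x_j; p) = \sqrt{\mathrm{Var}_{y}[\Nmat_j^{\top}\g(y)]}$ (the additive constant is annihilated by $\mathrm{Var}_{y}$), and plugging the analogous identities for $p'$ into the definition of $\epsilon_{\x_j}(y)$ yields
\[
\epsilon_{\x_j}(y) = \frac{\Nmat_j^{\top}\g(y)}{\sqrt{\mathrm{Var}_{y}[\Nmat_j^{\top}\g(y)]}} - \frac{{\Nmat'_j}^{\top}\g'(y)}{\sqrt{\mathrm{Var}_{y}[{\Nmat'_j}^{\top}\g'(y)]}} + \text{const},
\]
where the constant equals $c_j/\psi_{y}(\x_j;p) - c_j'/\psi_{y}(\x_j;p')$ and does not depend on $y$.

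Finally I would take $\mathrm{Var}_{y}[\cdot]$ of both sides: the additive constant drops, each of the two normalized terms has variance $1$, and expanding $\mathrm{Var}_{y}[u - v] = \mathrm{Var}_{y}[u] + \mathrm{Var}_{y}[v] - 2\Cov_{y}[u,v]$ leaves
\[
\mathrm{Var}_{y}[\epsilon_{\x_j}(y)] = 2 - 2\,\text{Corr}\big[\Nmat_j^{\top}\g(y),\, {\Nmat'_j}^{\top}\g'(y)\big] = 2\big(1 - \text{Corr}[\Nmat_j\g(y), \Nmat'_j\g'(y)]\big),
\]
which is the claimed identity after renaming $j$ to $i$. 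The main (and essentially only) obstacle is bookkeeping: one must verify that the non-cancelling $Z(\x_j)$ terms enter $\epsilon_{\x_j}(y)$ solely as a $y$-independent shift, so that they are invisible both in the normalizers $\psi_{y}(\x_j;\cdot)$ and in the final variance; once that is checked, the remainder is the same variance-of-a-difference computation already carried out for the embeddings.
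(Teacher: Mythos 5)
Your proof is correct and follows essentially the same route as the paper's: write $\epsilon_{\x_j}(y)$ as a difference of two unit-variance random variables in $y$ and then expand $\mathrm{Var}_y[u-v]=\mathrm{Var}_y[u]+\mathrm{Var}_y[v]-2\Cov_y[u,v]$. The one place where you are more careful than the paper is in explicitly tracking the non-cancelling log-partition terms $c_j=\log Z(\x_0)-\log Z(\x_j)$ (and the analogous $c'_j$), observing that they contribute only a $y$-independent additive shift to $\epsilon_{\x_j}(y)$, invisible both to the normalizers $\psi_y(\x_j;\cdot)$ and to the final $\mathrm{Var}_y$; the paper's own proof simply sets $\epsilonb_\x(y)=\mathbf{S}\Nmat^\top\g(y)-\mathbf{S}'{\Nmat'}^\top\g'(y)$ and treats this as identical to the $\epsilon_{\x_j}(y)$ of the Lemma, which holds only up to that constant, so your bookkeeping is exactly the right thing to check even though it does not affect the resulting variance identity.
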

\begin{proof}
    We can write 
    \begin{align}
        \mathbf{S}\Nmat^{\top}\g(y) &= \mathbf{S}'\Nmat^{'\top}\g'(y) + \mathbf{S}\Nmat^{\top}\g(y) - \mathbf{S}'\Nmat^{'\top}\g'(y) \, .
    \end{align}
    So setting 
    \begin{align}
        \epsilonb_{\x} (y) = \mathbf{S}\Nmat^{\top}\g(y) - \mathbf{S}'\Nmat^{'\top}\g'(y)
    \end{align}
    gives $\epsilon_{\x i}(y)$ as the error term for the unembeddings.
    
    We now consider the variance of $\epsilon_{\x_i}(y)$.
    \begin{align}
        \mathrm{Var}_{y}[\epsilon_{\x i}(y)] &= \mathrm{Var}_{y}\left[\frac{\Nmat_i\g(y)}{\sqrt{\mathrm{Var}_{y}[\Nmat_i\g(y)]}} - \frac{\Nmat_i'\g'(y)}{\sqrt{\mathrm{Var}_{y}[\Nmat_i'\g'(y)]}}\right] \\
        &= \mathrm{Var}_{y}[\frac{\Nmat_i\g(y)}{\sqrt{\mathrm{Var}_{y}[\Nmat_i\g(y)]}}] + \mathrm{Var}_{y}[\frac{\Nmat_i'\g'(y)}{\sqrt{\mathrm{Var}_{y}[\Nmat_i'\g'(y)]}}] \\
        &\phantom{++}- 2\Cov_{y y}[\frac{\Nmat_i\g(y)}{\sqrt{\mathrm{Var}_{y}[\Nmat_i\g(y)]}}, \frac{\Nmat_i'\g'(y)}{\sqrt{\mathrm{Var}_{y}[\Nmat_i'\g'(y)]}}] \\
        &= 2 - 2\Cov_{y y}[\frac{\Nmat_i\g(y)}{\sqrt{\mathrm{Var}_{y}[\Nmat_i\g(y)]}}, \frac{\Nmat_i'\g'(y)}{\sqrt{\mathrm{Var}_{y}[\Nmat_i'\g'(y)]}}] \\ 
        &= 2(1 - \text{Corr}[\Nmat_i\g(y), \Nmat_i'\g'(y)]) \, ,
    \end{align}
    which gives us the result. 
\end{proof}

\subsection{Distance Between Distributions}
\label{app:dist_prob}
Recall that we use the following notation:
\begin{align*}
      \psi_{\vx}(y_i; p) \coloneqq \sqrt{\mathrm{Var}_{\x}[\log p(y_i| \x) - \log p(y_0| \x)]} 
     \; \text{ and } \;
     \psi_{y}(\vx_j; p) \coloneqq \sqrt{\mathrm{Var}_{y}[\log p(y| \x_j) - \log p(y| \x_0)]} \, .
\end{align*}
We restate \cref{def:d_prob} and then prove its a distance metric.

\llvdistance*

To show that $d^\lambda_{\mathrm{LLV}}$ is a distance metric between sets of conditional probability distributions, we have to prove that: i) it is non-negative, ii) it is zero if and only if the models are equal (that is, the distributions are equal for all labels and all inputs) iii) it is symmetric and iv) it satisfies the triangle inequality.

\begin{proof}
    i) Since variance is non-negative, the first two terms are non-negative. The second two terms are absolute values, so also non-negative. Thus, the maximum of these four terms is also non-negative.
    
    ii) From the expression of $d^\lambda_\mathrm{LLV}$, if the distributions are equal for all $\x \in \calX$ and all $y \in \calY$, then the distance becomes zero. 
    
    Now, assume the distance is zero. Then, in particular, $t_4 = 0$. So we have 
    \begin{align}
        \sqrt{\mathrm{Var}_{y}[\log p(y| \x_j) - \log p(y| \x_0)]} = \sqrt{\mathrm{Var}_{y}[\log p'(y| \x_j) - \log p'(y| \x_0)]}
    \end{align}
    for all $\vx_j \in \calX_\mathrm{LLV}\setminus \{ \x_0\}$.
    So since $t_2=0$, we have
    \begin{align}
        \mathrm{Var}_{y} \left[ \log p(y| \x_j) - \log p'(y| \x_j) \right] = 0 \, , 
    \end{align}
    for all $\vx_j \in \calX_\mathrm{LLV}$. 
    This means that for each $\x_j$, we have that the log difference of the distributions is a constant for all $y_i \in \calY_\mathrm{LLV}$, and so
    \begin{align}
        \log p(y_i| \x_j) - \log p'(y_i| \x_j) = c_j \, ,
    \end{align}
    for all $y_i$. 
    This means that 
    \begin{align}
        \log p(y_i| \x_j) &= \log p'(y_i| \x_j) + c_j \\
        p(y_i| \x_j) &= p'(y_i| \x_j)\cdot \exp(c_j) \, .
    \end{align}
    Now since $p$ and $p'$ are probability distributions over the $y_i$'s, we have 
    \begin{align}
        1 = \sum_{i = 1}^c p(y_i| \x_j) = \exp(c_j) \sum_{i = 1}^c p'(y_i| \x_j) = \exp(c_j)
    \end{align}
    This means that $p(y| \x_j) = p'(y| \x_j)$ for all $y\in \calY$ and for the $\x_j \in \calX_\mathrm{LLV}$. 
    Since we have $t_1 = t_3 = 0$, we get that
    \begin{align}
        \mathrm{Var}_{\x} \left[ \log p(y_i| \x) - \log p'(y_i| \x) \right] = 0 \, ,
    \end{align}
    for all $y_i \in \calY_\mathrm{LLV}$ and all inputs $\x \in \calX$. 
    Thus, we have 
    \begin{align}
        \log p(y_i| \x) - \log p'(y_i| \x) = c_{i}
    \end{align}
    for every choice of $y_i$ except one left out of the label set and all $\x$. This results in 
    \begin{align}
        p(y_i| \x) = p'(y_i| \x) \cdot \exp(c_{i})
    \end{align}
    for all $\x\in \calX$. In particular, this is true for $\x_j \in \calX_\mathrm{LLV}$. 
    So we have 
    \begin{align}
        p(y_i| \x_j) = p'(y_i| \x_j) \cdot \exp(c_{i}) = p'(y_i| \x_j) 
    \end{align}
    and thus, $\exp(c_{i}) = 1$ for all $y_i\in \calY_\mathrm{LLV}$. Since the probability of the last label, $y_k$ is fixed by the other labels, i.e.
    \begin{align}
        p(y_k \vert \x) = 1 - \sum_{i\neq k} p(y_i \vert \x)
    \end{align}
    this gives us that 
    \begin{align}
        p(y_i| \x) = p'(y_i| \x)
    \end{align}
    for all $y_i \in \calY$ and all $\x \in \calX$.
    
    iii) $d^\lambda_\mathrm{LLV}$ is symmetric because it depends on variances and on absolute values, which are symmetric. 

    iv) We prove that $d^\lambda_\mathrm{LLV}$ satisfies the triangle inequality by showing that all terms in it satisfy the triangle inequality. 
    Assume we have three random variables $X, Y, Z$, then 
    \begin{align}
        \sqrt{\mathrm{Var}[X-Z]} &= \sqrt{\mathrm{Var}[X- Y + Y - Z]} \\
        &= \sqrt{\mathrm{Var}[X- Y ] + \mathrm{Var}[ Y - Z] + 2\Cov [X-Y, Y-Z]} \\
        &\leq \sqrt{\mathrm{Var}[X- Y ] + \mathrm{Var}[ Y - Z] + 2\sqrt{\mathrm{Var}[X- Y ]}\sqrt{\mathrm{Var}[Y -Z]}} \\
        &= \sqrt{ \left(\sqrt{\mathrm{Var}[X- Y ]} + \sqrt{\mathrm{Var}[ Y - Z]} \right)^2 } \\
        &= \sqrt{\mathrm{Var}[X- Y ]} + \sqrt{\mathrm{Var}[ Y - Z]} \, .
    \end{align}
    So the square root of the variance of differences of random variables satisfies the triangle inequality. Taking the maximum also preserves the triangle inequality. Therefore, $t_1$ and $t_2$ satisfy the triangle inequality. 
    Concerning the last two terms $t_3$ and $t_4$, 
    we see that for models $p, p', p^*$ we have
    \begin{align*}
        &\left\vert \sqrt{\mathrm{Var}_{y}[\log p(y| \x_j) - \log p(y| \x_k)]} - \sqrt{\mathrm{Var}_{y}[\log p'(y| \x_j) - \log p'(y| \x_k)]} \right\vert \\
        &= \left\vert \sqrt{\mathrm{Var}_{y}[\log p(y| \x_j) - \log p(y| \x_k)]} - \sqrt{\mathrm{Var}_{y}[\log p^*(y| \x_j) - \log p^*(y| \x_k)]} \right. \\
        &\phantom{+}+  \left.\sqrt{\mathrm{Var}_{y}[\log p^*(y| \x_j) - \log p^*(y| \x_k)]} - \sqrt{\mathrm{Var}_{y}[\log p'(y| \x_j) - \log p'(y| \x_k)]} \right\vert \\
        & \leq \left\vert \sqrt{\mathrm{Var}_{y}[\log p(y| \x_j) - \log p(y| \x_k)]} - \sqrt{\mathrm{Var}_{y}[\log p^*(y| \x_j) - \log p^*(y| \x_k)]} \right\vert \\
        &\phantom{+}+ \left\vert \sqrt{\mathrm{Var}_{y}[\log p^*(y| \x_j) - \log p^*(y| \x_k)]} - \sqrt{\mathrm{Var}_{y}[\log p'(y| \x_j) - \log p'(y| \x_k)]} \right\vert \, .
    \end{align*}
    So the second two terms also satisfy the triangle inequality. Thus, the sum of these four terms satisfies the triangle inequality. 
\end{proof}

\subsection{Implementation of the Log-Likelihood Variance Distance}
\label{app:implementation_dist_prob}

Here, we collect the implementation details of $d^\lambda_{\mathrm{LLV}}$. 
We set the weighting constant $\lambda$  to $10^{-5}$.
To choose the best pivot $y_0 \in \calY$ and the left-out label $y_\emptyset \in \calY $, we evaluate $t_1$ for all possible choices of $(y_0, y_\emptyset)$, then select those giving the smallest value of $d^\lambda_{\mathrm{LLV}}$. 
For the input set $\calX_\mathrm{LLV}$, we use $M+1$ inputs. In our experiments, the representational dimension is $2$, so we collect $3$ inputs. 
We randomly draw $200$ samples of sets from $\calX$ containing $3$ inputs, and choose the set giving the smallest $t_2$.  

In our experiments, we only make pairwise comparisons of models.  
Notice that, in the case where one would like to compare three or more models, the same pivot $y_0$, left-out label $y_\emptyset$, and input set $\calX_\mathrm{LLV}$ must be chosen for all models.

\subsection{Using PLS SVD to Define a Dissimilarity Measure}
\label{app:pls_svd_dissim_measure}

We restate \cref{def:m_SVD_dist_rep_main} and then prove some important properties.
\plssvddistance*

We show that this measure is zero from a vector to itself and is invariant to multiplications of the scaled variables with orthonormal matrices. 
Before proceeding, we recall the following property of the trace of two matrices:
\begin{align}
    tr(\A\B) = tr(\B\A) \, . \label{eq:trace-property}
\end{align}

In the next Proposition, we prove that $d_{\mathrm{SVD}}(\vz, \vz) = 0$. 

\begin{proposition}
    \label{prop:msvd_xx_eq_1}
    Let $\vz$ be an $M$-dimensional random variable and assume that the covariance matrix $\vSigma_{\vz'\vz'}$ of the centered and scaled variable $\vz'$ is non-singular. Then 
    \begin{align}
        m_\mathrm{SVD}(\vz, \vz) = \frac{1}{M} \sum_{i = 1}^M  \Cov_{\vz' \vz'}[\vu_i^{\top} \vz', \vu_i^{\top} \vz']  = 1 \, ,
    \end{align}
    where $\vu_i$ is the $i$'th singular vector of $\vSigma_{\vz'\vz'}$.
\end{proposition}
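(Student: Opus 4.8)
The plan is to reduce the statement to a one-line trace computation. First I would observe that, since $\vw = \vz$, the standardized vectors coincide, $\vw' = \vz'$, so the matrix appearing in \cref{def:m_SVD_dist_rep_main} is the ordinary covariance matrix $\Smat_{\vz'\vz'}$ of the standardized vector $\vz'$. This matrix is symmetric and positive semidefinite; by the hypothesis that it is non-singular, it is in fact positive definite. Moreover, because each component $z_i'$ has unit variance, the diagonal entries of $\Smat_{\vz'\vz'}$ are all equal to $1$, so $\operatorname{tr}(\Smat_{\vz'\vz'}) = M$.

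Next I would recall that for a symmetric positive definite matrix the singular value decomposition coincides with the spectral decomposition: writing $\Smat_{\vz'\vz'} = \Umat \Dmat \Umat^\top$ with $\Umat$ orthonormal (columns $\vu_1, \dots, \vu_M$) and $\Dmat = \operatorname{diag}(\lambda_1, \dots, \lambda_M)$, $\lambda_i > 0$, the columns $\vu_i$ serve simultaneously as the left and right singular vectors and the $\lambda_i$ as the singular values. Hence in \cref{def:m_SVD_dist_rep_main} we may take $\vv_i = \vu_i$, and therefore
\begin{align}
\Cov_{\vz'\vz'}[\vu_i^\top \vz', \vu_i^\top \vz'] = \vu_i^\top \Smat_{\vz'\vz'} \vu_i = \lambda_i \, ,
\end{align}
using $\Smat_{\vz'\vz'}\vu_i = \lambda_i \vu_i$ and $\vu_i^\top \vu_i = 1$.

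Summing over $i$ and applying the cyclic property of the trace (\cref{eq:trace-property}) then gives
\begin{align}
m_\mathrm{SVD}(\vz, \vz) = \frac{1}{M} \sum_{i=1}^M \lambda_i = \frac{1}{M} \operatorname{tr}(\Dmat) = \frac{1}{M} \operatorname{tr}(\Umat \Dmat \Umat^\top) = \frac{1}{M} \operatorname{tr}(\Smat_{\vz'\vz'}) = \frac{1}{M}\cdot M = 1 \, ,
\end{align}
which is the claim.

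The computation itself is routine; the only point requiring care is the treatment of the singular value decomposition in this symmetric case, namely justifying that the left and right singular vectors can be chosen equal (to the eigenvectors of $\Smat_{\vz'\vz'}$) and that the singular values are exactly the eigenvalues. When some eigenvalue is repeated there is a genuine ambiguity in the choice of the $\vu_i$, but since only $\sum_i \lambda_i = \operatorname{tr}(\Smat_{\vz'\vz'})$ enters the final expression, the value of $m_\mathrm{SVD}$ is independent of that choice, so the conclusion is unaffected.
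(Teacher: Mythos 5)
Your proof is correct and follows essentially the same route as the paper's: both identify $\Smat_{\vz'\vz'}$ as the relevant matrix, use that standardization forces $\operatorname{tr}(\Smat_{\vz'\vz'}) = M$, and invoke the cyclic trace property (\cref{eq:trace-property}) to equate the sum of singular values with that trace. Your added remarks --- that symmetry and positive definiteness make the SVD coincide with the eigendecomposition, and that the conclusion is insensitive to the choice of singular vectors when eigenvalues repeat --- are sound points of care the paper leaves implicit, but they do not change the argument.
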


\begin{proof}
    Let 
    \begin{align}
        \vSigma_{\vz'\vz'} = \Umat \Dmat\Umat^{\top}
    \end{align}
    be the singular value decomposition of the covariance matrix of $\vz'$ with itself. Then, using \cref{eq:trace-property}, we have that 
    \begin{align}
        tr(\vSigma_{\vz'\vz'}) = tr(\Umat \Dmat\Umat^{\top}) = tr(\Umat^{\top}\Umat \Dmat) = tr(\Dmat) = \sum_{i=1}^M \sigma_i \, , 
    \end{align}
    where $\sigma_i$ are the singular values of $\vSigma_{\vz'\vz'}$. 
    Since $\mathrm{Var}[z_i'] = 1$ for all $i\in \{1, \ldots, M\}$, we also have that 
    \begin{align}
        tr(\vSigma_{\vz'\vz'}) = \sum_{i= 1}^M \mathrm{Var}_{\vz'}[z_i'] = M \, .
    \end{align}
    Combining the two results, we get
    \begin{align}
        \sum_{i=1}^M \sigma_i = M \, .
    \end{align}
    Therefore, the mean of the covariances becomes
    \begin{align}
        m_\mathrm{SVD}(\vz, \vz) &= \frac{1}{M} \sum_{i = 1}^M \Cov_{\vz' \vz'}[\vu_i^{\top} \vz', \vu_i^{\top} \vz'] \\
        &= \frac{1}{M} tr(\Umat^{\top} \Cov_{\vz' \vz'}[ \vz', \vz'] \Umat) \\
        &= \frac{1}{M} \sum_{i=1}^M \sigma_i = \frac{M}{M} = 1
    \end{align}
    which proves the claim.
\end{proof}

Next, we prove the invariance to translation and orthogonal transformations of the members in $m_{\mathrm{SVD}}$:

\begin{proposition}
    Let $\vz$ and $\vw$ be $M$-dimensional random variables. Then, the measure 
    \begin{align}
        m_\mathrm{SVD}(\vz, \vw) = \frac{1}{M} \sum_{i = 1}^M \Cov_{\vz' \vw'}[\vu_i^{\top} \vz', \vv_i^{\top} \vw']
    \end{align}
    is invariant to translations and multiplications with an orthonormal matrix after the scaling. 
\end{proposition}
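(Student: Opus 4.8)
The plan is to reduce $m_\mathrm{SVD}(\vz,\vw)$ to a manifestly coordinate-free quantity — namely $1/M$ times the sum of the singular values of the cross-covariance matrix $\Smat_{\vz'\vw'}$ of the standardized variables — and then to check that neither a translation of $\vz,\vw$ nor an orthonormal multiplication of the scaled variables changes that sum. Concretely, I would first fix an SVD $\Smat_{\vz'\vw'}=\Umat\Dmat\Vmat^{\top}$ with $\Umat=(\vu_1,\dots,\vu_M)$, $\Vmat=(\vv_1,\dots,\vv_M)$ and $\Dmat=\operatorname{diag}(\sigma_1,\dots,\sigma_M)$, observe that $\Cov_{\vz'\vw'}[\vu_i^{\top}\vz',\vv_j^{\top}\vw']=\vu_i^{\top}\Smat_{\vz'\vw'}\vv_j=\sigma_i\,\delta_{ij}$ by orthonormality of the two singular-vector bases, and hence — using the cyclic trace identity \cref{eq:trace-property} — conclude $m_\mathrm{SVD}(\vz,\vw)=\frac{1}{M}\sum_{i=1}^{M}\vu_i^{\top}\Smat_{\vz'\vw'}\vv_i=\frac{1}{M}\,tr(\Umat^{\top}\Smat_{\vz'\vw'}\Vmat)=\frac{1}{M}\,tr(\Dmat)=\frac{1}{M}\sum_{i=1}^{M}\sigma_i$. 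This step simultaneously shows that $m_\mathrm{SVD}$ does not depend on which SVD is chosen, which is the clean way to handle non-uniqueness of the singular vectors when singular values coincide.

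For translation invariance I would note that standardization is itself translation-invariant: adding a constant to $\vz$ shifts each mean $\Ex[z_i]$ but leaves each $\operatorname{std}(z_i)$ unchanged, so the components $z_i'=(z_i-\Ex[z_i])/\operatorname{std}(z_i)$ are unchanged, and likewise for $\vw$; therefore $\Smat_{\vz'\vw'}$, its singular values, and $m_\mathrm{SVD}$ are all unchanged. Equivalently, a shift applied to the already-scaled $\vz'$ or $\vw'$ leaves the cross-covariance untouched because covariances ignore additive constants.

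For invariance under multiplication by an orthonormal matrix after the scaling, I would interpret the claim as replacing the standardized variables $\vz',\vw'$ by $\vR\vz',\vR'\vw'$ (with no re-standardization) for orthonormal $\vR,\vR'$ satisfying $\vR^{\top}\vR=(\vR')^{\top}\vR'=\mathbf{I}$, compute directly from the definition that the cross-covariance becomes $\Smat_{\vR\vz',\,\vR'\vw'}=\vR\,\Smat_{\vz'\vw'}\,(\vR')^{\top}$, and then substitute the SVD to get $\vR\,\Smat_{\vz'\vw'}\,(\vR')^{\top}=(\vR\Umat)\,\Dmat\,(\vR'\Vmat)^{\top}$. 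Since $\vR\Umat$ and $\vR'\Vmat$ still have orthonormal columns, this is an SVD of the transformed matrix with the same diagonal $\Dmat$, so by the first step the measure is again $\frac{1}{M}\sum_{i}\sigma_i$. I would also record the more direct argument: the transformed singular vectors are $\vR\vu_i$ and $\vR'\vv_i$, and each summand is preserved term by term, $\Cov[(\vR\vu_i)^{\top}\vR\vz',(\vR'\vv_i)^{\top}\vR'\vw']=\vu_i^{\top}\vR^{\top}\vR\,\Smat_{\vz'\vw'}\,(\vR')^{\top}\vR'\vv_i=\vu_i^{\top}\Smat_{\vz'\vw'}\vv_i$.

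I do not expect a genuine obstacle. The two points that need care are (i) the non-uniqueness of the SVD, which I sidestep by phrasing everything through the sum of singular values / $tr(\Dmat)$ and by checking that the transformed cross-covariance admits an SVD with the same $\Dmat$; and (ii) making precise that the orthonormal multiplication is the one inserted \emph{after} the standardization step inside the PLS-SVD pipeline, so that $\vR\vz'$ is not re-standardized — otherwise the statement would concern a different, and generally non-invariant, object. With those two points pinned down the argument is a short computation.
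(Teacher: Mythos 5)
Your proof is correct and follows essentially the same route as the paper's: translation invariance via the translation invariance of covariances, and orthonormal invariance via the observation that $\vSigma_{\vR\vz',\vR'\vw'} = \vR\,\vSigma_{\vz'\vw'}\,(\vR')^{\top} = (\vR\Umat)\Dmat(\vR'\Vmat)^{\top}$ is itself an SVD with the same $\Dmat$. You are slightly more explicit than the paper in first reducing $m_{\mathrm{SVD}}$ to $\frac{1}{M}\operatorname{tr}(\Dmat)$ (a fact the paper establishes only for $\vz=\vw$ in \cref{prop:msvd_xx_eq_1} and then uses implicitly here) and in noting that this also handles SVD non-uniqueness, which is a small but welcome improvement in rigor.
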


\begin{proof}
    \textbf{Invariance to translations}.  Since $m_\mathrm{SVD}(\vz, \vw)$ is based on covariances, and covariances are invariant to translations, $m_\mathrm{SVD}(\vz, \vw)$ is invariant to translations.  

    \textbf{Invariance to multiplication with orthonormal matrix}.  Assume $\Tmat$ and $\Rmat$ are orthonormal matrices. Let $\vh = \Tmat \vz'$ and $\vk = \Rmat \vw'$. Let the singular value decomposition of $\vSigma_{\vz'\vw'}$ be:
    \begin{align}
       \vSigma_{\vz'\vw'} = \Umat \Dmat \Vmat^{\top} \, ,
    \end{align}
    then 
    \begin{align}
        \vSigma_{\vh\vk} &= \Tmat \Umat \Dmat \Vmat^{\top} \Rmat^{\top} \\
        &= (\Tmat \Umat) \Dmat (\Rmat \Vmat)^{\top} \, .
    \end{align}
    Since $\Tmat$ and $\Rmat$ are orthonormal, $\Tmat \Umat$ and $\Rmat \Vmat$ are also orthonormal, so $\Cov_{\vh \vk}[\vh, \vk]$ has the same singular values as $\vSigma_{\vz'\vw'}$. Thus, the measure is invariant to multiplications with orthonormal matrices (rotations) after the scaling. 
\end{proof}

\subsection{Lower Bound on Mean PLS-SVD}
\label{app:bound_on_m_SVD_full_proof}

To show how the probability distributions are connected to the representations, we need an intermediate result, where we bound $m_\mathrm{SVD}$. To prove this bound, we will make use of Weyl's Inequality and we therefore restate it from \citep{stewart1998perturbation}.

\begin{theorem}[Weyl's Inequality]
    \label{theorem:weyls_ineq}
    Let $\A$ be a $n \times m$ matrix. Let $\B = \A + \Emat$. Let $\sigma_i$ be the $i$'th singular value of $\A$ and let $\tilde{\sigma}_i$ be the $i$'th singular value of $\B$ (ordered from largest to smallest). Then 
    \begin{align}
        \vert \sigma_i - \tilde{\sigma}_i \vert \leq \Vert \Emat \Vert_s \, ,
    \end{align}
    where $\Vert . \Vert_s$ is the spectral norm defined as 
    \begin{align}
        \Vert \Emat \Vert_s = \max_{\Vert \vv \Vert = 1} \Vert \Emat \vv \Vert_2
    \end{align}
    and $\Vert . \Vert_2$ is the Euclidean vector norm. Or equivalently:
    \begin{align}
        \Vert \Emat \Vert_s = \sigma_{\max}(\Emat) \, ,
    \end{align}
    where $\sigma_{\max}(\Emat)$ is the largest singular value of $\Emat$. 
\end{theorem}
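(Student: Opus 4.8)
The plan is to derive the bound from the Courant--Fischer min--max characterization of singular values. For an $n \times m$ matrix $\A$ with singular values $\sigma_1(\A) \geq \sigma_2(\A) \geq \dots \geq 0$, one has
\[
\sigma_i(\A) = \min_{\substack{V \subseteq \R^m \\ \dim V = m-i+1}}\ \max_{\substack{\vv \in V \\ \Vert \vv \Vert_2 = 1}} \Vert \A \vv \Vert_2 \, .
\]
First I would record this identity (it follows by substituting the SVD $\A = \Umat\Dmat\Vmat^\top$, expanding an arbitrary unit vector in the columns of $\Vmat$, and reducing to the nonnegative diagonal case), or simply cite it from \citep{stewart1998perturbation}.

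Given the variational formula, the estimate is essentially one line. For every unit vector $\vv$, the triangle inequality gives
\[
\Vert \B \vv \Vert_2 = \Vert (\A+\Emat)\vv \Vert_2 \leq \Vert \A \vv \Vert_2 + \Vert \Emat \vv \Vert_2 \leq \Vert \A \vv \Vert_2 + \Vert \Emat \Vert_s \, .
\]
Fix $i$ and let $V^\star$ be a subspace of dimension $m-i+1$ attaining the outer minimum for $\A$. Since the $\B$-problem also minimizes over dimension-$(m-i+1)$ subspaces, restricting its max to this particular $V^\star$ can only increase the value, so
\[
\tilde\sigma_i \leq \max_{\substack{\vv \in V^\star \\ \Vert\vv\Vert_2=1}} \Vert \B\vv \Vert_2 \leq \max_{\substack{\vv \in V^\star \\ \Vert\vv\Vert_2=1}} \Vert \A\vv \Vert_2 + \Vert \Emat \Vert_s = \sigma_i(\A) + \Vert \Emat \Vert_s \, ,
\]
hence $\tilde\sigma_i - \sigma_i \leq \Vert \Emat \Vert_s$. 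Swapping the roles of $\A$ and $\B$ (writing $\A = \B + (-\Emat)$ and using $\Vert -\Emat \Vert_s = \Vert \Emat \Vert_s$) gives the reverse inequality $\sigma_i - \tilde\sigma_i \leq \Vert \Emat \Vert_s$, and the two together yield $\vert \sigma_i - \tilde\sigma_i \vert \leq \Vert \Emat \Vert_s$.

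The only genuine ingredient is the min--max characterization; after that the argument is purely the triangle inequality plus a monotonicity observation, so I do not expect any real obstacle. As an alternative that sidesteps proving the singular-value min--max directly, one can use the Jordan--Wielandt dilation: the symmetric $(n+m)\times(n+m)$ matrices
\[
H \coloneqq \begin{pmatrix} \mathbf{0} & \A \\ \A^\top & \mathbf{0} \end{pmatrix}, \qquad E \coloneqq \begin{pmatrix} \mathbf{0} & \Emat \\ \Emat^\top & \mathbf{0} \end{pmatrix}
\]
have eigenvalues $\pm\sigma_i(\A)$ (together with $\vert n-m\vert$ zeros) and $\pm\sigma_i(\Emat)$ respectively, with $\Vert E\Vert_s = \Vert \Emat\Vert_s$; the claim then reduces to the classical Hermitian Weyl inequality $\vert \lambda_i(H) - \lambda_i(H+E)\vert \leq \Vert E\Vert_s$, itself a direct consequence of Courant--Fischer for symmetric matrices. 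I would present whichever of these forms the paper treats as most standard, and keep it brief since it is invoked only as a lemma toward \cref{theorem:main_bounds_d_prob}.
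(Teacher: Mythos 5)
Your proof is correct, but there is no paper proof to compare it against: the paper states this as a classical result and simply cites \citep{stewart1998perturbation} without giving an argument, treating Weyl's inequality for singular values as a known black box that feeds into \cref{lemma:bound_m_SVD}. Your derivation via the Courant--Fischer min--max characterization of singular values (pick the minimizing subspace $V^\star$ for $\A$, apply the triangle inequality to $\B = \A + \Emat$ on that subspace, then symmetrize by swapping $\A$ and $\B$) is the standard textbook proof and is sound; the Jordan--Wielandt dilation you mention is an equally valid alternative that reduces the claim to Hermitian Weyl. Either would serve as a self-contained justification if one did not want to rely on the citation, so there is nothing to correct.
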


We now state and prove a bound on $m_\mathrm{SVD}$. 

\begin{lemma}
    \label{lemma:bound_m_SVD}
    Let $\vz, \vw$ be two $M$-dimensional random vectors, and define $\vz', \vw'$ by standardizing their components: $z_i' = (z_i - \Ex[z_i])/\operatorname{std}(z_i)$, $w_i' = (w_i - \Ex[w_i])/\operatorname{std}(w_i)$. Assume the $M \times M$ matrices $\vSigma_{\vz'\vz'}$ and $\vSigma_{\vz'\vw'} \in \bbR^{M \times M}$ are full-rank. 
    Let $\{\vu_i\}_{i=1}^M$ and $\{\vv_i\}_{i=1}^M$ be the left and right singular vectors of $\vSigma_{\vz'\vw'}$. 
    Then, the following bound holds: 
    \begin{align}
    m_\mathrm{SVD}(\vz, \vw) = \frac{1}{M} \sum_{i = 1}^M \Cov_{\vz'\vw'}\left[ \vu_i^{\top}\vz',  \vv_i^{\top}\vw' \right] \geq  1 - \sqrt{M\;\sum_{l = 1}^M \mathrm{Var}_{\vz',\vw'}[z'_l-w'_l]}  
    \end{align}
\end{lemma}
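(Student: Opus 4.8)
The plan is to turn $m_\mathrm{SVD}(\vz,\vw)$ into a statement about singular values, view the standardized cross-covariance matrix as a perturbation of the standardized auto-covariance matrix $\vSigma_{\vz'\vz'}$, and then control that perturbation with Weyl's inequality and an entrywise Cauchy--Schwarz bound. First I would rewrite $m_\mathrm{SVD}$ in closed form: taking the singular value decomposition $\vSigma_{\vz'\vw'} = \Umat\Dmat\Vmat^\top$ with columns $\vu_i,\vv_i$ and singular values $\sigma_i$, orthonormality gives $\Cov_{\vz'\vw'}[\vu_i^\top\vz',\vv_i^\top\vw'] = \vu_i^\top\vSigma_{\vz'\vw'}\vv_i = \sigma_i$, so
\begin{align*}
m_\mathrm{SVD}(\vz,\vw) = \frac{1}{M}\sum_{i=1}^M \sigma_i(\vSigma_{\vz'\vw'}) .
\end{align*}
Hence it suffices to lower-bound the sum of singular values (the nuclear norm) of $\vSigma_{\vz'\vw'}$.

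Next I would decompose $\vSigma_{\vz'\vw'} = \vSigma_{\vz'\vz'} + \Emat$ with $\Emat \coloneqq \vSigma_{\vz'\vw'} - \vSigma_{\vz'\vz'}$, whose $(i,j)$ entry is $\Emat_{ij} = \Cov[z'_i,w'_j] - \Cov[z'_i,z'_j] = \Cov[z'_i,w'_j-z'_j]$ by bilinearity. Since $\vSigma_{\vz'\vz'}$ is symmetric positive semidefinite, its singular values coincide with its eigenvalues and therefore sum to $tr(\vSigma_{\vz'\vz'}) = \sum_{l=1}^M \mathrm{Var}[z'_l] = M$, exactly as in the proof of \cref{prop:msvd_xx_eq_1}. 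Applying Weyl's inequality (\cref{theorem:weyls_ineq}) with $\A = \vSigma_{\vz'\vz'}$ and $\B = \vSigma_{\vz'\vw'}$ index by index, then summing the $M$ bounds, yields $\sum_i \sigma_i(\vSigma_{\vz'\vw'}) \ge M - M\Vert\Emat\Vert_s$, so that $m_\mathrm{SVD}(\vz,\vw) \ge 1 - \Vert\Emat\Vert_s$. It remains to bound the spectral norm of $\Emat$: using $\Vert\Emat\Vert_s \le \Vert\Emat\Vert_F$ and, for each entry, Cauchy--Schwarz together with $\mathrm{Var}[z'_i]=1$, I get $|\Emat_{ij}| \le \sqrt{\mathrm{Var}[z'_i]\,\mathrm{Var}[w'_j-z'_j]} = \sqrt{\mathrm{Var}[z'_j-w'_j]}$, whence $\Vert\Emat\Vert_s^2 \le \sum_{i,j}\Emat_{ij}^2 \le M\sum_{l=1}^M \mathrm{Var}[z'_l-w'_l]$. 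Substituting this into the previous inequality gives the claim.

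The routine computations live in the last step; the part that needs the most care is the singular-value bookkeeping in the middle. Weyl's inequality only controls each $|\sigma_i(\A)-\sigma_i(\B)|$ separately, so summing over the $M$ indices costs a factor $M$ — this is precisely where the (mildly loose) factor $M$ under the square root enters, and is also what makes the resulting bound grow linearly in $M$. One must also be careful to use that $\vSigma_{\vz'\vz'}$ is a covariance matrix, hence symmetric positive semidefinite, so that its singular values are its eigenvalues and sum to its trace $M$; the full-rank hypotheses in the statement guarantee that all $M$ singular vectors of $\vSigma_{\vz'\vw'}$ and $\vSigma_{\vz'\vz'}$ are well-defined, keeping these manipulations legitimate. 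The conceptual crux is really the choice to perturb around $\vSigma_{\vz'\vz'}$ (rather than, say, the identity): this is what lets the error $\Emat$ be expressed purely in terms of $\mathrm{Var}[z'_l - w'_l]$, the quantities that $t_1,t_2$ in \cref{def:d_prob} are built to control.
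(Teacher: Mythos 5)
Your proposal is correct and follows essentially the same route as the paper's proof: both identify the summands $\Cov[\vu_i^\top\vz',\vv_i^\top\vw']$ with the singular values of $\vSigma_{\vz'\vw'}$, perturb around the standardized auto-covariance $\vSigma_{\vz'\vz'}$ (whose trace and hence singular-value sum is $M$), control each singular-value gap via Weyl's inequality, pass to the Frobenius norm, and bound the entries $\Cov[z'_i, z'_j - w'_j]$ by Cauchy--Schwarz using $\mathrm{Var}[z'_i]=1$. The only cosmetic difference is that the paper defines $\Emat = \vSigma_{\vz'\vz'} - \vSigma_{\vz'\vw'}$ (the negative of yours) and keeps the argument per-index rather than summing first, neither of which changes anything.
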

where $\mathrm{Var}_{\vz,\vw}[z'_l-w'_l]$ denotes the variance over the joint distribution over $\vz'$ and $\vw'$. 

\begin{proof}
    For ease of notation, assume $\vz$ and $\vw$ are $M$-dimensional random variables which are already centered and scaled. So $\mathrm{Var}[z_i] = 1$ for all components $i=1, \ldots, M$. 
    Below we will make use of the covariance inequality, which says that for scalar variables $z, w$,
    \begin{align}
        - \sqrt{\text{Var}_{z}[z]}\sqrt{\text{Var}_{w}[w]} \leq \Cov_{z w}[z, w] \leq \sqrt{\text{Var}_{z}[z]}\sqrt{\text{Var}_{w}[w]} \label{eq:cov_inequality} \; .
    \end{align}
    
    Let $\A \coloneqq  \vSigma_{\vz\vw}$ be the cross-covariance matrix of $\vz$ and $\vw$, and let $\B \coloneqq  \vSigma_{\vz\vz}$ be the covariance matrix of $\vz$. 
    Then 
    \begin{align}
        \B = \A + (\B - \A) \, ,
    \end{align}
    and we define $\Emat  \coloneqq  \B - \A$.
    We can write the $i, j$'th entry of $\Emat$ as 
    \begin{align}
        (\Emat)_{i,j} = \Cov_{\vz \vz}[z_i, z_j] - \Cov_{\vz \vw}[z_i, w_j] = \Cov_{\vz \vw}[z_i, z_j - w_j] \, ,
        \label{eq:values-of-E}
    \end{align}
    because of the bilinearity of the covariance, see section 13.2.7 of \citet{adhikari2025probability}, "The Main Property: Bilinearity".

    The singular value decompositions of $\A$ and $\B$ always exist and can be written as follows: 
    \begin{align}
        \A = \Umat \Dmat \Vmat^{\top}, \quad
        \B = \Wmat \Smat \Wmat^{\top}   \, ,
    \end{align}
    where $\Umat, \Vmat$ and $\Wmat$ are orthonormal matrices and $\Dmat$ and $\Smat$ are diagonal matrices containing the singular values. 
    Note that since $\B$ is symmetric and positive definite, the left and right singular vectors coincide. 
    Let $\tilde{\sigma}_i$ be the $i$'th singular value of $\A$ (sorted from largest to smallest) and let $\sigma_i$ be the $i$'th singular value of $\B$. 
    Now Weyl's inequality \ref{theorem:weyls_ineq} gives us that 
    \begin{align}
        \label{eq:weyl}
        \vert \sigma_i - \tilde{\sigma}_i \vert \leq \Vert \Emat \Vert_s \, .
    \end{align}
    Since the spectral norm of $\Emat$ is the largest singular value of $\Emat$ 
    \begin{align}
        \Vert \Emat \Vert_s = \sigma_{\max}(\Emat)
    \end{align}
    and the frobenius norm is the square root of the sum of the squared singular values
    \begin{align}
        \Vert \Emat \Vert_F = \sqrt{\sum_{i=1}^M \sigma_i^2(\Emat)} = \sqrt{\text{trace}(\Emat^{\top}\Emat)} \, ,
    \end{align}
    we have that 
    \begin{align}
        \label{eq:spectral_leq_frobenius}
        \Vert \Emat \Vert_s \leq \Vert \Emat \Vert_F = \sqrt{\text{trace}(\Emat^{\top}\Emat)} \, .
    \end{align}
    Next, we bound the trace of $\Emat^{\top}\Emat$:
    \begin{align}        
        \text{trace}(\Emat^{\top}\Emat) &= \sum_{l = 1}^M \sum_{k = 1}^M \Cov_{\vz \vw}[z_k, z_l-w_l]^2 \\
        &\leq \sum_{l = 1}^M \sum_{k = 1}^M (\sqrt{\mathrm{Var}_{\vz}[z_k]}\sqrt{\mathrm{Var}_{\vz,\vw}[z_l-w_l]})^2 \\
        &= \sum_{l = 1}^M M\mathrm{Var}_{\vz,\vw}[z_l-w_l] \label{eq:trace_cal} \, ,
    \end{align}
    where in the first step we use the covariance inequality (\cref{eq:cov_inequality}) and in the second we use that $\mathrm{Var}[z_k] = 1$ by construction. Combining the inequalities from \cref{eq:weyl,eq:spectral_leq_frobenius,eq:trace_cal} we obtain 
    \begin{align}
        \label{eq:singular_val_diff_inequality}
        \vert \sigma_i - \tilde{\sigma}_i \vert \leq \sqrt{\sum_{l = 1}^M M\mathrm{Var}_{\vz,\vw}[z_l-w_l]} \, ,
    \end{align}
    which means that the difference between $\sigma_i$ and $\tilde{\sigma}_i$ is at most $\sqrt{\sum_{l = 1}^M M\mathrm{Var}_{\vz,\vw}[z_l-w_l]}$. This means that if $\tilde \sigma_i < {\sigma}_i$, we still have :
    \begin{align}
        \label{eq:bound_cross_var_singular_val_pop_1}
        \tilde{\sigma}_i \geq \sigma_i - \sqrt{\sum_{l = 1}^M M\mathrm{Var}_{\vz,\vw}[z_l-w_l]} \, .
    \end{align}
    and if $\tilde{\sigma}_i \geq \sigma_i $, we also have 
    \begin{align}
        \tilde{\sigma}_i \geq \sigma_i - \sqrt{\sum_{l = 1}^M M\mathrm{Var}_{\vz,\vw}[z_l-w_l]} \, .
    \end{align}
    since $\sqrt{\sum_{l = 1}^M M\mathrm{Var}[z_l-w_l]} \geq 0$.
    Let $\vv_i$ be the $i$'th column of $\Vmat$ and recall that from the SVD of we have
    \begin{align}
        \tilde{\sigma}_i &= \vu_i^{\top}\A \vv_i = \Cov_{\vz \vw}\left[\sum_{k=1}^M \vu_{ik} z_k , \sum_{l=1}^M \vv_{il} w_l\right] \\
        &= \Cov_{\vz \vw}\left[ \vu_i^{\top} \vz ,  \vv_i^{\top} \vw \right] \, ,
    \end{align}
    because of the bilinearity of the covariance. 
    Thus, \cref{eq:bound_cross_var_singular_val_pop_1} gives us the bound
    \begin{align}
        \Cov_{\vz \vw}\left[ \vu_i^{\top}\vz,  \vv_i^{\top}\vw \right] \geq \sigma_i - \sqrt{\sum_{l = 1}^M M\mathrm{Var}_{\vz,\vw}[z_l-w_l]} 
    \end{align}
    and therefore 
    \begin{align}
        m_\mathrm{SVD}(\vz, \vw) &= \frac{1}{m} \sum_{i = 1}^M  [\Cov_{\vz \vw}[\vu_i^{\top} \vz, \vv_i^{\top} \vw] ] \geq 1 - \sqrt{\sum_{l = 1}^M M\mathrm{Var}_{\vz,\vw}[z_l-w_l]} \, ,
    \end{align}
    where we used that $\frac{1}{m} \sum_{i = 1}^M \sigma_i = 1 $ from \cref{prop:msvd_xx_eq_1}. 
\end{proof}

\subsection{Bounding Representational Similarity with Distribution Distance}
\label{app:prob_dist_to_reps_result_proof}

We can now use \cref{def:d_prob,def:m_SVD_dist_rep_main,lemma:bound_m_SVD} to show how a bound on the distance between probability distributions can give us a bound on the distance between representations. 
For the result below, let $\Lmat, \Lmat'$ be defined as in \cref{theorem:identifiability}. Let $\Nmat$ (resp. $\Nmat'$) be the matrix with columns $\f_0(\x)$ (resp. $\f'_0(\x)$). We denote with $\vz_1 \coloneqq \Lmat^{\top}\f(\x)$, $\vz_2 \coloneqq \Lmat^{'\top}\f'(\x)$, $\vw_1 \coloneqq \Nmat^{\top}\g(y)$ and $\vw_2 \coloneqq \Nmat^{'\top}\g'(y)$.

\mainbounds*

\begin{proof}
    We start from
    \begin{align}
        d_{\mathrm{SVD}}(\Lmat^{\top}\f(\x), \Lmat^{'\top}\f'(\x)) = 1 - m_\mathrm{SVD}(\Lmat^{\top}\f(\x), \Lmat^{'\top}\f'(\x))
    \end{align}
    and consider the components $\Lmat_l, \Lmat_l'$, which are $l$'th row of $\Lmat^{\top}, \Lmat^{'\top}$, respectively. In the following, notice that 
    \begin{align}
        \psi_{\vx}(y_l; p) \coloneqq \sqrt{\mathrm{Var}_{\x}[\log p(y_l| \x) - \log p(y_0| \x)]} = \sqrt{\mathrm{Var}[\Lmat_l\f(\x)]}
    \end{align}
    Using \cref{lemma:bound_m_SVD}, we have that 
    \begin{align}
        m_\mathrm{SVD}(\Lmat^{\top}\f(\x), \Lmat^{'\top}\f'(\x)) \geq 1 - \sqrt{\sum_{l = 1}^M M\mathrm{Var}\left[\frac{\Lmat_l\f(\x)}{\sqrt{\mathrm{Var}[\Lmat_l\f(\x)]}}-\frac{\Lmat_l'\f'(\x)}{\sqrt{\mathrm{Var}[\Lmat_l'\f'(\x)]}} \right]} \, .
    \end{align}
    Therefore, we have 
    \begin{align}
        d_{\mathrm{SVD}}(\Lmat^{\top}\f(\x), \Lmat^{'\top}\f'(\x)) \leq \sqrt{\sum_{l = 1}^M M\mathrm{Var}\left[\frac{\Lmat_l\f(\x)}{\sqrt{\mathrm{Var}[\Lmat_l\f(\x)]}}-\frac{\Lmat_l'\f'(\x)}{\sqrt{\mathrm{Var}[\Lmat_l'\f'(\x)]}} \right]} \, .
    \end{align}
    Considering the variance term, we can rewrite it as follows:
    \begin{align*}
    \mathrm{Var} &\left[\frac{\Lmat_l\f(\x)}{\sqrt{\mathrm{Var}[\Lmat_l\f(\x)]}}-\frac{\Lmat_l'\f'(\x)}{\sqrt{\mathrm{Var}[\Lmat_l'\f'(\x)]}} \right] = \mathrm{Var}\left[\frac{\log p(y_l|x) - \log p(y_0|x)}{\sqrt{\mathrm{Var}[\Lmat_l\f(\x)]}}-\frac{\log p'(y_l|x) - \log p'(y_0|x)}{\sqrt{\mathrm{Var}[\Lmat_l'\f'(\x)]}} \right] \\
    &= \mathrm{Var}\left[\frac{\log p(y_l|x)}{\sqrt{\mathrm{Var}[\Lmat_l\f(\x)]}}-\frac{\log p'(y_l|x)}{\sqrt{\mathrm{Var}[\Lmat_l'\f'(\x)]}} \right] \\
    &\phantom{+}+ \mathrm{Var}\left[ \frac{\log p'(y_0|x)}{\sqrt{\mathrm{Var}[\Lmat_l'\f'(\x)]}}- \frac{\log p(y_0|x)}{\sqrt{\mathrm{Var}[\Lmat_l\f(\x)]}}\right] \\
    &\phantom{+}+ 2\Cov\left[ \frac{\log p(y_l|x)}{\sqrt{\mathrm{Var}[\Lmat_l\f(\x)]}}-\frac{\log p'(y_l|x)}{\sqrt{\mathrm{Var}[\Lmat_l'\f'(\x)]}},  \frac{\log p'(y_0|x)}{\sqrt{\mathrm{Var}[\Lmat_l'\f'(\x)]}}- \frac{\log p(y_0|x)}{\sqrt{\mathrm{Var}[\Lmat_l\f(\x)]}} \right] \\
    & \leq \mathrm{Var}\left[\frac{\log p(y_l|x)}{\sqrt{\mathrm{Var}[\Lmat_l\f(\x)]}}-\frac{\log p'(y_l|x)}{\sqrt{\mathrm{Var}[\Lmat_l'\f'(\x)]}} \right] \\
    &\phantom{+}+ \mathrm{Var}\left[ \frac{\log p'(y_0|x)}{\sqrt{\mathrm{Var}[\Lmat_l'\f'(\x)]}}- \frac{\log p(y_0|x)}{\sqrt{\mathrm{Var}[\Lmat_l\f(\x)]}}\right] \\
    &\phantom{+}+ 2\sqrt{\mathrm{Var}\left[ \frac{\log p(y_l|x)}{\sqrt{\mathrm{Var}[\Lmat_l\f(\x)]}}-\frac{\log p'(y_l|x)}{\sqrt{\mathrm{Var}[\Lmat_l'\f'(\x)]}}\right]}  \sqrt{\mathrm{Var}\left[\frac{\log p'(y_0|x)}{\sqrt{\mathrm{Var}[\Lmat_l'\f'(\x)]}}- \frac{\log p(y_0|x)}{\sqrt{\mathrm{Var}[\Lmat_l\f(\x)]}} \right]} \, .
    \end{align*}
    By assumption, $d^\lambda_{\mathrm{LLV}}(p, p') \leq \epsilon$, which means that for all $y_l \in \calY_{\mathrm{LLV}}$: 
    \begin{align}
        \sqrt{\mathrm{Var}\left[ \frac{\log p(y_l|x)}{\sqrt{\mathrm{Var}[\Lmat_l\f(\x)]}}-\frac{\log p'(y_l|x)}{\sqrt{\mathrm{Var}[\Lmat_l'\f'(\x)]}}\right]} \leq \epsilon \, .
    \end{align}
    Hence, we obtain 
    \begin{align}
        d_{\mathrm{SVD}}(\Lmat^{\top}\f(\x), \Lmat^{'\top}\f'(\x)) &\leq \sqrt{\sum_{l = 1}^M M\mathrm{Var}\left[\frac{\Lmat_l\f(\x)}{\sqrt{\mathrm{Var}[\Lmat_l\f(\x)]}}-\frac{\Lmat_l'\f'(\x)}{\sqrt{\mathrm{Var}[\Lmat_l'\f'(\x)]}} \right]} \\
        &\leq \sqrt{\sum_{l = 1}^M M (\epsilon^2+\epsilon^2 + 2 \epsilon^2)} \\
        &= \sqrt{M^2 4 \epsilon^2} = 2M\epsilon \, ,
        \label{eq:bound-Lf}
    \end{align}
    giving the first part of the result. 
    Next, 
    we consider
    \begin{align}
        d_{\mathrm{SVD}}(\Nmat^{\top}\g(y), \Nmat^{'\top}\g'(y)) = 1 - m_\mathrm{SVD}(\Nmat^{\top}\g(y), \Nmat^{'\top}\g'(y)) \, .
    \end{align}
    Let $\Nmat_j, \Nmat_j'$ be the $j$'th row of $\Nmat^{\top}, \Nmat^{'\top}$. Using \cref{lemma:bound_m_SVD}, we obtain:
    \begin{align}
        m_\mathrm{SVD}(\Nmat^{\top}\g(y), \Nmat^{'\top}\g'(y)) \geq 1 - \sqrt{\sum_{l = 1}^M M\mathrm{Var}\left[\frac{\Nmat_j\g(y)}{\sqrt{\mathrm{Var}[\Nmat_j\g(y)]}}-\frac{\Nmat_j'\g'(y)}{\sqrt{\mathrm{Var}[\Nmat_j'\g'(y)]}} \right]} \, .
    \end{align}
    This implies the following: 
    \begin{align}
        d_{\mathrm{SVD}}(\Nmat^{\top}\g(y), \Nmat^{'\top}\g'(y)) \leq \sqrt{\sum_{l = 1}^M M\mathrm{Var}\left[\frac{\Nmat_j\g(y)}{\sqrt{\mathrm{Var}[\Nmat_j\g(y)]}}-\frac{\Nmat_j'\g'(y)}{\sqrt{\mathrm{Var}[\Nmat_j'\g'(y)]}} \right]} \, .
    \end{align}
    Considering the variance term, we can rework it as follows: 
    \begin{align*}
    \mathrm{Var} &\left[\frac{\Nmat_j\g(y)}{\sqrt{\mathrm{Var}[\Nmat_j\g(y)]}}-\frac{\Nmat_j'\g'(y)}{\sqrt{\mathrm{Var}[\Nmat_j'\g'(y)]}} \right] = \mathrm{Var}\left[\frac{\log p(y|\x_j) - \log p(y|\x_0)}{\sqrt{\mathrm{Var}[\Nmat_j\g(y)]}}-\frac{\log p'(y|\x_j) - \log p'(y|\x_0)}{\sqrt{\mathrm{Var}[\Nmat_j'\g'(y)]}} \right] \\
    &= \mathrm{Var}\left[\frac{\log p(y|\x_j)}{\sqrt{\mathrm{Var}[\Nmat_j\g(y)]}}-\frac{\log p'(y|\x_j)}{\sqrt{\mathrm{Var}[\Nmat_j'\g'(y)]}} \right] \\
    &\phantom{+}+ \mathrm{Var}\left[ \frac{\log p'(y|\x_0)}{\sqrt{\mathrm{Var}[\Nmat_j'\g'(y)]}}- \frac{\log p(y|\x_0)}{\sqrt{\mathrm{Var}[\Nmat_j\g(y)]}}\right] \\
    &\phantom{+}+ 2\Cov\left[ \frac{\log p(y|\x_j)}{\sqrt{\mathrm{Var}[\Nmat_j\g(y)]}}-\frac{\log p'(y|\x_j)}{\sqrt{\mathrm{Var}[\Lmat_j'\f'(\x)]}},  \frac{\log p'(y|\x_0)}{\sqrt{\mathrm{Var}[\Lmat_j'\f'(\x)]}}- \frac{\log p(y|\x_0)}{\sqrt{\mathrm{Var}[\Nmat_j\g(y)]}} \right] \\
    & \leq \mathrm{Var}\left[\frac{\log p(y|\x_j)}{\sqrt{\mathrm{Var}[\Nmat_j\g(y)]}}-\frac{\log p'(y|\x_j)}{\sqrt{\mathrm{Var}[\Nmat_j'\g'(y)]}} \right] \\
    &\phantom{+}+ \mathrm{Var}\left[ \frac{\log p'(y|\x_0)}{\sqrt{\mathrm{Var}[\Nmat_j'\g'(y)]}}- \frac{\log p(y|\x_0)}{\sqrt{\mathrm{Var}[\Nmat_j\g(y)]}}\right] \\
    &\phantom{+}+ 2\sqrt{\mathrm{Var}\left[ \frac{\log p(y|\x_j)}{\sqrt{\mathrm{Var}[\Nmat_j\g(y)]}}-\frac{\log p'(y|\x_j)}{\sqrt{\mathrm{Var}[\Nmat_j'\g'(y)]}}\right]}  \sqrt{\mathrm{Var}\left[\frac{\log p'(y|\x_0)}{\sqrt{\mathrm{Var}[\Nmat_j'\g'(y)]}}- \frac{\log p(y|\x_0)}{\sqrt{\mathrm{Var}[\Nmat_j\g(y)]}} \right]} \, .
    \end{align*}
    Since by assumption $d^\lambda_{\mathrm{LLV}}(p, p') \leq \epsilon$, this implies that $t_2 \leq \epsilon$ and that for $\x_j \in \calX_{\mathrm{LLV}}$: 
    \begin{align}
        \sqrt{\mathrm{Var}\left[ \frac{\log p(y|\x_j)}{\sqrt{\mathrm{Var}[\Nmat_j\g(y)]}}-\frac{\log p'(y|\x_j)}{\sqrt{\mathrm{Var}[\Nmat_j'\g'(y)]}}\right]} \leq \epsilon 
    \end{align}
    and 
    \begin{align}
        \sqrt{\mathrm{Var}\left[ \frac{\log p(y|\x_0)}{\sqrt{\mathrm{Var}[\Nmat_j\g(y)]}}-\frac{\log p'(y|\x_0)}{\sqrt{\mathrm{Var}[\Nmat_j'\g'(y)]}}\right]} \leq \epsilon \, .
    \end{align}
    Therefore, we get: 
    \begin{align}
        d_{\mathrm{SVD}}(\Nmat^{\top}\g(y), \Nmat^{'\top}\g'(y)) &\leq \sqrt{\sum_{l = 1}^M M\mathrm{Var}\left[\frac{\Nmat_j\g(y)}{\sqrt{\mathrm{Var}[\Nmat_j\g(y)]}}-\frac{\Nmat_j'\g'(y)}{\sqrt{\mathrm{Var}[\Nmat_j'\g'(y)]}} \right]} \\
        &\leq \sqrt{\sum_{l = 1}^M M (\epsilon^2+\epsilon^2 + 2 \epsilon^2)} \\
        &= \sqrt{M^2 4 \epsilon^2} = 2M\epsilon
        \label{eq:bound-Ng}
    \end{align}
    showing the second part of the result.      
    Taking the maximum between \cref{eq:bound-Lf} and \cref{eq:bound-Ng}, we get the result of the claim.   
\end{proof}

To see how this result connects to how far the embedding functions $\f(\x), \f'(\x)$ are from being invertible linear transformations of each other, notice that if $\Lmat^{\top}$ and $\Lmat^{'\top}$ are both invertible, then if there exists an invertible linear transformation, $\B$, such that $\Lmat^{\top}\f(\x) = \B \Lmat^{'\top}\f'(\x)$, then we also have an invertible linear transformation $\A = \Lmat^{-\top}\B \Lmat^{'\top}$ such that $\f(\x) = \A\f'(\x)$.

Since $m_\mathrm{SVD}(\x, y)$ is always non-negative, for this bound in \cref{theorem:main_bounds_d_prob} to be non-vacuous, we need 
\begin{align}
    d_{\mathrm{SVD}}(\Lmat^{\top}\f(\x), \Lmat^{'\top}\f'(\x)) \leq 2M\epsilon < 1 \quad \Longrightarrow \quad \epsilon < \frac{1}{2M} \ .
\end{align}
This means that for higher dimensional representations, we need the variance of the differences of log-likelihoods to be smaller, if we want a guarantee from this result. 

Next, we prove that the dissimilarity between representations induced by \cref{theorem:main_bounds_d_prob} is invariant to substituting the members with models that are $\sim_L$-equivalent.

\begin{lemma}
    \label{lemma:no-worries-with-d-svd}
    For any two models $(\vf, \vg), (\vf', \vg') \in \Theta$, and for any other model $(\vf^*, \vg^*) \in \Theta$ such that $(\vf^*, \vg^*) \sim_L (\vf, \vg)$, we have that:
    \begin{align}
        d_{\mathrm{SVD}} ( \vL^\top \vf(\vx), {\vL'}^\top \vf'(\vx) ) 
        &=
        d_{\mathrm{SVD}} ( {\vL^*}^\top \vf^*(\vx), {\vL'}^\top \vf'(\vx) ) 
        \label{eq:same-svd-embs}
        \, ,\\
        d_{\mathrm{SVD}} ( \vN^\top \vg(y), {\vN'}^\top \vg'(y) ) 
        &=
        d_{\mathrm{SVD}} ( {\vN^*}^\top \vg^*(y), {\vN'}^\top \vg'(y) ) 
        \label{eq:same-svd-unembs}
        \, .
    \end{align}
\end{lemma}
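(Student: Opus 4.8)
The plan is to reduce both identities to two facts about $d_{\mathrm{SVD}}$: that it is a functional of the \emph{joint law} of its two arguments, and that it is invariant to translating either argument by a fixed vector (the translation invariance of $m_\mathrm{SVD}$, hence of $d_{\mathrm{SVD}}$, is exactly what is established in \cref{app:pls_svd_dissim_measure}). First I would unpack the hypothesis via \cref{eq:linear-equivalence}: since $(\vf^*, \vg^*) \sim_L (\vf, \vg)$, there is an invertible $\A \in \bbR^{M\times M}$ (concretely $\A = {\vL^*}^{-\top}\vL^\top$) such that $\vf^*(\vx) = \A\vf(\vx)$ for all $\vx \in \calX$ and $\vg^*_0(y) = \A^{-\top}\vg_0(y)$ for all $y \in \calY$, where $\vL^*$ and $\vN^*$ are formed from the \emph{same} label set, input set, and pivots used to form $\vL$ and $\vN$.

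For \cref{eq:same-svd-embs}: the columns of $\vL^*$ are $\vg^*_0(y_i) = \A^{-\top}\vg_0(y_i)$, so $\vL^* = \A^{-\top}\vL$, hence ${\vL^*}^\top = \vL^\top\A^{-1}$ and therefore ${\vL^*}^\top\vf^*(\vx) = \vL^\top\A^{-1}\A\vf(\vx) = \vL^\top\vf(\vx)$ pointwise in $\vx$. Since the input distribution $p_\calD$ is the same on both sides, the pair $\big({\vL^*}^\top\vf^*(\vx),\, {\vL'}^\top\vf'(\vx)\big)$ has the same joint law as $\big(\vL^\top\vf(\vx),\, {\vL'}^\top\vf'(\vx)\big)$, so the two $d_{\mathrm{SVD}}$ values agree. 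For \cref{eq:same-svd-unembs}, the analogous bookkeeping gives $\vf^*_0(\vx) = \A\vf_0(\vx)$, hence $\vN^* = \A\vN$ and ${\vN^*}^\top = \vN^\top\A^\top$; and from $\vg^*_0(y) = \A^{-\top}\vg_0(y)$ one gets $\vg^*(y) = \A^{-\top}\vg(y) + \mathbf{b}$ with $\mathbf{b} := \vg^*(y_0) - \A^{-\top}\vg(y_0)$ independent of $y$, so ${\vN^*}^\top\vg^*(y) = \vN^\top\A^\top\big(\A^{-\top}\vg(y)+\mathbf{b}\big) = \vN^\top\vg(y) + \mathbf{c}$ with $\mathbf{c} := \vN^\top\A^\top\mathbf{b}$ constant in $y$. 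Invoking the translation invariance of $m_\mathrm{SVD}$ (\cref{app:pls_svd_dissim_measure}) then yields $d_{\mathrm{SVD}}\big({\vN^*}^\top\vg^*(y), {\vN'}^\top\vg'(y)\big) = d_{\mathrm{SVD}}\big(\vN^\top\vg(y), {\vN'}^\top\vg'(y)\big)$.

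The only genuinely delicate point, and the step I would handle most carefully, is the unembedding case: $\sim_L$ pins down only the \emph{displaced} unembeddings $\vg_0$, so $\vg^*$ and $\A^{-\top}\vg$ differ by a constant vector, and consequently ${\vN^*}^\top\vg^*(y)$ and $\vN^\top\vg(y)$ are not literally equal but differ by the fixed shift $\mathbf{c}$ — which is precisely why the argument must lean on translation invariance here rather than on an exact identity as in the embedding case. All the remaining manipulations are direct substitution, so I do not anticipate any serious obstacle.
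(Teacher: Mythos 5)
Your proposal is correct and follows essentially the same two-step strategy as the paper's own proof: an exact pointwise identity ${\vL^*}^\top\vf^*(\vx) = \vL^\top\vf(\vx)$ for the embedding case, and a constant-shift identity ${\vN^*}^\top\vg^*(y) = \vN^\top\vg(y) + \mathbf{c}$ combined with translation invariance of $d_{\mathrm{SVD}}$ for the unembedding case. Your version is in fact somewhat more explicit than the paper's (which leaves several of the matrix manipulations implicit and has a prime/star notational slip), and you correctly flag the one genuinely delicate point — that $\sim_L$ only constrains the displaced unembeddings $\vg_0$, so the unembedding equality holds only up to a shift and must lean on translation invariance.
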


\begin{proof}
    The proof follows using the linear equivalence relation \cref{theorem:identifiability}. 
    For any two models $(\vf, \vg) \sim (\vf^*, \vg^*)$ we have that:
    \[
        \vg_0(y)^\top \vf(\vx) = \vg'_0 (y)^\top \vf'(\vx)  \, ,
    \]
    for all $\vx \in \calX$ and $y \in \calY$. By considering $M$ elements in $\calY$, we get
    \[
        \vL^\top \vf(\vx) = {\vL'}^\top \vf'(\vx) \, ,
    \]
    where $\vL, \vL' \in \bbR^{M \times M}$ are the matrices constructed with columns the vectors $\vg_0(y)$ and $\vg'_0(y)$, respectively. Therefore, we get \cref{eq:same-svd-embs}:
    \[
        d_{\mathrm{SVD}} ( \vL^\top \vf(\vx), {\vL'}^\top \vf'(\vx) ) = d_{\mathrm{SVD}} ( {\vL^*}^\top \vf^*(\vx), {\vL'}^\top \vf'(\vx) \, .
    \]
    To obtain \cref{eq:same-svd-unembs}, notice that with similar steps we can write
    \[
        \vN^\top \vg(y) = {\vN'}^\top \vg'(y) + \vb \, .
    \]
    where $\vb \in \bbR$ is a displacement vector. We have
    \[
        d_{\mathrm{SVD}} ( \vN^\top \vg(y), {\vN'}^\top \vg'(y) ) 
        =
        d_{\mathrm{SVD}} ( {\vN^*}^\top \vg^*(y) + \vb, {\vN'}^\top \vg'(y) ) 
    \]
    and given that $d_{SVD}$ is invariant to translations, we arrive at the final result
    \[
        d_{\mathrm{SVD}} ( \vN^\top \vg(y), {\vN'}^\top \vg'(y) ) 
        =
        d_{\mathrm{SVD}} ( {\vN^*}^\top \vg^*(y), {\vN'}^\top \vg'(y) ) \, .
    \]
    This shows the claim.
\end{proof}

\subsection{Invariances of our Representational Distance and CCA}
\label{app:invariances_d_svd_vs_cca}

As noted in \cref{app:cca_def}, $m_{\mathrm{CCA}}(\f, \f')$ is invariant to any invertible linear transformation of $\f$ and $\f'$. %
In contrast, when considering our dissimilarity measure, $d_{\vf,\vg}$, and the transformations to which it is invariant, it is important to note that it relies on both
$d_{\mathrm{SVD}}(\Lmat^{\top}\f(\x), \Lmat^{'\top}\f'(\x))$ and $d_{\mathrm{SVD}}(\Nmat^{\top}\g(y), \Nmat^{'\top}\g'(y))$. %
In both these expressions, embeddings and unembeddings are coupled, since the matrices $\Lmat, \Lmat'$ (resp.\ $\Nmat, \Nmat'$) depend on the unembeddings $\g, \g'$ (resp.\ the embeddings $\f, \f'$). %
By contrast, $m_{\mathrm{CCA}}(\f, \f')$ can be computed independently of the unembeddings $\g, \g'$. %

Consequently, the two similarity measures (dissimilarity in the case of $d_{\vf,\vg}$) differ in their invariance properties, as demonstrated below. %
Consider two models $(\f, \g), (\f',\g') \in \Theta$ such that $\vf(\vx) = \vA \vf' (\vx)$ and $\vg_0 (y) = \vB \vg'_0(y)$, with $\vA$ and $\vB$ invertible matrices but such that $\vB \neq \vA^{-\top}$, i.e., $(\f, \g)$ and $ (\f',\g')$ are not in the same identifiability class.
Since $\vA$ and $\vB$ are invertible matrices, we have $m_\mathrm{CCA}(\vf, \vf') = m_\mathrm{CCA}(\vg, \vg') = 1$. In contrast, we get:
\[
    d_{\vf,\vg}((\vf,\vg), (\vf',\vg')) = \max 
        \left\{
            d_{\mathrm{SVD}}(\Lmat^{\top}\f(\x), \Lmat^{'\top}\f'(\x)), 
            d_{\mathrm{SVD}}(\Nmat^{\top}\g(y), \Nmat^{'\top}\g'(y))  
        \right\} \geq 0 \, ,
        \label{eq:zero-or-positive-svd}
\]
where the equality holds if and only if there exist orthogonal matrices $\vO, \vO'$ and diagonal matrices $\mathbf{S}, \mathbf{S}', \mathbf{D}, \mathbf{D}' \in \bbR^{M \times M}$ with entries 
$S_{ii}  \coloneqq ({\psi_{\vx}(y_i; p)})^{-1}, 
D_{ii}  \coloneqq ({\psi_{y}(\x_j; p)})^{-1}$, 
$S_{ii}'  \coloneqq ({\psi_{\vx}(y_i; p')})^{-1}, 
D_{ii}'  \coloneqq ({\psi_{y}(\x_j; p')})^{-1}$, 
and displacement vectors $\va, \vb$ such that:
\[
    \Lmat^{\top}\f(\x) = \mathbf{S}^{-1} \vO \mathbf{S}' \Lmat^{'\top}\f'(\x) + \va,
    \quad
    \Nmat^{\top}\g(y) = \vD^{-1} \vO' \vD'  \Nmat^{'\top}\g'(y) + \vb \, .
    \label{eq:transformations-svd}
\]
Because the set of matrices described above is a subset of all linear invertible transformations, it is then possible to find cases where, in \cref{eq:zero-or-positive-svd}, the equality does not hold for a careful choice of $(\f, \g), (\f',\g') \in \Theta$.

\newpage

\section{Experimental Details}
\label{app:experimental_details}
This section contains details of the implementation of all our experiments. A repository with code for reproducing the experiments is available at github\footnote{\href{https://github.com/bemigini/close-dist-rep-sim}{\nolinkurl{github.com/bemigini/close-dist-rep-sim}}}.  %

\subsection{Constructed Models}
\label{app:constructed_models}

In the following, we will detail how to construct the models which we will use to illustrate \cref{theorem:small_KL_dissimilar_reps} and generate~\cref{table:kl_to_zero}; and to illustrate the bound in~\cref{theorem:main_bounds_d_prob} (see~\cref{app:Illustration_of_bound}).

We choose classification models with a representation space equal to $\bbR^2$.
To construct the reference model $(\vf, \vg) \in \Theta$,
we distribute its unembeddings uniformly on the unit circle, ensuring that the angle between any two adjacent unembeddings is equal. 
We then sample its embeddings such that %
each embedding corresponding to a given label lies closer---measured by angular distance---to its associated unembedding than to any other.

We construct another model $(\vf' \vg') \in \Theta$ by permuting the unembeddings and their associated embeddings. For both models, we then vary the norm of the unembeddings and measure $d_\mathrm{KL}$ and $d^\lambda_{\mathrm{LLV}}$ between models and the maximum of $d_{\mathrm{SVD}}$ between embeddings. 

To illustrate how $d^\lambda_{\mathrm{LLV}}$ and the bound derived in \cref{theorem:main_bounds_d_prob} increase with increasing differences in representations, 
we compare a reference model with a perturbed version constructed by adding a small amount of Gaussian noise to the reference model's embeddings. 
By increasing the amount of noise, $d^\lambda_{\mathrm{LLV}}$ grows, as well as the maximum of $d_{\mathrm{SVD}}$.

\subsection{Models Trained on Synthetic Data}
\label{app:synthetic_data}

\textbf{Synthetic data generation}.
We consider data with two-dimensional input and with $c$ classes, where each class consists of a slice of the circle together with the opposite slice. See \cref{fig:train_data_6_classes} for an example with $c=6$. We construct the data by drawing $20,000$ samples from a two-dimensional Gaussian ($\mu = \mathbf{0}, \sigma=3$) and assigning labels based on angle to the first axis.

\setlength\intextsep{0pt}
\begin{wrapfigure}[16]{r}{0.4\textwidth}
    \centering
    \includegraphics[width=0.95\linewidth]{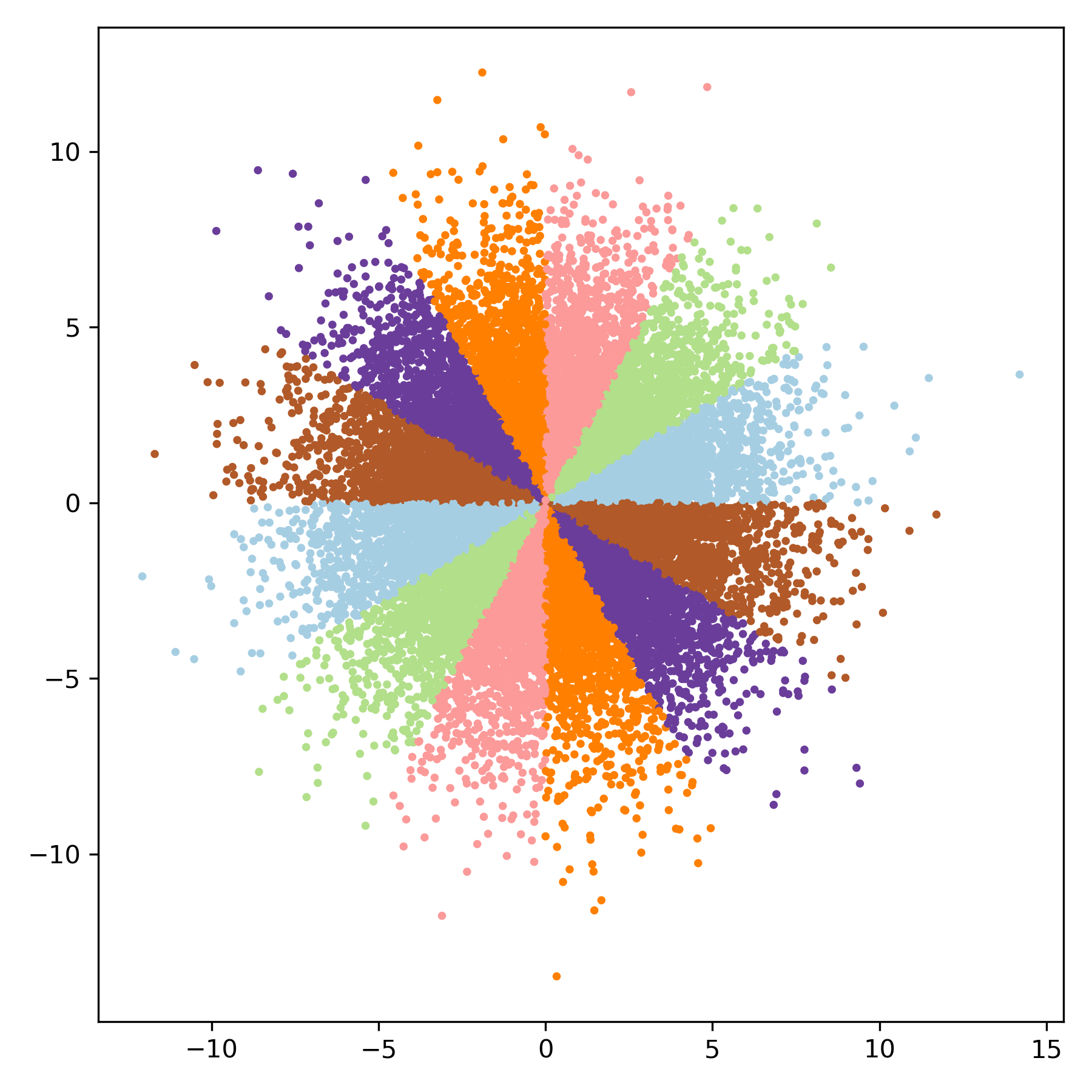}
    \caption{Illustration of training data for 6 classes. Each color represents a different class label.}
    \label{fig:train_data_6_classes}
\end{wrapfigure}

\textbf{Model training}.
We trained classification models with $c \in \{4, 6, 10, 18\}$ classes, using a representation space equal to $\bbR^2$. Each model consists of three fully-connected layers, with layer sizes chosen from $\{16, 32, 64, 128, 256\}$. 
We use LeakyReLU activation functions. 
We train four types of models: one where the norms of both the embeddings and unembeddings are constrained to be equal to $20$, one where the norm of the embeddings are equal to 20 and with unconstrained unembeddings, one where the norm of the embeddings is unconstrained and the norm of the unembeddings is 20, and one with no constraints on the norms.
To obtain the results in \cref{sec:wider_networks_more_similar}, we only consider models with unconstrained norms. For each combination of the number of classes, the layer size, and whether the restriction is applied or not, we train with 20 random seeds.
All models are trained with a batch size of $128$ for $15,000$ steps using the ADAM optimizer \citep{kingma2015adam}.

\subsection{Models Trained on CIFAR-10}
\label{app:cifar10_models}

We trained classification models on CIFAR-10 \citep{krizhevsky2009learning}, where the embedding network consisted of a ResNet18 \citep{he2016deep}\footnote{For this, we used code based on \url{https://github.com/kuangliu/pytorch-cifar/blob/master/models/resnet.py}} but choosing the representation space to be 2, 3 or 5-dimensional. For each dimension we trained 10 seeds. The unembedding network consisted of three fully connected layers of width 128, followed by an output layer of size 2, 3 or 5, thus giving us representations in the desired number of dimensions. The models were trained for $20,000$ steps with a batch size of 32 using the ADAM optimizer \citep{kingma2015adam}. 
The ResNet model used ReLU activation functions, while the networks for the unembeddings used LeakyReLU activation functions.

\subsection{Loss Difference vs Embedding \texorpdfstring{$m_\mathrm{CCA}$}{m\_CCA} } 
\label{app:loss_diff_vs_emb_mcca}
To illustrate \cref{cor:small_loss_dissimilar_reps} and that a small difference in test loss does not guarantee similar representations for our models trained on CIFAR-10, we present \cref{fig:cifar_loss_diff_vs_embedding_mcca}. Here we compare difference in test loss with $m_\mathrm{CCA}$ of embeddings of models, where the dimension of the representations are 2, 3 and 5. We see that there can both be a small difference in loss and a larger difference in representations or a larger difference in loss and a smaller difference in representations. 

\begin{figure}
    \centering
    \includegraphics[width=0.9\linewidth]{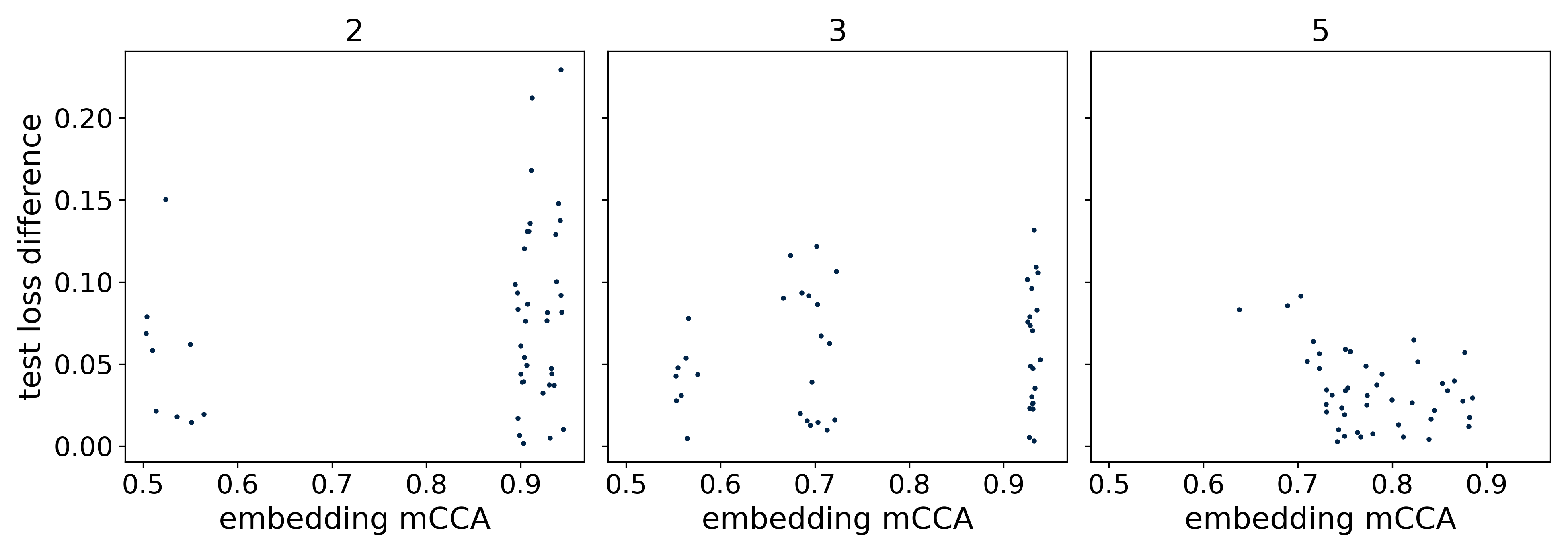}
    \caption{For models trained on CIFAR-10 with representational dimensions of 2, 3 and 5, difference in test loss vs $m_\mathrm{CCA}$ of the embeddings of the models. We see that there can both be a small difference in loss and a larger difference in representations or a larger difference in loss and a smaller difference in representations.}
    \label{fig:cifar_loss_diff_vs_embedding_mcca}
\end{figure}

\subsection{All Two-dimensional Representations of CIFAR-10 Models}
\label{app:all_cifar_representations}
We here present all the embedding and unembedding representations of our models trained on CIFAR-10 with 2-dimensional representations: seed 0 in \cref{fig:cifar_reps_seed0}, seed 1 in \cref{fig:cifar_reps_seed1}, seed 2 in \cref{fig:cifar_reps_seed2}, seed 3 in \cref{fig:cifar_reps_seed3}, seed 4 in \cref{fig:cifar_reps_seed4}, seed 5 in \cref{fig:cifar_reps_seed5}, seed 6 in \cref{fig:cifar_reps_seed6}, seed 7 in \cref{fig:cifar_reps_seed7}, seed 8 in \cref{fig:cifar_reps_seed8}, seed 9 in \cref{fig:cifar_reps_seed9}. 

We see that some labels are neighbours for all ten models. For example, ``automobile'' and ``truck'' are neighbours for all seeds and ``cat'' and ``dog'' are neighbours for all seeds. However, other labels sometimes have varying neighbours. For example ``airplane'' and ``bird'' are neighbours for seeds 2, 3, 5, 6, 7, but not for the remaining seeds. In seeds 1, 4, 8 and 9, the ``frog'' label is put between them, and in seed 0 the labels are permuted even further. 
This behaviour might arise because inputs for some labels are so similar that it would aversely affect the performance of the model to place them far apart, while the embeddings of inputs for other labels can be placed in several equally good ways. 

\begin{figure}
    \centering
    \includegraphics[width=0.8\linewidth]{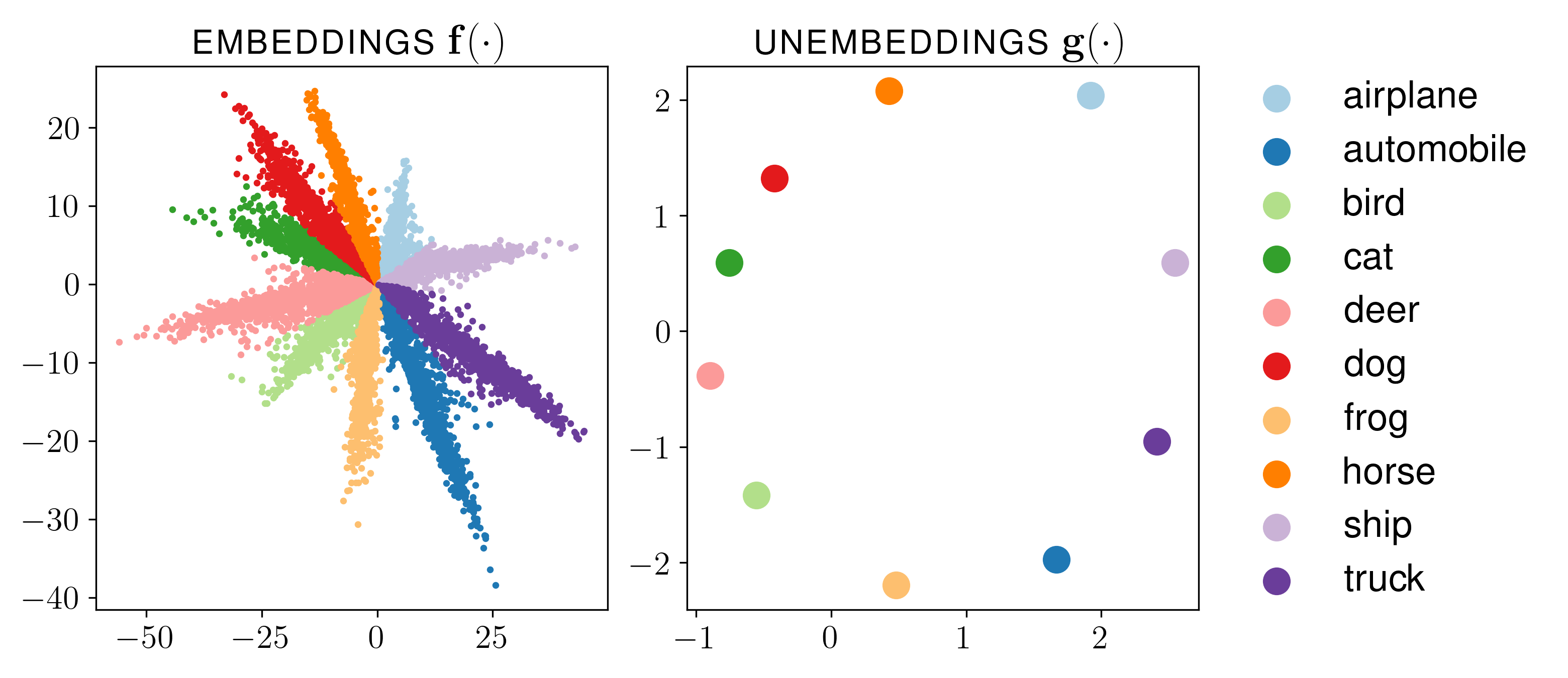}
    \caption{Illustration of representations of model trained on CIFAR-10, seed 0.}
    \label{fig:cifar_reps_seed0}
\end{figure}

\begin{figure}
    \centering
    \includegraphics[width=0.8\linewidth]{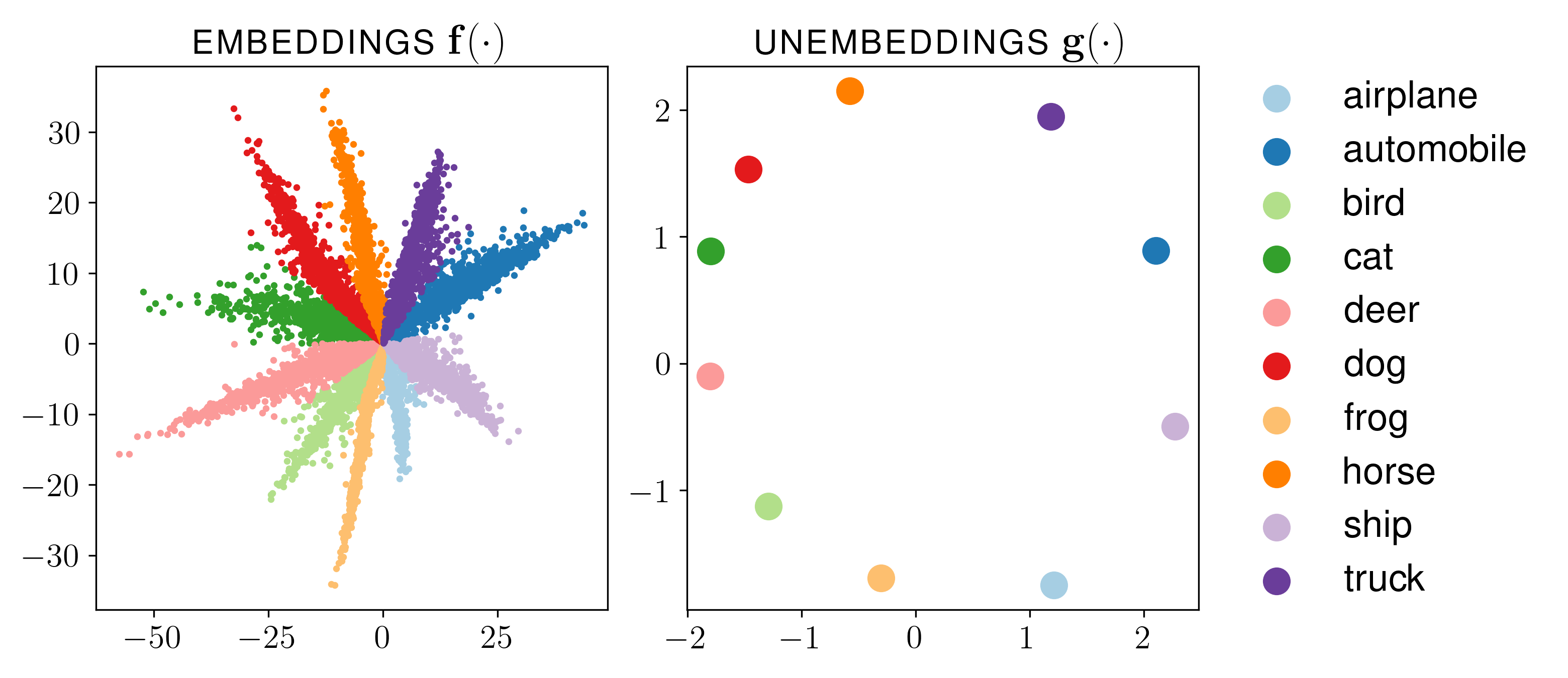}
    \caption{Illustration of representations of model trained on CIFAR-10, seed 1.}
    \label{fig:cifar_reps_seed1}
\end{figure}

\begin{figure}
    \centering
    \includegraphics[width=0.8\linewidth]{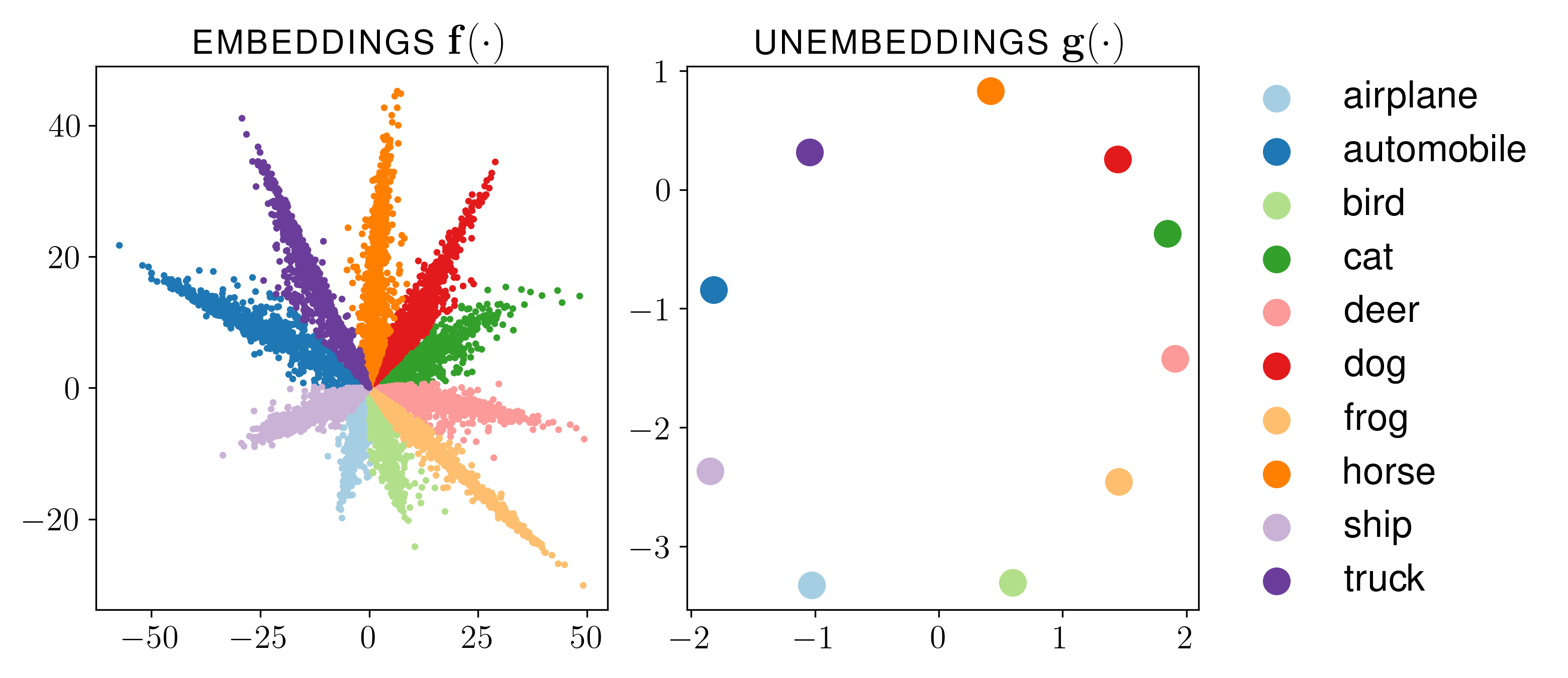}
    \caption{Illustration of representations of model trained on CIFAR-10, seed 2.}
    \label{fig:cifar_reps_seed2}
\end{figure}

\begin{figure}
    \centering
    \includegraphics[width=0.8\linewidth]{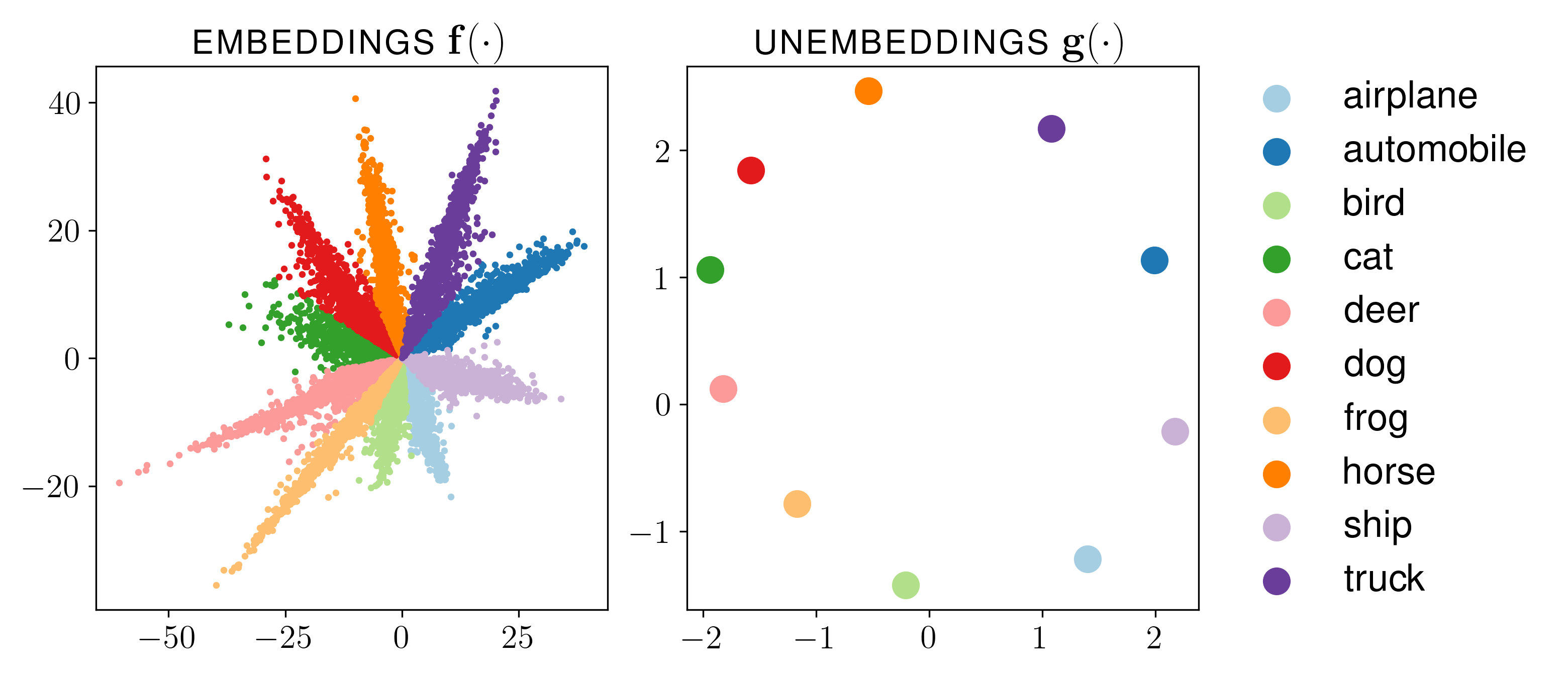}
    \caption{Illustration of representations of model trained on CIFAR-10, seed 3.}
    \label{fig:cifar_reps_seed3}
\end{figure}

\begin{figure}
    \centering
    \includegraphics[width=0.8\linewidth]{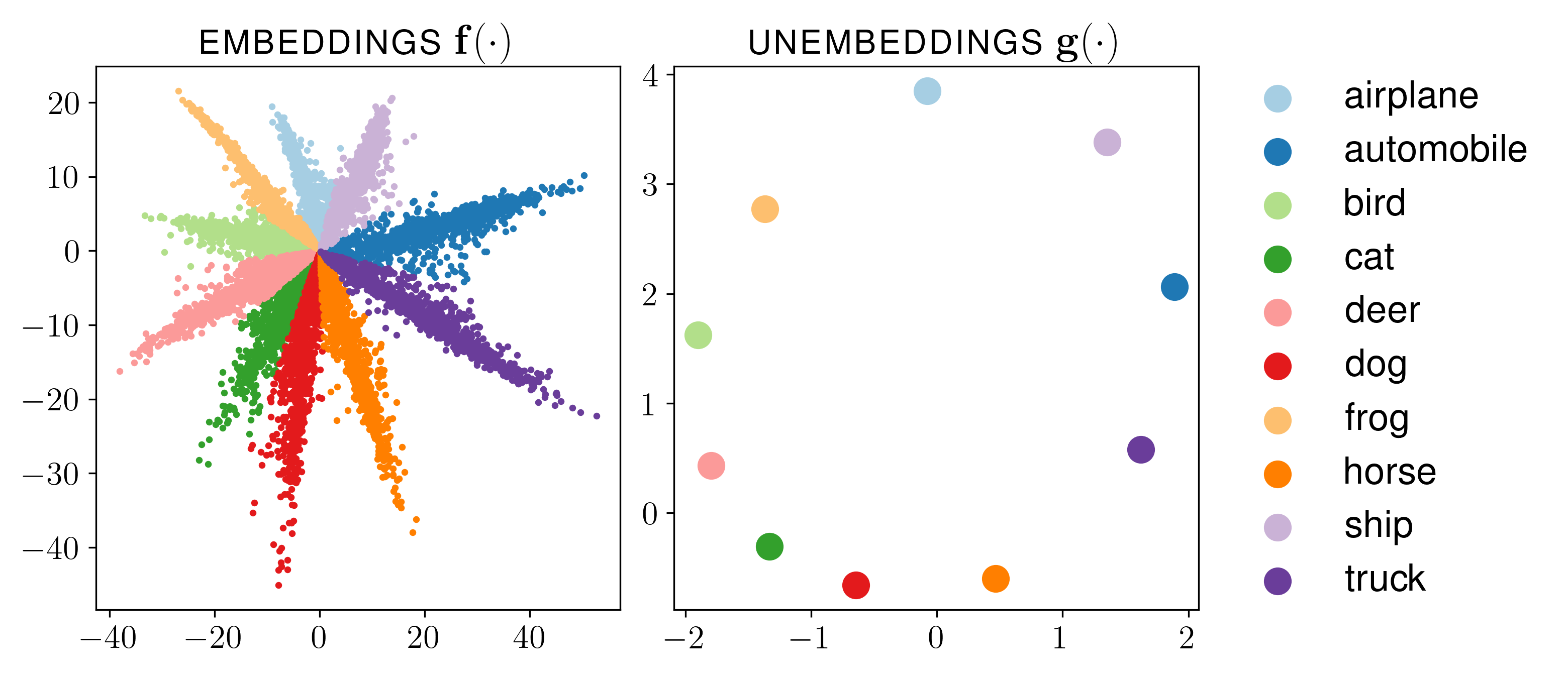}
    \caption{Illustration of representations of model trained on CIFAR-10, seed 4.}
    \label{fig:cifar_reps_seed4}
\end{figure}

\begin{figure}
    \centering
    \includegraphics[width=0.8\linewidth]{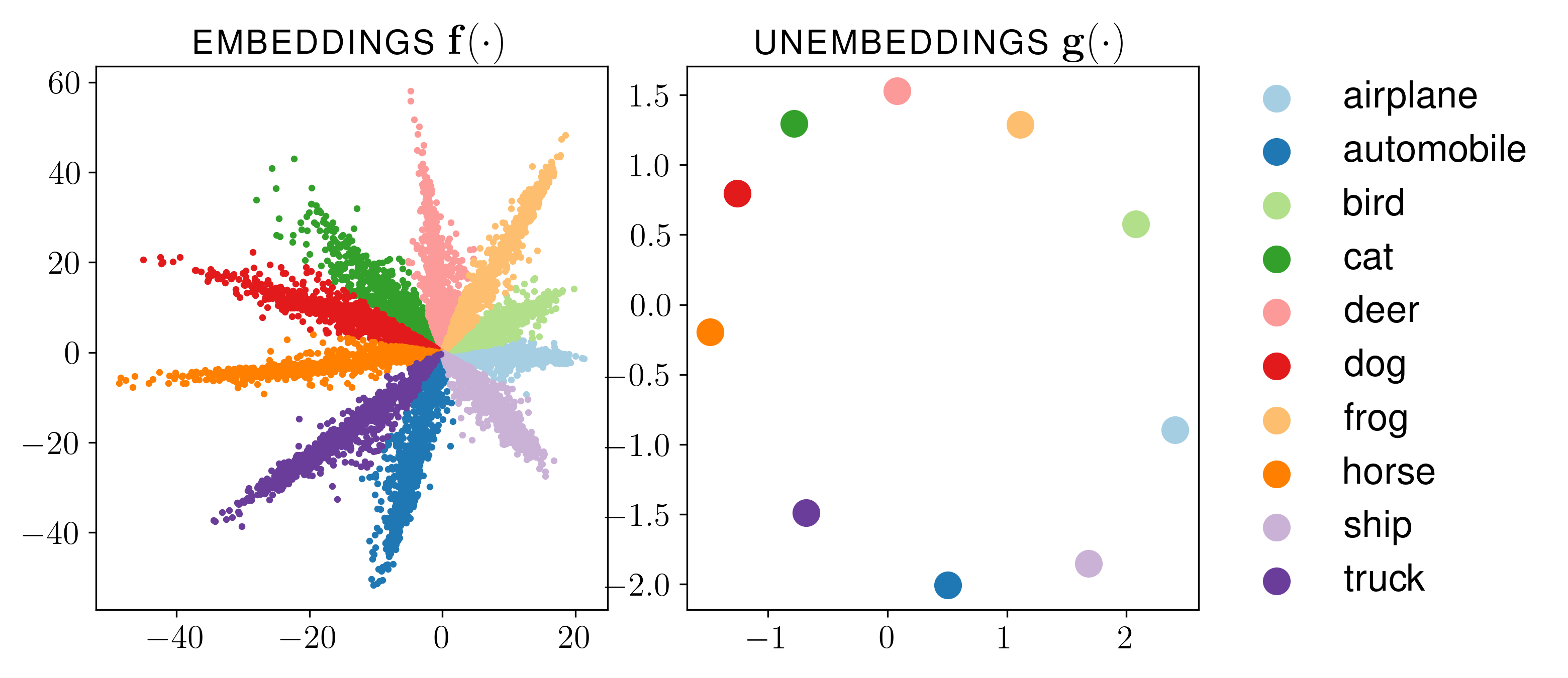}
    \caption{Illustration of representations of model trained on CIFAR-10, seed 5.}
    \label{fig:cifar_reps_seed5}
\end{figure}

\begin{figure}
    \centering
    \includegraphics[width=0.8\linewidth]{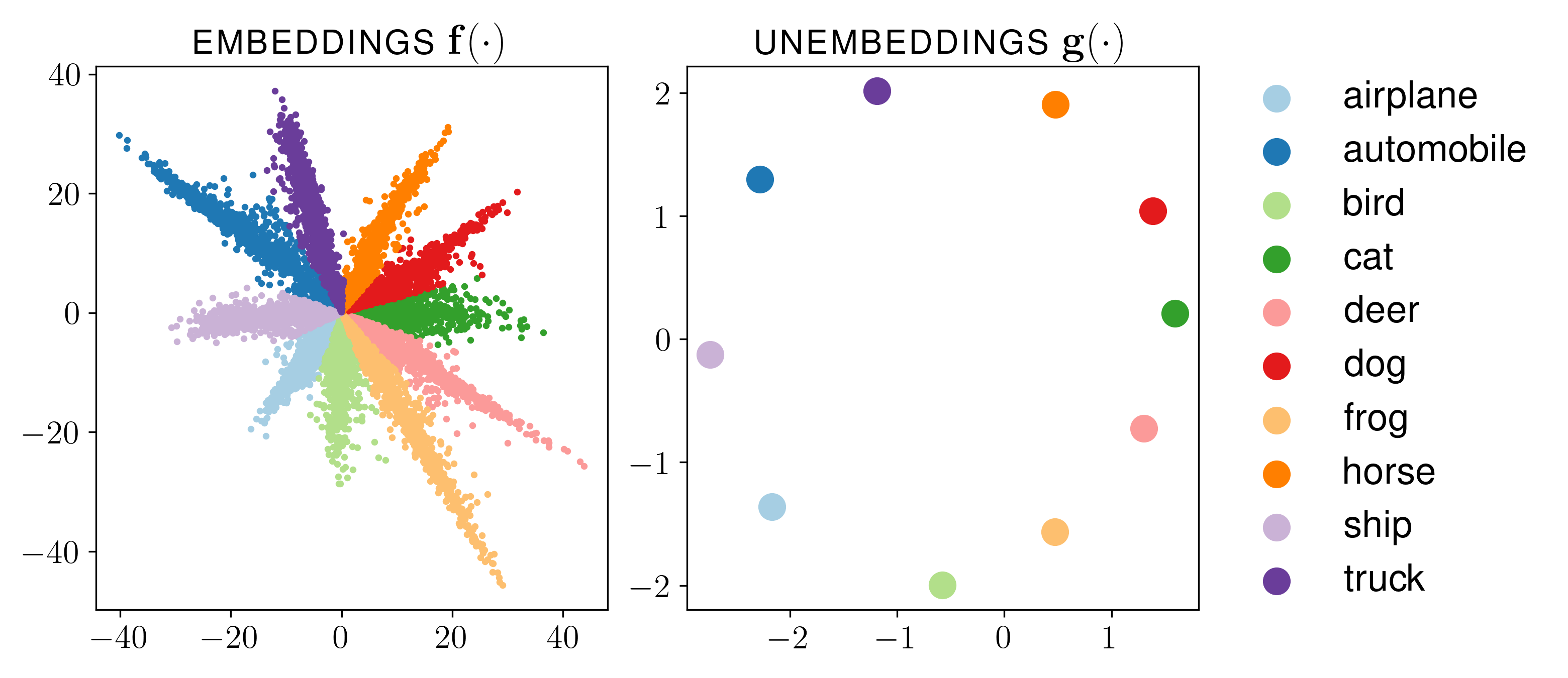}
    \caption{Illustration of representations of model trained on CIFAR-10, seed 6.}
    \label{fig:cifar_reps_seed6}
\end{figure}

\begin{figure}
    \centering
    \includegraphics[width=0.8\linewidth]{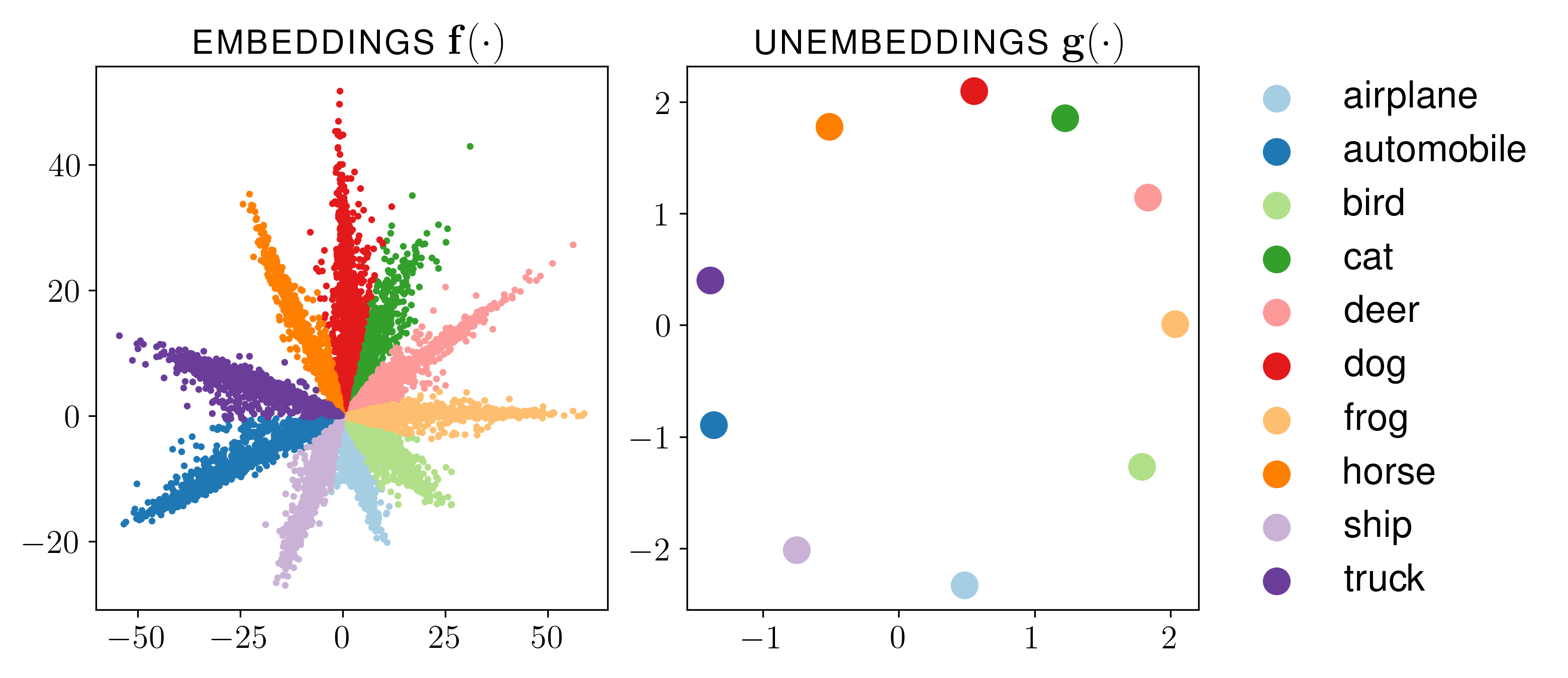}
    \caption{Illustration of representations of model trained on CIFAR-10, seed 7.}
    \label{fig:cifar_reps_seed7}
\end{figure}

\begin{figure}
    \centering
    \includegraphics[width=0.8\linewidth]{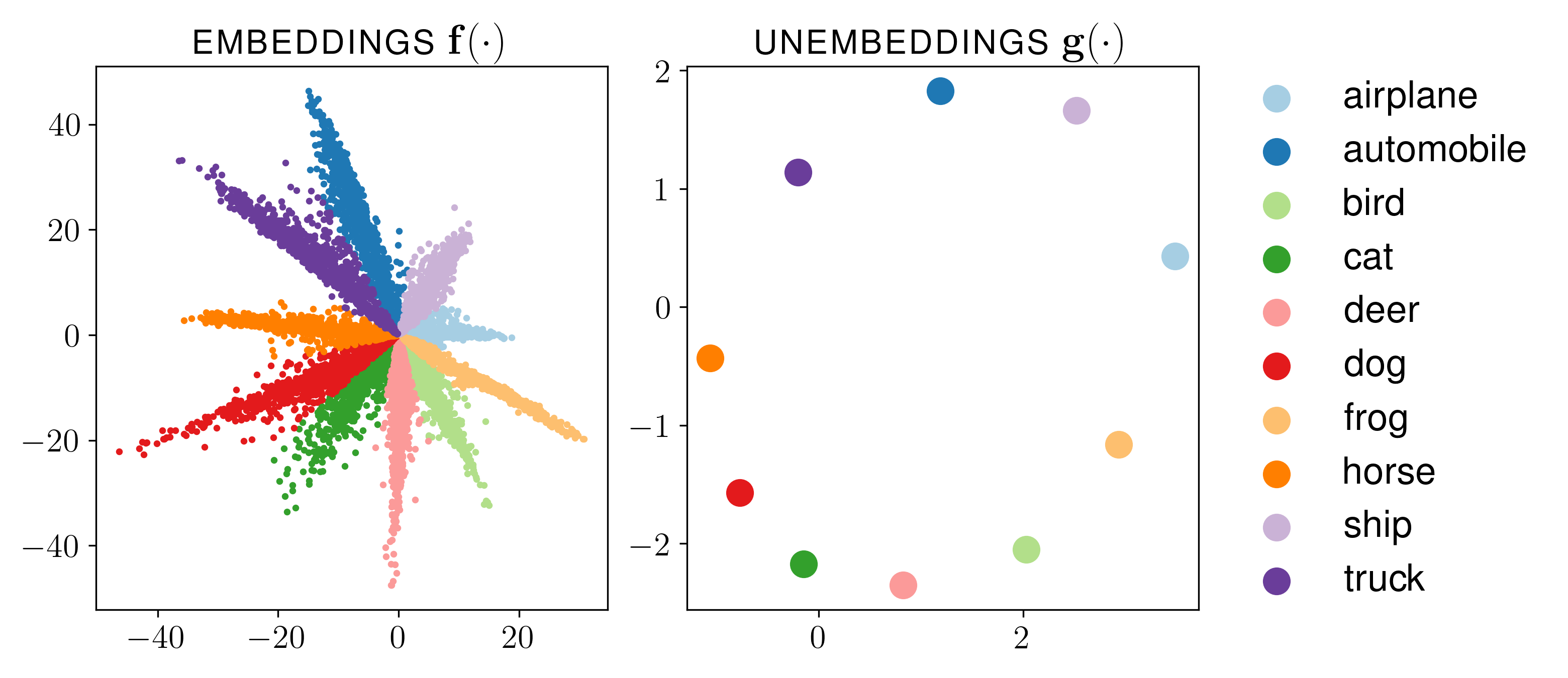}
    \caption{Illustration of representations of model trained on CIFAR-10, seed 8.}
    \label{fig:cifar_reps_seed8}
\end{figure}

\begin{figure}
    \centering
    \includegraphics[width=0.8\linewidth]{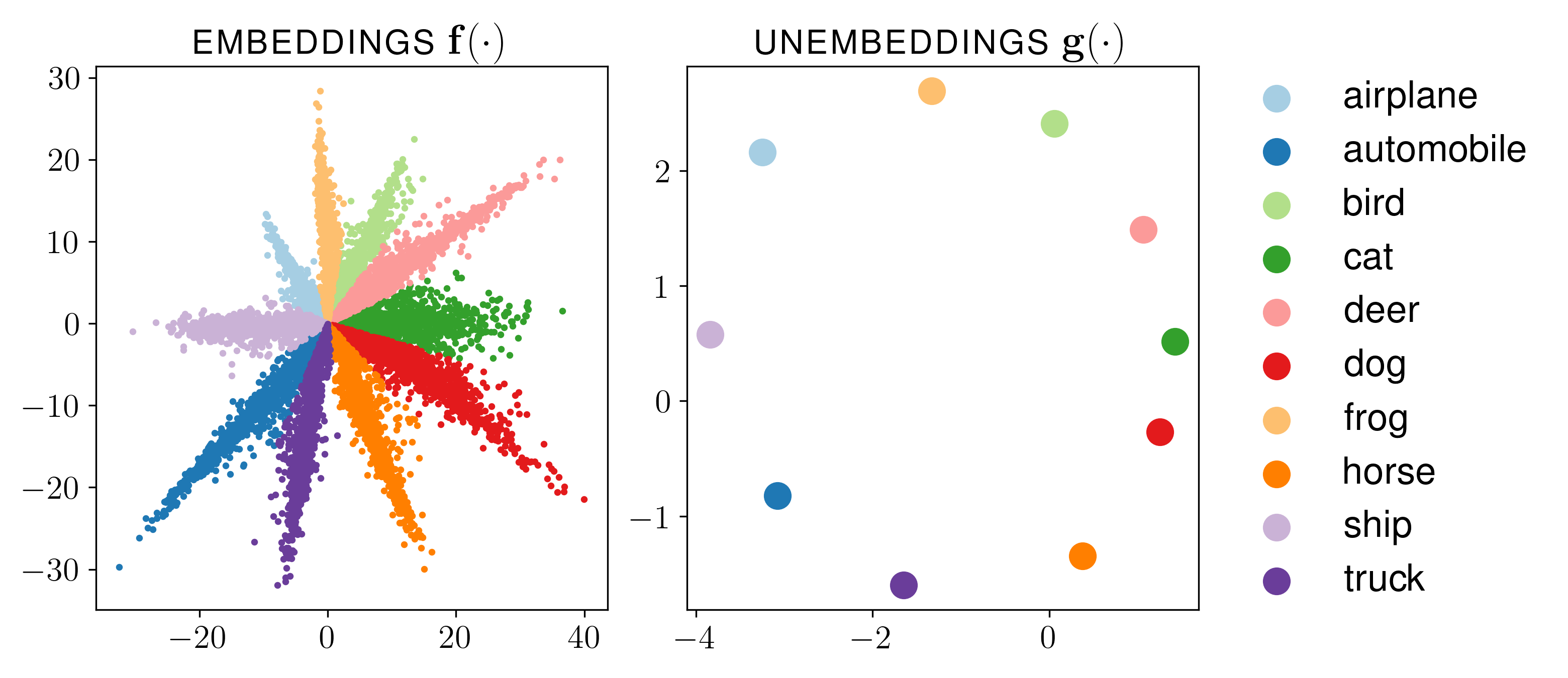}
    \caption{Illustration of representations of model trained on CIFAR-10, seed 9.}
    \label{fig:cifar_reps_seed9}
\end{figure}

\subsection{Illustration of Bound on Constructed and Trained Models}
\label{app:Illustration_of_bound}

\textbf{Experimental setup}.
We compare both constructed models and models trained on synthetic data. For the constructed models, we compare a reference model to a perturbed version constructed by adding Gaussian noise to the reference model's embedding representations (as described in \cref{app:constructed_models}). For models trained on synthetic data, the setup is the same as described in~\cref{app:synthetic_data}

\textbf{Results}.
In \cref{fig:d_prob_vs_d_rep} (left), we observe that the maximum distance between representations gradually increases with the distance between probability distributions while staying below the bound of the \cref{theorem:main_bounds_d_prob}. 
We find that not all the trained models have distance $d^\lambda_\mathrm{LLV}$ small enough for the bound to be non-vacuous. 
In this case, we report only those distances that are small enough for the bound to be non-vacuous between trained models in \cref{fig:d_prob_vs_d_rep} (right) and see that the bound remains valid.  

\begin{figure}[!t]    
    \centering
    \begin{minipage}{0.45\textwidth}
        \centering
        \includegraphics[width=0.9\textwidth]{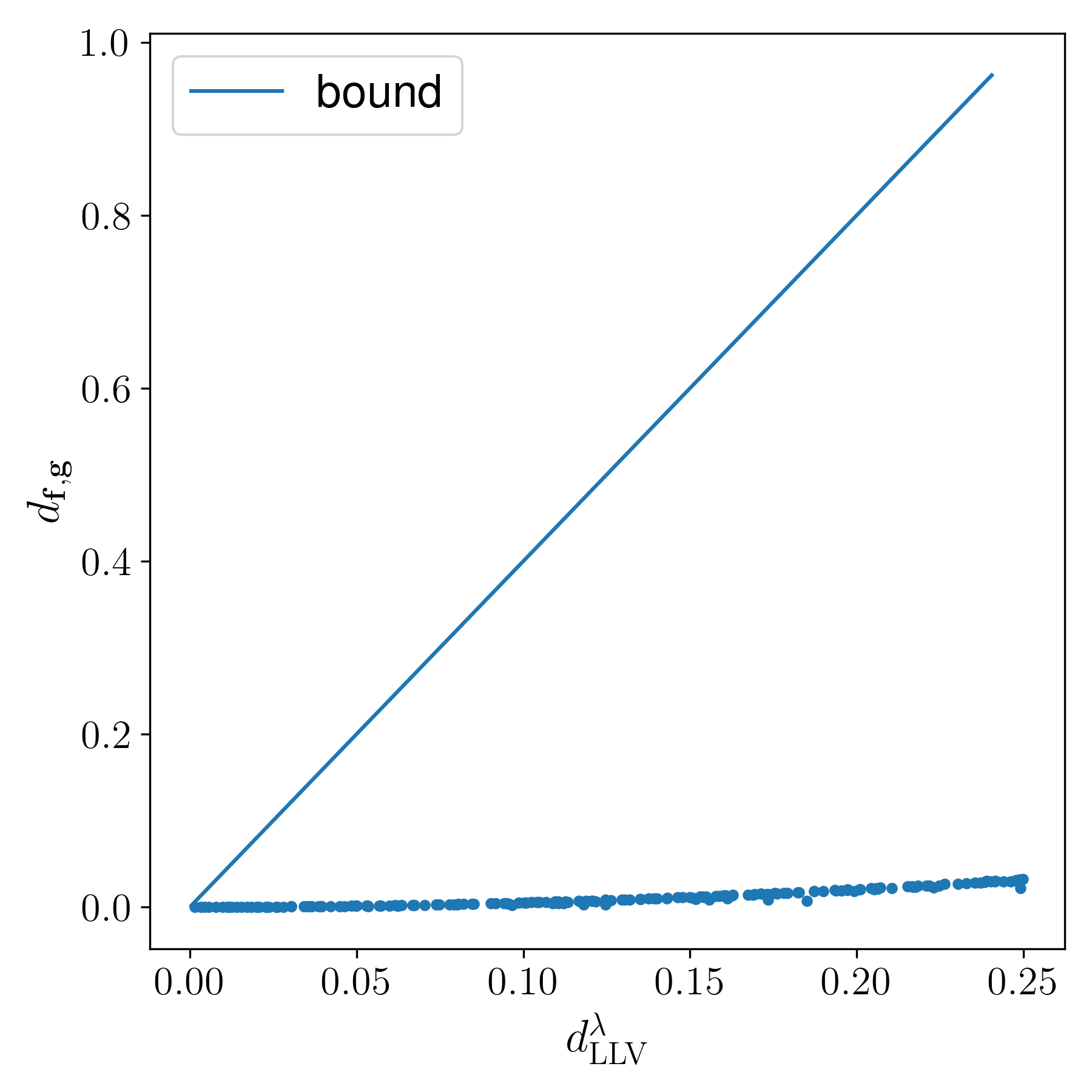}        
    \end{minipage}%
    \hfill
    \begin{minipage}{0.45\textwidth}
        \centering
        \includegraphics[width=0.9\textwidth]{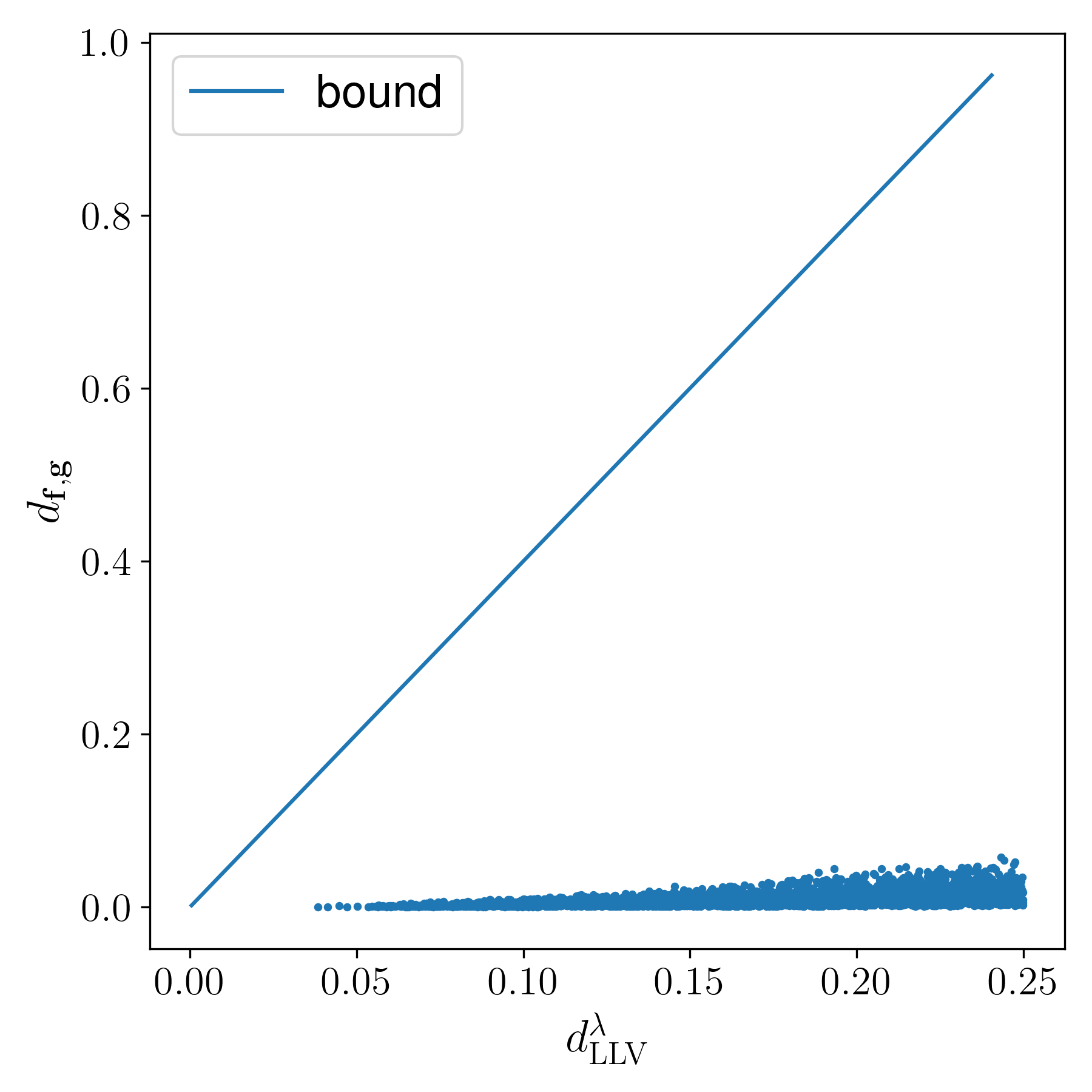}        
    \end{minipage}
    \caption{(Left) We evaluate$d^\lambda_{\mathrm{LLV}}$ and $d_{\f,\g}$ for a reference model and a constructed one where we progressively increase the noise added to the embeddings. As more noise is injected, the two models display higher $d^\lambda_\mathrm{LLV}$ (up to $0.25$), while the representation dissimilarity $d_{\f,\g}$ does not exceed $0.1$.   
    (Right) We observe a similar trend for trained models, with a slight increase in $d_{\f,\g}$ when $d^\lambda_\mathrm{LLV}$ approaches $0.25$. 
    We note that the bound given by \cref{theorem:main_bounds_d_prob} remains valid for both cases.}
    \label{fig:d_prob_vs_d_rep}
\end{figure}

\subsection{Wider Models Have more Similar Distributions - Extra Plots}
\label{app:Wider_models_more_similar}

We here report the empirical trend that wider networks induce more similar distributions and more similar representations for models with $10$ classes (\cref{fig:wider_models_10_d_LLV_d_SVD} $d_{\mathrm{LLV}}$(left) and $\max d_{\mathrm{SVD}}$(right)) and $18$ classes (\cref{fig:wider_models_18_d_LLV_d_SVD} $d_{\mathrm{LLV}}$(left) and $\max d_{\mathrm{SVD}}$(right)). Note that with more classes, we have less models among the most narrow networks which achieve more than $90\%$ accuracy. We have therefore left out results in the plots where we had fewer than $5$ models for the comparison. 
For $10$ classes, this means that starting at a width of 32, we have 9, 16, 19, and 19 models for the comparisons in the plot. For $18$ classes, starting at a width of 64, we have 10, 12, and 19 models for the comparisons in the plot. 

\begin{figure}[!t]    
    \centering
    
    \includegraphics[width=0.75\linewidth]{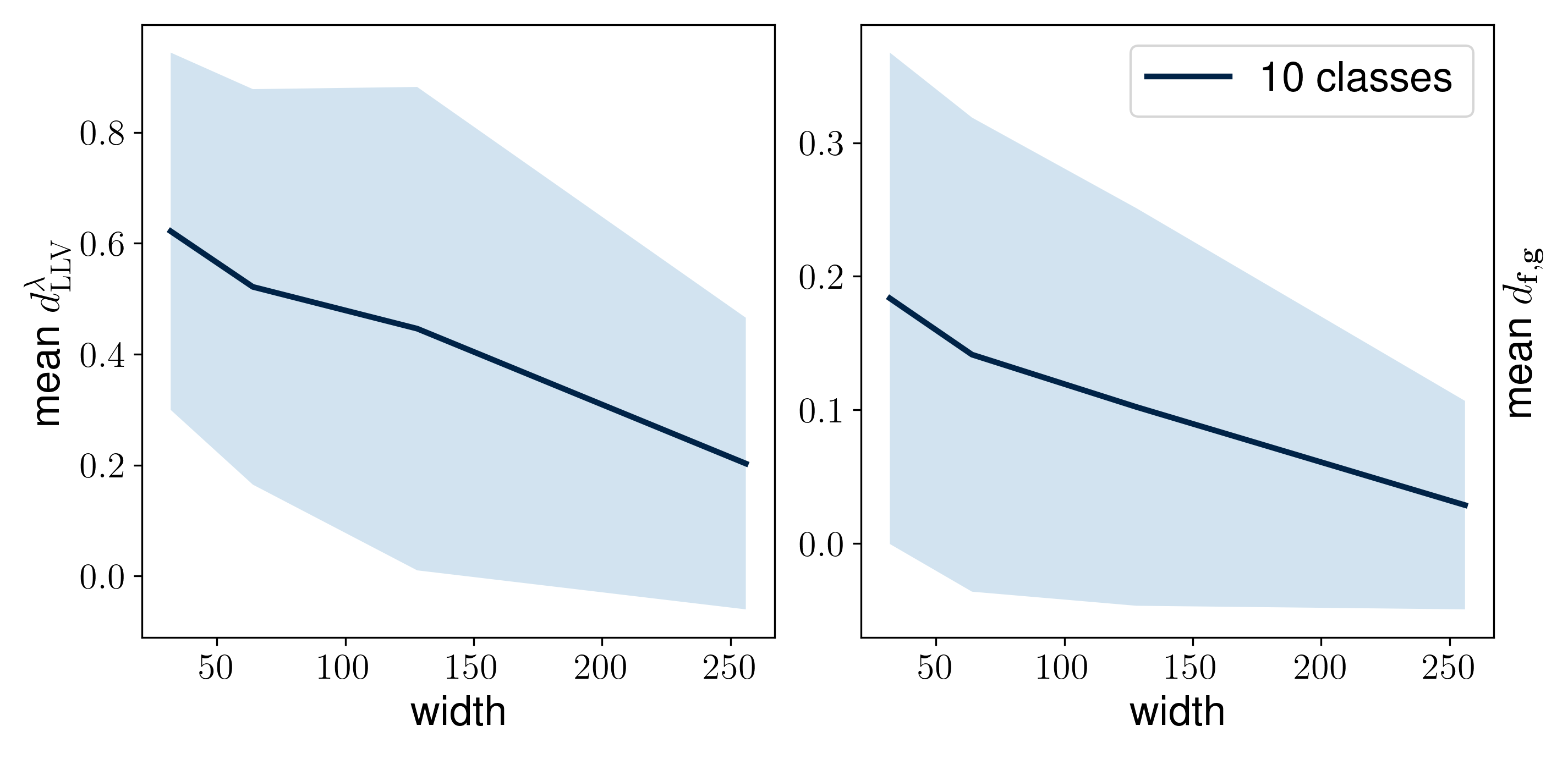}
    
    \caption{We display how mean $d_{\mathrm{LLV}}$ and max $d_{\mathrm{LLV}}$
 varies with increasing the neural network width for models trained to classify among 10 classes. (Left) We observe decreasing mean $d_{\mathrm{LLV}}$ as the network width grows. The shaded area represents the standard deviations evaluated from different random seeds retraining.
 (Right) A similar trend is also observed for max $d_{\mathrm{LLV}}$ when increasing the network width.
    }
    \label{fig:wider_models_10_d_LLV_d_SVD}     
\end{figure}

\begin{figure}[!t]    
    \centering
    \includegraphics[width=0.75\linewidth]{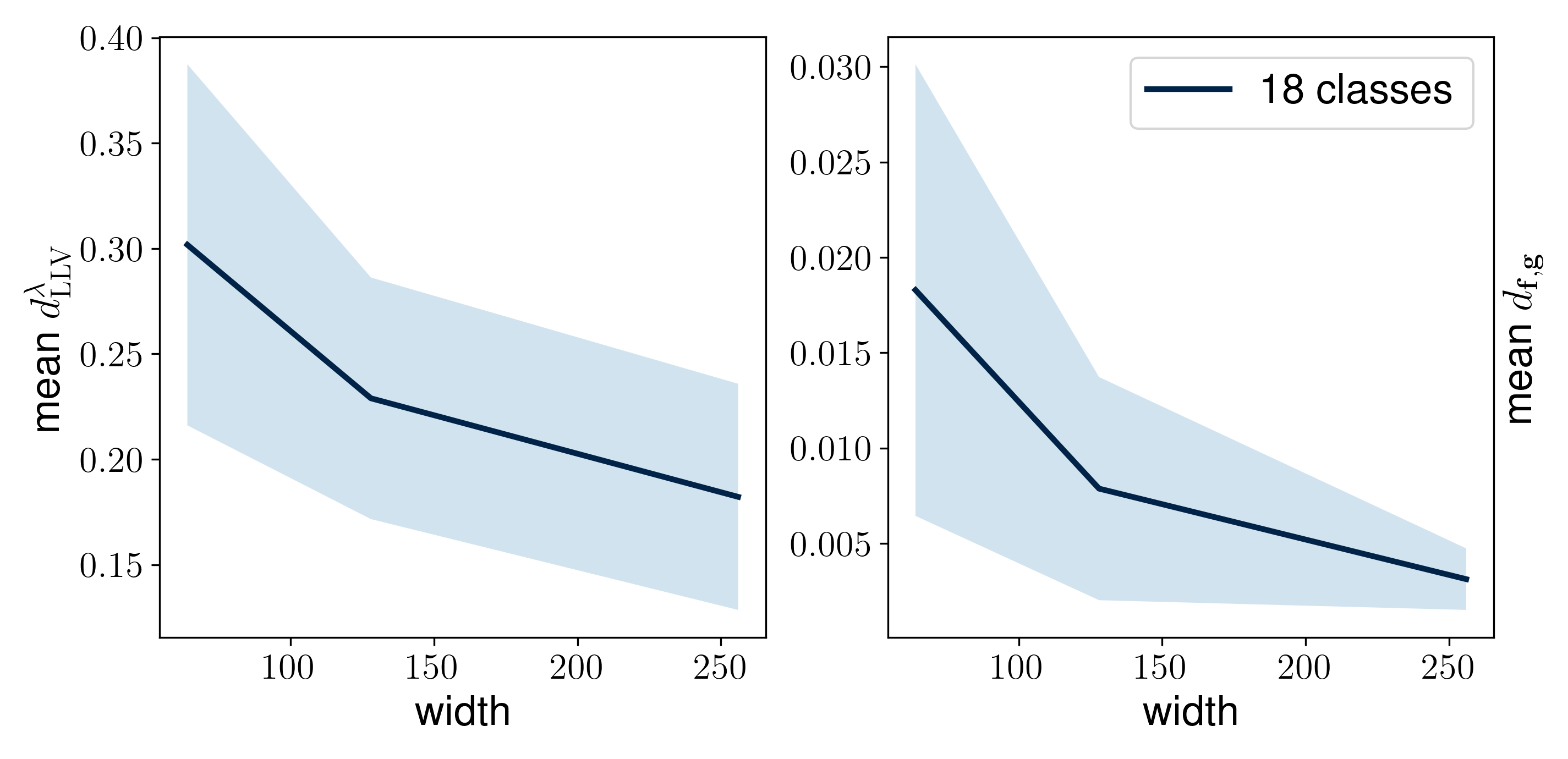}
    
    \caption{We display how mean $d_{\mathrm{LLV}}$ and max $d_{\mathrm{LLV}}$
 varies with increasing the neural network width for models trained to classify among 18 classes. (Left) We observe decreasing mean $d_{\mathrm{LLV}}$ as the network width grows. The shaded area represents the standard deviations evaluated from different random seeds retraining.
 (Right) A similar trend is also observed for max $d_{\mathrm{LLV}}$ when increasing the network width.}
    \label{fig:wider_models_18_d_LLV_d_SVD}     
\end{figure}

\section{Computing Resources}
\label{app:computing_resources}
Each model was trained using a single NVIDIA RTX A5000. 
For each number of classes (4, 6, 10, 18), training 20 seeds on the synthetic data took about 34 hours, summing to a total of 136 hours to train the models on synthetic data. 
For the models on CIFAR-10, training 10 seeds took $\sim$27 hours. %

The distances are calculated on a CPU-only machine: computing the distances for the models on synthetic data required less than 2 hours, whereas the evaluation on models in CIFAR-10 required around 4 hours. 
Evaluating the accuracy of models on synthetic data was also done on the CPU, taking less than 20 minutes in total.

\section{Assets}
\label{app:assets}

\textbf{CIFAR-10}.
We used the CIFAR-10 dataset as loaded with \texttt{torchvision} package\footnote{\url{https://docs.pytorch.org/vision/main/generated/torchvision.datasets.CIFAR10.html}}. The dataset from%
\citep{krizhevsky2009learning}
and contains $50,000$ images for train and validation, and $10,000$ images for testing. %

\textbf{Python Packages}.
All experiments are conducted with \texttt{Python} 3.11
and used \texttt{pytorch} 2.5.1. Other packages are reported in the repository at \href{https://github.com/bemigini/close-dist-rep-sim}{\nolinkurl{github.com/bemigini/close-dist-rep-sim}}.
\end{document}